\def\eqref#1{equation~\ref{#1}}
\def\1{\bm{1}}
\DeclareMathAlphabet{\mathsfit}{\encodingdefault}{\sfdefault}{m}{sl}
\SetMathAlphabet{\mathsfit}{bold}{\encodingdefault}{\sfdefault}{bx}{n}
\newcommand{\E}{\mathbb{E}}
\newcommand{\R}{\mathbb{R}}
\DeclareMathOperator*{\argmax}{arg\,max}
\DeclareMathOperator*{\argmin}{arg\,min}
\def\dtv{{d_{\text{TV}}}}
\newcommand\err[2]{\text{Err}_{#1}(#2)}
\newcommand\errd[1]{\text{Err}_{#1}}
\newcommand{\Indicator}{\mathds{1}}
\def\IR{{\textsf{IR}}}
\def\TPR{{\text{TPR}}}
\def\FPR{{\text{FPR}}}
\def\var{{\text{Var}}}
\def\cov{{\text{Cov}}}
\def\P{{\mathbb P}}
\def\R{{\mathbb R}}
\def\E{{\mathbb E}}
\def\R{{\mathbb{R}}}
\def\D{{\mathcal D}}
\def\H{{\mathcal H}}
\newcommand{\squishlist}{
\begin{list}{{{\small{$\bullet$}}}}
{\setlength{\itemsep}{3pt}      \setlength{\parsep}{1pt}
\setlength{\topsep}{1pt}       \setlength{\partopsep}{0pt}
\setlength{\leftmargin}{1em} \setlength{\labelwidth}{1em}
\setlength{\labelsep}{0.5em} } }
\newcommand{\squishend}{  \end{list}  }
\theoremstyle{plain}
\newtheorem{theorem}{Theorem}[section]
\newtheorem{proposition}[theorem]{Proposition}
\newtheorem{lemma}[theorem]{Lemma}
\newtheorem{corollary}[theorem]{Corollary}
\theoremstyle{definition}
\newtheorem{assumption}[theorem]{Assumption}
\theoremstyle{remark}
\newtheorem{setup}{Setup} 
\newtheorem{example}{Example}
\icmltitlerunning{Model Transferability with Responsive Decision Subjects}
\begin{document}

\doparttoc 
\faketableofcontents 

\twocolumn[
\icmltitle{Model Transferability with Responsive Decision Subjects}



\icmlsetsymbol{equal}{*}

\begin{icmlauthorlist}
\icmlauthor{Yatong Chen}{ucsc}
\icmlauthor{Zeyu Tang}{cmu}
\icmlauthor{Kun Zhang}{cmu,mbz}
\icmlauthor{Yang Liu}{ucsc,br}
\end{icmlauthorlist}

\icmlaffiliation{ucsc}{Department of Computer Science and Engineering, University of California, Santa Cruz, California, United States.}
\icmlaffiliation{br}{ByteDance Research}
\icmlaffiliation{cmu}{Department of Philosophy, Carnegie Mellon University, Pennsylvania, United States.}
\icmlaffiliation{mbz}{Mohamed bin Zayed University of Artificial Intelligence, Abu Dhabi, United Arab Emirates.}

\icmlcorrespondingauthor{Yang Liu}{yangliu@ucsc.edu}

\icmlkeywords{Machine Learning, ICML}

\vskip 0.3in
]


\printAffiliationsAndNotice{}  

\begin{abstract} 
    Given an algorithmic predictor that is accurate on some source population consisting of strategic human decision subjects, will it remain accurate if the population \emph{respond} to it?
    In our setting, an agent or a user corresponds to a sample $(X,Y)$ drawn from a distribution $\D$ and will face a model $h$ and its classification result $h(X)$. Agents can modify $X$ to adapt to $h$, which will incur a distribution shift on $(X,Y)$.  
    Our formulation is motivated by applications where the deployed machine learning models are subjected to human agents, and will ultimately face \emph{responsive} and \emph{interactive} data distributions. 
    We formalize the discussions of the transferability of a model by studying how the performance of the model trained on the available source distribution (data) would translate to the performance on its \emph{induced} domain. 
    We provide both upper bounds for the performance gap due to the induced domain shift, as well as lower bounds for the trade-offs that a classifier has to suffer on either the source training distribution or the induced target distribution. We provide further instantiated analysis for two popular domain adaptation settings, including \emph{covariate shift} and \emph{target shift}.
\end{abstract}


\section{Introduction}
\label{sec:introduction}
Decision-makers are increasingly
required to be transparent on their decision-making rules to offer the ``right to explanation'' \citep{Goodman2017explain, Selbst2018explain, ustun2019actionable}. Being transparent also invites potential adaptations from the population, leading to potential shifts. We are motivated by settings where the deployed machine learning models interact with human agents, and will ultimately face data distributions that reflect human agents' responses to the models. For instance, when a model is used to decide loan applications, candidates may adapt their features based on the model specification in order to maximize their chances of approval; thus the loan decision classifier observes a new shifted distribution caused by its own deployment (e.g., see Figure \ref{Fig:ExampleAdaptation} for a demonstration). Similar observations can be articulated for application in the insurance sector, e.g., insurance companies may develop policy such that customers' behaviors might adapt to lower premium \citep{haghtalab2020maximizing}, the education sector, e.g., teachers may want to design courses in a way that students are less incentivized to cheat \citep{kleinberg2020classifiers}, and so on.

\begin{figure}[htbp]
    \centering
    \scriptsize
    \setlength{\tabcolsep}{3pt}
    \renewcommand{\arraystretch}{1.025}
    \begin{tabular}{lc||ccc}
        \toprule
        \textsc{Feature} & \textsc{ Weight} & \textsc{Original Value} & & \textsc{Adapted Value} \\ 
        \toprule
        \text{Income} & 2 & \$ 6,000 & $\longrightarrow$ & \$ 6,000\\ \midrule
        \text{Education Level} & 3 & \text{College} & $\longrightarrow$ & \text{College} \\ 
        \midrule
        \text{Debt} & \textbf{-10} & \$40,000 & $\longrightarrow$ &  \textbf{\$20,000} \\
        \midrule
        \text{Savings} & \textbf{5} & \$20,000 & $\longrightarrow$ &  \textbf{\$0} \\
        \bottomrule
    \end{tabular}
    \caption{An example of an agent who originally has both savings and debt, observes that the classifier penalizes debt (weight -10) more than it rewards savings (weight +5), and concludes that their most efficient adaptation is to use their savings to pay down debt.}
    \label{Fig:ExampleAdaptation}
\end{figure}

In this paper, we provide a general framework for quantifying the transferability of a decision rule when facing responsive decision subjects. What we would like to achieve is some characterizations of the \emph{performance guarantee} of a classifier --- that is, given a model primarily trained on the source distribution $\D_S$, how good or bad will it perform on the distribution it induces $\D(h)$, which depends on the model $h$ itself. A key concept in our setting is the \emph{induced risk}, defined as the error a model incurs on the distribution induced by itself:
\begin{align}
    \text{Induced Risk}: ~~\errd{\D(h)}{(h)}:=\P_{\D(h)}(h(X)\neq Y)
\end{align}

Most relevant to the above formulation are the works of literature on \emph{strategic classification} \cite{hardt2016strategic}, and \emph{performative prediction} \citep{perdomo2020performative}. 
In strategic classification, agents are modeled as rational utility maximizers, and under a specific agent's response model, game theoretical solutions were proposed to model the interactions between the agents and the decision-maker. In performative prediction, a similar notion of risk called the \emph{performative prediction risk} is introduced to measure a given model’s performance on the distribution itself induces. Different from ours, one of their main focus is to find the optimal classifier that achieves minimum induced risk after a sequence of model deployments and observing the corresponding response datasets, which might be computationally expensive. 

In particular, our results are motivated by the following challenges in more general scenarios:

\squishlist
    \item \textbf{Modeling assumptions being restrictive}~ In many practical situations, it is often hard to accurately characterize the agents' utilities. Furthermore, agents might not be fully rational when they respond. All the uncertainties can lead to a far more complicated distribution change in $(X,Y)$, as compared to often-made assumptions that agents only change $X$ but not $Y$ \citep{hardt2016strategic, chen2020learning, dong2018strategic}.
    
    \item \textbf{Lack of access to response data}
    During training, machine learning practitioners may only have access to data from the source distribution, and even when they can anticipate changes in the population due to human agents’ responses, they cannot observe the newly shifted distribution until the model is actually deployed.
    
    \item \textbf{Retraining being costly}~ Even when samples from the induced data distribution are available, retraining the model from scratch may be impractical due to computational constraints, and will result in another round of agents' response at its deployment. 
\squishend

The above observations motivate us to focus on understanding the transferability of a model before diving into finding the optimal solutions that achieve the minimum induced risk -- the latter problem often requires more specific knowledge on the mapping between the model and its induced distribution, which might not be available during the training process. Another related research problem is to find models that will perform well on both the source and the induced distribution. This question might be solved using techniques from \emph{domain generalization} \cite{ zhou2021domain,sheth2022domain}. 

We add detailed discussions on how our work relates to and differs from these fields in related works (\Cref{sec:related-work}), as well as on how to use existing techniques to solve these two questions in \Cref{sec:comparison-IDA-DA}. and \Cref{sec:discussion-minimize-ida}. 
We leave the full discussions of these topics to future work.

\subsection{Our Contributions}
In this paper, we aim to provide answers to the following fundamental questions:
\squishlist
    \item \textbf{Source risk $\Rightarrow$ Induced risk}~ For a given model $h$, how different is $\err{\D(h)}{h}$, the error on the distribution induced by $h$, from $\err{\D_S}{h}:=\P_{\D_S}(h(X)\neq Y)$, the error on the source?
    \item \textbf{Induced risk $\Rightarrow$ Minimum induced risk}~ How much higher is $\err{\D(h)}{h}$, the error on the induced distribution, than $\min_{h'} ~\err{\D(h')}{h'}$, the minimum achievable induced error?
    \item \textbf{Induced risk of \emph{source optimal} $\Rightarrow$ Minimum induced risk}~Of particular interest, and as a special case of the above, how does $\err{\D(h^*_S)}{h^*_S}$, the induced error of the optimal model trained on the source distribution {$h^*_S:= \argmin_h ~\err{\D_S}{h}$}, compare to {$h^*_T:= \argmin_{h}~ \err{\D(h)}{h}$}?
    \item \textbf{Lower bound for learning tradeoffs}~What is the minimum error a model must incur on either the source distribution $\err{\D_S}{h}$ or its induced distribution $\err{\D(h)}{h}$?
\squishend

For the first three questions, we prove upper bounds on the additional error incurred when a model trained on a source distribution is transferred over to its induced domain. We also provide lower bounds for the trade-offs a classifier has to suffer on either the source training distribution or the induced target distribution. We then show how to specialize our results to two popular domain adaptation settings: \emph{covariate shift} \citep{shimodaira2000improving,zadrozny2004learning,sugiyama2007covariate,sugiyama2008direct,zhang2013domain} and \emph{target shift} \citep{lipton2018detecting,guo2020ltf,zhang2013domain}.  

\subsection{Related Work}
\label{sec:related-work}
Our work most closely relates to the fields of strategic classification, domain adaptation, and performative prediction. In particular, our work considers a setting similar to the studies of strategic classification \citep{hardt2016strategic,chen2020learning,Dong2018reveal,chen2020strategic,miller2020strategic}, which primarily focus on developing robust classifiers in the presence of strategic agents, rather than characterizing the transferability of a given model’s performance on the distribution itself induces. Our work also builds on efforts in domain adaptation \citep{jiang2008literature,ben-david2010domain,sugiyama2008direct,zhang2019bridging,kang2019contrastive,NEURIPS2020_3430095c}. The major difference between our setting and those from previous works is that the changes in distribution are not passively provided by the environment, but rather an active consequence of model deployment. We reference specific prior work in these two domains in \Cref{sec:appendix-related-work}, and here provide more detailed discussions on the existing work in performative prediction. 

\paragraph{Performative Prediction}
Performative prediction is a new type of supervised
learning problem in which the underlying data distribution shifts in response to the deployed model \citep{perdomo2020performative,Mendler2020Stochastic,brown2020performative,drusvyatskiy2020stochastic,izzo2021learn,li2022state,maheshiwar2021minmax}. In particular, \citet{perdomo2020performative} first propose the notion of the \emph{performative risk} defined as $\text{PR}(\theta):= \E_{z \sim \D(\theta)}[\ell(\theta;z)]$, where $\theta$ is the model parameter, and $\D(\theta)$ is the induced distribution as a result of the deployment of $\theta$. Similar to our definition of induced risk, performative risk also measures a given model's performance on the distribution itself induces. 

The major difference between our work and performative prediction is that we focus on different aspects of the induced domain adaptation problem. One of the primary focuses of performative prediction is to find the optimal model $\theta_{\textsf{OPT}}$ which achieves the minimum performative prediction risk, or performative stable model $\theta_{\textsf{ST}}$, which is optimal under its own induced distribution. In particular, one way to find a performative stable model $\theta_{\textsf{ST}}$ is to perform repeated retraining \cite{perdomo2020performative}. In order to get meaningful theoretical guarantees on any proposed algorithms, works in this field generally require particular assumptions on the mapping between the model parameter and its induced distribution (e.g., the smoothness of the mapping), or requires multiple rounds of deployments and observing the corresponding induced distributions, which can be costly in practice \cite{jagadeesan2022regret, Mendler2020Stochastic}. On the contrary, our work's primary focus is to study the \emph{transferability} of a particular model trained primarily on the source distribution and provide theoretical bounds on its performance on its induced distribution, which is useful for estimating the effect of a given classifier when repeated retraining is unavailable. As a result, our work does not assume the knowledge of the supervision/label information on the transferred domain. Also related are the recently developed lines of work on the multiplayer version of the performative prediction problem \cite{piliouras2022multi, narang2022multiplayer}, and the economic aspects of performative prediction \cite{hardt2022performative, mendleranticipating2022}.
The details for reproducing our experimental results can be found at {\small \texttt{\url{https://github.com/UCSC-REAL/Model_Transferability}}}.


\section{Notation and Formulation}

\emph{All proofs of our results can be found in the Appendix.}

Suppose we are given a parametric model $h \in \H$ primarily trained on the training data set $S:=\{x_i,y_i\}_{i=1}^N$, which is drawn from a \emph{source} distribution $\D_S$, where $x_i \in \R^d$ and $y_i \in \{-1,+1\}$. However, $h$ will then be deployed in a setting where the samples come from a \emph{test} or \emph{target} distribution $\D_T$ that can differ substantially from $\D_S$. 
Therefore, instead of finding a classifier that minimizes the prediction error on the source distribution $\err{\D_S}{h}:=\P_{\D_S}(h(X)\neq Y)$, ideally the decision maker would like to find $h^*$ that minimizes $\err{\D_T}{h}:=\P_{\D_T}(h(X)\neq Y)$. This is often referred to as the \emph{domain adaptation problem}, where typically, the transition from $\D_S$ to $\D_T$ is assumed to be independent of the model $h$ being deployed.

We consider a setting in which the distribution shift depends on $h$, or is thought of as being \emph{induced} by $h$. We will use $\D(h)$ to denote the \emph{induced domain} by $h$:
\[
    \D_S~~ \to~~ \text{\emph{encounters model $h$}}~~\to ~~\D(h)
\]
Strictly speaking, the induced distribution is a function of both $\D_S$ and $h$ and should be better denoted by $\D_S(h)$. To ease the notation, we will stick with $\D(h)$, but we shall keep in mind its dependency of $\D_S$. For now, we do not specify the dependency of $\D(h)$ as a function of $\D$ and $h$, but later in Section \ref{sec:cs} and \ref{sec:ls} we will further instantiate $\D(h)$ under specific domain adaptation settings. 

The challenge in the above setting is that when training $h$, the learner needs to carry the thoughts that $\D(h)$ should be the distribution it will be evaluated on and that the training cares about. Formally, we define the \emph{induced risk} of a classifier $h$ as the 0-1 error on the distribution $h$ induces:
\begin{align}
    \text{Induced risk}: ~~~~\err{\D(h)}{h}:=\P_{\D(h)}(h(X)\neq Y)
\end{align}
Denote by $h^*_{T} := \argmin_{h \in \H} ~\err{\D(h)}{h}$ the classifier with minimum induced risk. More generally, when the loss may not be the 0-1 loss, we define the \emph{induced $\ell$-risk} as
\[
  \text{Induced $\ell$-risk}: ~~~~ \err{\ell, \D(h)}{h}:= \E_{z \sim \D(h)}[\ell(h;z)]
\]
The induced risks will be the primary quantities we are interested in quantifying. The following additional notation will also help present our theoretical results in the following few sections: 
\squishlist
    \item Distributions of $Y$ on a distribution $\D$:  $\D_{Y}:=\P_{\D}(Y=y)$.\footnote{The ``:=" defines the RHS as the probability measure function for the LHS.}, and in particular $\D_Y(h):=\P_{\D(h)}(Y=y),~\D_{Y\vert S}:=\P_{\D_S}(Y=y)$
    \item Distribution of $h$ on a distribution $\D$: $\D_{h}:=\P_{\D}(h(X)=y)$, and in particular $\D_h(h):=\P_{\D(h)}(h(X)=y),~\D_{h\vert S}:=\P_{\D_S}(h(X)=y)$.
    \item Marginal distribution of $X$ for a distribution $\D$: $\D_{X}:= \P_{\D}(X=x)$, and in particular $\D_{X}(h):= \P_{\D(h)}(X=x),~\D_{X\vert S}:= \P_{\D_S}(X=x)$.\footnote{For continuous $X$, the probability measure shall be read as the density function.} 
    \item Total variation distance 
    \citep{ali1966general}: $
    \dtv(\D,\D'):= \text{sup}_{\mathcal O}|\P_{\D}(\mathcal O)-\P_{\D'}(\mathcal O)|.
    $
\squishend

\subsection{Example Induced Domain Adaptation Settings} 

\label{sec:example}

We provide two example models to demonstrate the use cases of the distribution shift models described in our paper. We provide more detailed descriptions of both settings and instantiate our bounds in \Cref{sec:strategic-classification} and \Cref{sec:replicator-dynamics}, respectively.

\paragraph{Strategic Response}
As mentioned before, one example of induced distribution shift is when human agents perform \emph{strategic response} to a decision rule. 
In particular, it is natural to assume that the mapping between feature vector $X$ and the qualification $Y$ before and after the human agents' best response satisfies \emph{covariate shift}: the feature distribution $\P(X)$ will change, but $\P(Y|X)$, the mapping between $Y$ and $X$, remain unchanged. Notice that this is different from the assumption made in the classic strategic classification setting \cite{hardt2016strategic}, where \emph{any} adaptations are considered malicious, which means any changes in the feature vector $X$ \emph{do not} change the underlying true qualification $Y$. In this example, we assume that changes in feature $X$ could potentially lead to changes in the true qualification $Y$ and that the mapping between $Y$ and $X$ remains the same before and after the adaptation. This is a common assumption made in a recent line of work on incentivizing improvement behaviors from human agents (see, e.g., \citealp{chen2020strategic,shavit2020causal}). We use \Cref{subfig:CS-DAG} (Up) as a demonstration of how distribution might shift for the strategic response setting. In Section \ref{sec:strategic-classification}, we will use the strategic classification setup to verify our obtained results.

\begin{figure}[!ht]
    \centering
    \includegraphics[width=0.35\textwidth]{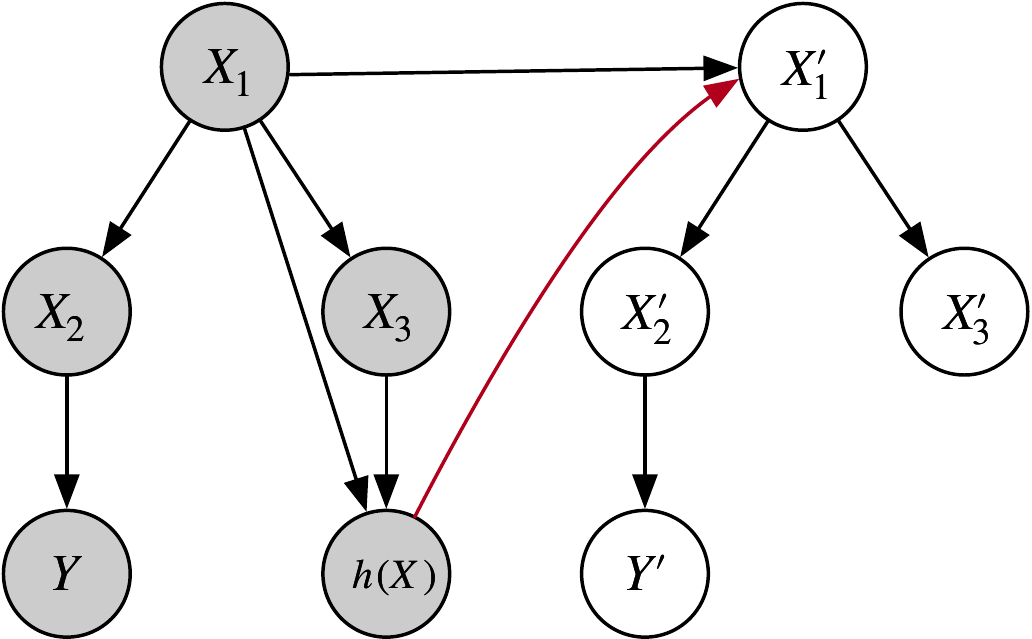}
    \hspace{0.2in}
    \includegraphics[width=0.35\textwidth]{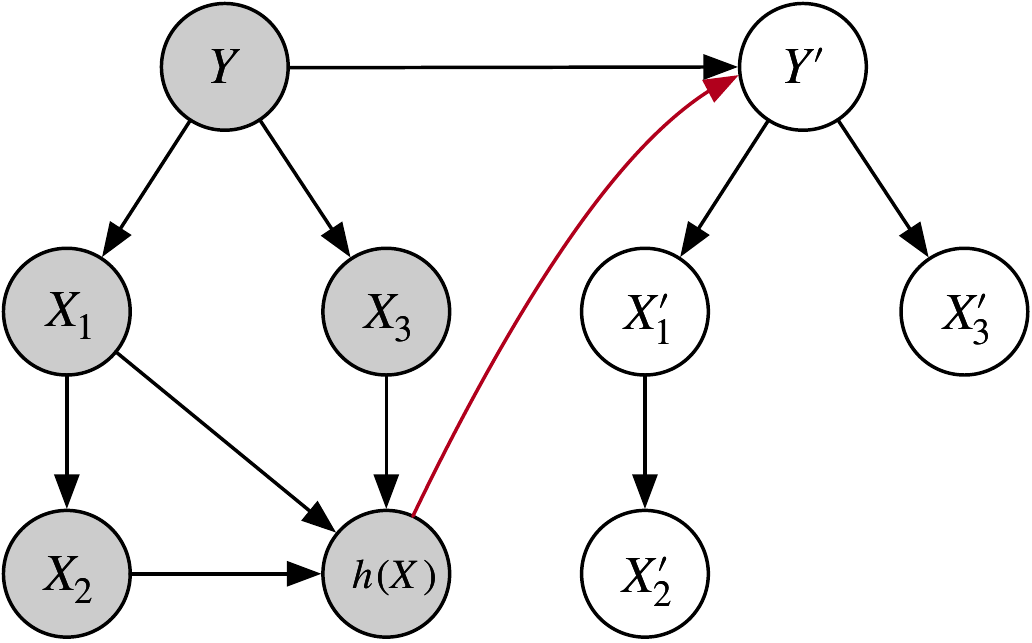}
    \caption{Example causal graph annotated to demonstrate covariate shift (\textbf{Up}) / target shift (\textbf{Down}) as a result of the deployment of $h$. Grey nodes indicate observable variables and transparent nodes are not observed at the training stage. Red arrow emphasizes $h$ induces changes in certain variables. }
    \label{subfig:CS-DAG}
\end{figure}

\paragraph{Replicator Dynamics}
Replicator dynamics is a commonly used model to study the evolution of an adopted ``strategy" in evolutionary game theory \citep{tuyls2006evolutionary,friedman2016evolutionary,Taylor1978evolutionary,raab2021unintended}. 
The core notion of it is the growth or decline of the population of each strategy depends on its ``fitness". Consider the label $Y = \{-1,+1\}$ as the strategy, and  the following behavioral response model to capture the induced target shift:
\[
    \frac{\P_{\D(h)}(Y=+1)}{\P_{\D_S}(Y=+1)} = \frac{\textbf{Fitness}(Y=+1)}{\E_{\D_S}[\textbf{Fitness}(Y)]}
\]

The intuition behind the above equation is that the change of the $Y=+1$ population depends on how predicting $Y=+1$ ``fits" a certain utility function. For instance, the ``fitness" can take the form of the prediction accuracy of $h$ for class $+1$, namely $\textbf{Fitness}(Y=+1):=\P_{\D_S}(h(X)=+1|Y=+1)$. Intuitively speaking, a higher ``fitness" describes more success of agents who adopted a certain strategy ($Y = -1$ or $Y=+1$). Therefore, agents will imitate or replicate their successful peers by adopting the same strategy, resulting in an increase in the population ($\P_{\D(h)}(Y)$). 

With the assumption that $\P(X|Y)$ stays unchanged, this instantiates one example of a specific induced \emph{target shift}. We will provide detailed conditions for target shift in \Cref{sec:ls}. We also use \Cref{subfig:CS-DAG} (Down) as a demonstration of how distribution might shift for the replicator dynamic setting. In \Cref{sec:replicator-dynamics}, we will use a detailed replicator dynamics model to further instantiate our results. 


\section{General Bounds}
\label{sec:general-bound}
In this section, we first provide upper and lower bounds for \emph{any} induced domain without specifying the particular type of distribution shift. In particular, we first provide upper bounds for the transfer error of any classifier $h$ (that is, the difference between $\err{\D(h)}{h}$ and $\err{\D_S}{h}$), as well as between $\err{\D(h)}{h}$ and the minimum induced risk $\err{\D(h^*_T)}{h^*_T}$. 
We then provide lower bounds for $\max\{\err{\D_S}{h}, \err{\D(h)}{h}\}$, that is, the minimum error a model $h$ must incur on either the source distribution $\D_S$ or the induced distribution $\D(h)$.

\subsection{Upper Bound}
We first investigate the upper bounds for the transfer errors. We begin by showing generic bounds and further instantiate the bound for specific domain adaptation settings in Section \ref{sec:cs} and \ref{sec:ls}. We begin by answering the following question: 

\begin{center}
\emph{
How does a model $h$ trained on its training data set fare on the induced distribution $\D(h)$?
}    
\end{center}

To that end, we define the minimum and $h$-dependent combined error of any two distributions $\D$ and $\D^\prime$ as:

\begin{align*}
    \lambda_{\D \rightarrow \D'} := \min_{h' \in \H} \err{\D'}{h'} + \err{\D}{h'}\\
    \Lambda_{\D \rightarrow \D'}(h) := \max_{h' \in \H} \err{\D'}{h} + \err{\D}{h}    
\end{align*}
and their corresponding $\H$-divergence as
$
    d_{\H \times \H}(\D, \D') = 2\sup_{h,h' \in \H}\left|\P_{\D}(h(X) \neq h'(X)) - \P_{\D'}(h(X) \neq h'(X))\right|.
$
The $\H$-divergence is a celebrated measure proposed in the domain adaptation literature \citep{ben-david2010domain} which will be useful for bounding the difference in errors of any two classifiers. Following the classical arguments from \citet{ben-david2010domain}, we can easily prove the following:

\begin{theorem}[\textbf{Source risk $\Rightarrow$ Induced risk}]
    \label{thm:transf:sh}
    The difference between  $\err{\D(h)}{h}$ and $\err{\D_S}{h}$ is upper bounded by: 
    $\err{\D(h)}{h} \leq \err{\D_S}{h} +   \lambda_{\D_S\rightarrow \D(h)} +\frac{1}{2}d_{\H \times \H}(\D_S, \D(h)). $
\end{theorem}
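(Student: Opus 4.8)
The plan is to reduce this to the classical domain-adaptation bound of \citet{ben-david2010domain}, the key observation being that although $\D(h)$ is induced by $h$, once $h$ is held fixed the induced distribution $\D(h)$ is simply \emph{a} fixed target distribution; there is no circularity, since in this statement we never optimize over $h$. Thus I would apply the generic three-distance argument with source $\D_S$ and target $\D(h)$.

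First I introduce the ideal joint hypothesis $\hat h := \argmin_{h' \in \H} \err{\D(h)}{h'} + \err{\D_S}{h'}$, so that by definition $\err{\D(h)}{\hat h} + \err{\D_S}{\hat h} = \lambda_{\D_S\rightarrow \D(h)}$. Then I record the elementary triangle inequality for the $0$-$1$ risk: for any distribution $\D$ and any $h, h'$, the inclusion $\{h(X) \neq Y\} \subseteq \{h'(X) \neq Y\} \cup \{h(X) \neq h'(X)\}$ together with a union bound gives $\err{\D}{h} \le \err{\D}{h'} + \P_{\D}(h(X) \neq h'(X))$, and symmetrically $\P_{\D}(h(X) \neq h'(X)) \le \err{\D}{h} + \err{\D}{h'}$. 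Both hold verbatim for probabilistic labels $Y$, which is the one point worth checking explicitly.

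Applying the first inequality on $\D(h)$ with $h' = \hat h$ gives $\err{\D(h)}{h} \le \err{\D(h)}{\hat h} + \P_{\D(h)}(h(X) \neq \hat h(X))$. The disagreement term is then transported from $\D(h)$ back to $\D_S$ using the definition of the $\H\times\H$-divergence: since $h, \hat h \in \H$, the difference of the two disagreement probabilities is at most $\frac12 d_{\H \times \H}(\D_S, \D(h))$, so $\P_{\D(h)}(h(X) \neq \hat h(X)) \le \P_{\D_S}(h(X) \neq \hat h(X)) + \frac12 d_{\H \times \H}(\D_S, \D(h))$. Finally I bound $\P_{\D_S}(h(X) \neq \hat h(X)) \le \err{\D_S}{h} + \err{\D_S}{\hat h}$ by the second triangle inequality on $\D_S$. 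Chaining these and grouping $\err{\D(h)}{\hat h} + \err{\D_S}{\hat h}$ into $\lambda_{\D_S\rightarrow \D(h)}$ delivers the claimed bound.

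The derivation is almost entirely routine; the only genuine subtlety — the ``obstacle'' such as it is — is conceptual rather than technical: one must be satisfied that the self-referential definition of the target does not break the argument. It does not, because every step fixes $h$ first and treats $\D(h)$ as an ordinary distribution, and none of the three inequalities above ever exploits how $\D(h)$ was generated.
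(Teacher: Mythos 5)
Your proof is correct and takes essentially the same route as the paper's: you introduce the same ideal joint hypothesis $\hat h = \argmin_{h' \in \H} \err{\D(h)}{h'} + \err{\D_S}{h'}$, apply the $0$-$1$ triangle inequality on $\D(h)$, transport the disagreement term $\P_{\D(h)}(h(X) \neq \hat h(X))$ back to $\D_S$ via the $\frac{1}{2}d_{\H \times \H}(\D_S, \D(h))$ bound, and close with a second triangle inequality on $\D_S$, exactly as in the paper's Appendix proof built on Lemmas \ref{lemma:h-divergence} and \ref{lemma:triangle-inequality-classification-error}. Your explicit observation that fixing $h$ first removes any circularity from the self-referential target $\D(h)$ is a point the paper leaves implicit, but it changes nothing in the argument.
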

The transferability of a model $h$ between $\err{\D(h)}{h}$ and $\err{\D_S}{h}$ looks precisely the same as in the classical domain adaptation setting \citep{ben-david2010domain}. 

An arguably more interesting quantity in our setting to understand is the difference between the induced error of any given model $h$ and the error induced by the optimal model $h_T^*$:   
$\err{\D(h)}{h} - \err{\D(h^*_T)}{h^*_T}$. 
We get the following bound, which differs from the one in \Cref{thm:transf:sh}:

\begin{theorem}[\textbf{Induced risk $\Rightarrow$ Minimum induced risk}]
    \label{thm:transf:sh2}
    The difference between  $\err{\D(h)}{h}$ and $\err{\D(h^*_T)}{h^*_T}$ is upper bounded by:
    $\err{\D(h)}{h} - \err{\D(h^*_T)}{h^*_T} \leq \frac{\lambda_{\D(h) \rightarrow \D(h^*_T)}+\Lambda_{\D(h) \rightarrow \D(h^*_T)}(h)}{2}+ \frac{1}{2} \cdot d_{\H \times \H}(\D(h^*_T), \D(h)).$
\end{theorem}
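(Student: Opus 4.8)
The plan is to run the same Ben-David--style triangle-inequality argument that underlies \Cref{thm:transf:sh}, but adapted to the fact that here the comparison model $h^*_T$ is evaluated on a \emph{different} induced domain $\D(h^*_T)$ than the domain $\D(h)$ on which $h$ is evaluated. Throughout I would use two elementary facts: the $0$-$1$ disagreement $\P_{\D}(f(X)\neq f'(X))$ obeys the triangle inequality $\P_{\D}(f\neq g)\le \P_{\D}(f\neq f')+\P_{\D}(f'\neq g)$ for any $f,f',g$ (and in particular with the true label $Y$ substituted for one argument), and the $\H\times\H$-divergence yields $|\P_{\D}(f\neq f')-\P_{\D'}(f\neq f')|\le \tfrac12\, d_{\H\times\H}(\D,\D')$ for every $f,f'\in\H$.

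The first and main step is to establish a one-sided transfer inequality for the \emph{fixed} model $h$ across its two induced domains, namely
\[
  \err{\D(h)}{h}\;\le\;\err{\D(h^*_T)}{h}+\lambda_{\D(h)\rightarrow\D(h^*_T)}+\tfrac12\, d_{\H\times\H}(\D(h^*_T),\D(h)).
\]
To prove this I introduce the combined-error minimizer $h^{\dagger}:=\argmin_{h'\in\H}\big(\err{\D(h)}{h'}+\err{\D(h^*_T)}{h'}\big)$, which realizes $\lambda_{\D(h)\rightarrow\D(h^*_T)}$, and then chain three bounds exactly as in \Cref{thm:transf:sh}: (i) a triangle inequality on $\D(h)$ to pass from $h$ to the pivot $h^{\dagger}$, giving $\err{\D(h)}{h}\le \err{\D(h)}{h^{\dagger}}+\P_{\D(h)}(h\neq h^{\dagger})$; (ii) the $\H\times\H$-divergence to transport the disagreement term from $\D(h)$ to $\D(h^*_T)$; and (iii) a second triangle inequality on $\D(h^*_T)$ bounding $\P_{\D(h^*_T)}(h\neq h^{\dagger})$ by $\err{\D(h^*_T)}{h}+\err{\D(h^*_T)}{h^{\dagger}}$. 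Collecting terms, the two $h^{\dagger}$ contributions combine into $\lambda_{\D(h)\rightarrow\D(h^*_T)}$ and the divergence appears with coefficient $\tfrac12$.

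To finish, I convert this one-sided bound into the symmetric form of the statement and discard the comparison term. Averaging the displayed inequality with the trivial identity $\err{\D(h)}{h}=\err{\D(h)}{h}$ — that is, bounding only ``half'' of the left-hand side through the transfer inequality while keeping the other half intact — produces $\err{\D(h)}{h}\le \tfrac12\big(\lambda_{\D(h)\rightarrow\D(h^*_T)}+\Lambda_{\D(h)\rightarrow\D(h^*_T)}(h)\big)+\tfrac12\, d_{\H\times\H}(\D(h^*_T),\D(h))$, because $\err{\D(h^*_T)}{h}+\err{\D(h)}{h}$ is precisely the model-specific combined error $\Lambda_{\D(h)\rightarrow\D(h^*_T)}(h)$ and the divergence is carried through with coefficient at most $\tfrac12$. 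Finally, since $h^*_T$ minimizes the induced risk we have $\err{\D(h^*_T)}{h^*_T}\ge 0$, so subtracting it from the left-hand side can only strengthen the inequality, which gives the claimed bound.

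The conceptual obstacle, and the reason the bound differs from \Cref{thm:transf:sh}, is that both the classifier and the distribution change between the two terms: the discrepancy between the true labels on the two induced domains $\D(h)$ and $\D(h^*_T)$ cannot be absorbed into a single optimal joint-error term as in ordinary domain adaptation, so the argument must split it between the optimal combined error $\lambda$ and the model-specific combined error $\Lambda(h)$, which is exactly what the symmetrization step accomplishes. A second subtlety is that $h^*_T$ is defined self-referentially as the minimizer of the \emph{induced} risk rather than as the optimal predictor on a fixed target distribution; accordingly the proof only invokes the nonnegativity (equivalently, the minimality) of its induced risk and never treats $h^*_T$ as risk-optimal on $\D(h^*_T)$ itself.
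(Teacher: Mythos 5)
Your proof is correct, and it takes a genuinely different route from the paper's. The paper pivots through $h^*_T$: it first applies \Cref{thm:transf:sh} to the \emph{target-optimal} model, obtaining $\err{\D(h)}{h^*_T} \leq \err{\D(h^*_T)}{h^*_T} + \lambda_{\D(h)\rightarrow\D(h^*_T)} + \tfrac{1}{2}d_{\H\times\H}(\D(h^*_T),\D(h))$, then relates $h$ to $h^*_T$ via the disagreement triangle inequality and a second application of the divergence, picking up $\err{\D(h^*_T)}{h}$ plus another copy of both $\err{\D(h^*_T)}{h^*_T}$ and $\tfrac{1}{2}d_{\H\times\H}$; after the same symmetrization step you use (adding $\err{\D(h)}{h}$ to both sides), the two copies of $\err{\D(h^*_T)}{h^*_T}$ cancel exactly against the subtracted term, yielding the stated coefficient $\tfrac{1}{2}$ on the divergence. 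You instead apply the transfer argument directly to the \emph{fixed} model $h$ across $\D(h^*_T)$ and $\D(h)$, pivoting only through the joint-error minimizer $h^\dagger$, and never compare $h$ to $h^*_T$'s predictions at all. This buys you a strictly stronger intermediate result: your derivation gives $\err{\D(h)}{h} \leq \tfrac{1}{2}\bigl(\lambda_{\D(h)\rightarrow\D(h^*_T)}+\Lambda_{\D(h)\rightarrow\D(h^*_T)}(h)\bigr) + \tfrac{1}{4}d_{\H\times\H}(\D(h^*_T),\D(h))$, i.e., the right-hand side bounds the \emph{raw} induced risk with divergence coefficient $\tfrac{1}{4}$, and the subtracted minimum induced risk is discarded for free. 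This makes explicit something the paper's cancellation obscures: the stated bound never exploits the optimality of $h^*_T$ (any comparison classifier would do), and it cannot be tight when $\err{\D(h^*_T)}{h^*_T}$ is large. Two small imprecisions in your write-up, neither affecting validity: nonnegativity of $\err{\D(h^*_T)}{h^*_T}$ follows from its being a probability, not from minimality (the two are not ``equivalent'' as your closing remark suggests); and your claim that the label discrepancy between the two induced domains ``cannot be absorbed into a single optimal joint-error term'' is contradicted by your own first step, where it \emph{is} absorbed into the single term $\lambda_{\D(h)\rightarrow\D(h^*_T)}$ --- in both proofs the split between $\lambda$ and $\Lambda(h)$ arises purely from the algebraic symmetrization, not from any intrinsic obstruction.
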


The above theorem informs us that the induced transfer error is generally bounded by the ``average" achievable error on both distributions $\D(h)$ and $\D(h^*_T)$, as well as the $\H$ divergence between the two distributions. 

The major benefit of the results in \Cref{thm:transf:sh2} is that it provides the decision maker a way to estimate the minimum induced risk $\err{\D(h^*_T)}{h^*_T}$ even when she only has access to the induced risk of some available classifier $h$, as long as she can characterize the statistical difference between the two induced distribution. The latter, however, might not seem to be a trivial task itself. Later in \Cref{sec:how-to-use-our-bound}, we briefly discuss how our bounds can still be useful even when we do not have the exact characterizations of this quantity. 

\subsection{Lower Bound} 
\label{sec:lower-bound}
Now we provide a lower bound on the induced transfer error. We particularly want to show that at least one of the two errors $\err{\D_S}{h}$, and $\err{\D(h)}{h}$, must be lower-bounded by a certain quantity.

\begin{theorem}[\textbf{Lower bound for learning tradeoffs} ]
    \label{thm:lb}
    Any model $h$ must incur the following error on either the source or induced distribution:
    $\max\{\err{\D_S}{h}, \err{\D(h)}{h}\} \geq \frac{ \dtv(\D_{Y\vert S},\D_Y(h)) - \dtv(\D_{h\vert S}, \D_h(h))}{2}. $  
\end{theorem}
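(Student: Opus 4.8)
The plan is to reduce the entire statement to the triangle inequality for total variation distance, after making the elementary observation that a classifier's error controls how close the law of its prediction $h(X)$ is to the law of the true label $Y$. The four objects appearing in the bound, namely $\D_{Y\vert S}$, $\D_Y(h)$, $\D_{h\vert S}$, and $\D_h(h)$, are all distributions on the two-point set $\{-1,+1\}$, so every pairwise $\dtv$ below is well defined.

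First I would record the coupling bound: for any two random variables $U,V$ defined on a common probability space and valued in the same space, $\dtv(\mathrm{law}(U),\mathrm{law}(V)) \leq \P(U\neq V)$. This is immediate since for every event $\mathcal O$ one has $|\P(U\in\mathcal O)-\P(V\in\mathcal O)| \leq \P(U\neq V)$, and taking the supremum over $\mathcal O$ in the definition of $\dtv$ gives the claim. I would then apply this bound twice, treating $h(X)$ and $Y$ as the coupled pair, once under each distribution. Under the source this yields $\dtv(\D_{h\vert S},\D_{Y\vert S}) \leq \P_{\D_S}(h(X)\neq Y) = \err{\D_S}{h}$, and under the induced distribution it yields $\dtv(\D_h(h),\D_Y(h)) \leq \P_{\D(h)}(h(X)\neq Y) = \err{\D(h)}{h}$.

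Next I would chain the four marginals through the triangle inequality for $\dtv$, inserting the two prediction distributions between the two label distributions:
\[
    \dtv(\D_{Y\vert S},\D_Y(h)) \leq \dtv(\D_{Y\vert S},\D_{h\vert S}) + \dtv(\D_{h\vert S},\D_h(h)) + \dtv(\D_h(h),\D_Y(h)).
\]
Substituting the two error bounds for the first and third terms on the right and rearranging isolates the sum of the errors,
\[
    \err{\D_S}{h} + \err{\D(h)}{h} \geq \dtv(\D_{Y\vert S},\D_Y(h)) - \dtv(\D_{h\vert S},\D_h(h)),
\]
and since the sum of two nonnegative numbers is at most twice their maximum, dividing by two gives the claimed lower bound on $\max\{\err{\D_S}{h},\err{\D(h)}{h}\}$.

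The argument is entirely routine and I do not anticipate a substantial obstacle; the only point requiring care is the ordering of the marginals in the triangle-inequality chain, so that the two ``error'' gaps land at the two ends and the prediction-distribution shift $\dtv(\D_{h\vert S},\D_h(h))$ sits in the middle, entering with the correct (negative) sign after rearrangement. The resulting interpretation is clean: the label distribution cannot move from $\D_{Y\vert S}$ to $\D_Y(h)$ by more than the prediction distribution moves plus the errors absorbed at each end, so whenever the induced shift in $Y$ outpaces the shift in $h(X)$, at least one of the two errors must be large.
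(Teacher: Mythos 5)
Your proposal is correct and follows essentially the same route as the paper's proof: the identical triangle-inequality chain $\dtv(\D_{Y\vert S},\D_Y(h)) \leq \dtv(\D_{Y\vert S},\D_{h\vert S}) + \dtv(\D_{h\vert S},\D_h(h)) + \dtv(\D_h(h),\D_Y(h))$, with the two end terms bounded by $\err{\D_S}{h}$ and $\err{\D(h)}{h}$ and the final step passing from the sum to the max. The only difference is cosmetic: you get the end bounds from the general coupling inequality $\dtv(\mathrm{law}(U),\mathrm{law}(V)) \leq \P(U \neq V)$, whereas the paper derives the same bounds by a direct computation specific to binary $\{-1,+1\}$ labels via $\tfrac{1}{2}\E\left[|Y-h(X)|\right]$; your version is marginally more general but the argument is otherwise the same.
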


The proof leverages the triangle inequality of $\dtv$. This bound is dependent on $h$; however, by the data processing inequality of $\dtv$ (and $f$-divergence functions in general) \citep{liese2006divergences}, we have
$\dtv(\D_{h\vert S}, \D_h(h)) \leq  \dtv(\D_{X\vert S}, \D_{X}(h)).$ 
Applying this to Theorem \ref{thm:lb} yields:
\begin{corollary}
    \label{corr:general-lb}
    For any model $h$,
    \begin{align*}
        &\max\{\err{\D_S}{h}, \err{\D(h)}{h} \} \\
    \geq &\frac{\dtv(\D_{Y\vert S},\D_Y(h)) - \dtv(\D_{X\vert S}, \D_X(h))}{2} .    
    \end{align*}
\end{corollary}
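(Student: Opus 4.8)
The plan is to obtain Corollary~\ref{corr:general-lb} as an immediate consequence of Theorem~\ref{thm:lb}, whose bound I am free to assume, by replacing the subtracted term $\dtv(\D_{h\vert S},\D_h(h))$ with the larger quantity $\dtv(\D_{X\vert S},\D_X(h))$ through the data processing inequality. The only genuinely new ingredient is this one comparison, together with careful bookkeeping of the direction of the inequality.

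First I would recall where the term $\dtv(\D_{h\vert S},\D_h(h))$ comes from, to see what the data processing inequality must act on. On any fixed distribution $\D$ over the binary label space, the marginal of $Y$ and the marginal of the prediction $h(X)$ cannot be farther apart than the probability that the two disagree: since $\dtv(\D_Y,\D_h)=|\P_{\D}(Y=+1)-\P_{\D}(h(X)=+1)|$ and this difference is an expectation of $\1\{Y=+1\}-\1\{h(X)=+1\}$, which vanishes whenever $Y=h(X)$, we get $\dtv(\D_Y,\D_h)\le \P_{\D}(Y\neq h(X))=\err{\D}{h}$. Instantiating this at $\D_S$ and at $\D(h)$ and using $\D_{h\vert S}$ and $\D_h(h)$ as the two intermediate points in the triangle inequality for $\dtv$ is exactly what produces the lower bound of Theorem~\ref{thm:lb}, in which the prediction marginals appear.

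Second, the key observation for the corollary is that $h$ is a fixed, deterministic measurable map $x\mapsto h(x)$, so $\D_{h\vert S}$ and $\D_h(h)$ are precisely the pushforwards of the feature marginals $\D_{X\vert S}$ and $\D_X(h)$ under this one common map. The data processing inequality for $f$-divergences (and hence for $\dtv$) states that sending two distributions through the same channel cannot increase their divergence; applying it with the channel $x\mapsto h(x)$ yields $\dtv(\D_{h\vert S},\D_h(h))\le \dtv(\D_{X\vert S},\D_X(h))$.

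Finally I would substitute into Theorem~\ref{thm:lb}. Because the prediction-marginal distance enters with a minus sign, upper bounding it by the feature-marginal distance replaces the right-hand side by something no larger, so the $\ge$ survives: $\max\{\err{\D_S}{h},\err{\D(h)}{h}\}\ge \frac{\dtv(\D_{Y\vert S},\D_Y(h))-\dtv(\D_{h\vert S},\D_h(h))}{2}\ge \frac{\dtv(\D_{Y\vert S},\D_Y(h))-\dtv(\D_{X\vert S},\D_X(h))}{2}$, which is the claim. The main (admittedly minor) obstacle is exactly this sign/direction bookkeeping: it is tempting to invoke data processing in the wrong direction, and one must confirm both that $\dtv(\D_{X\vert S},\D_X(h))$ is the larger quantity and that it enters subtracted, so that weakening it produces a valid, looser lower bound. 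A secondary point to verify is that the inequality is applied with a single common channel $h$ to both marginals, which holds here precisely because $h$ is deterministic and identical across $\D_S$ and $\D(h)$.
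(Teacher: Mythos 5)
Your proposal is correct and matches the paper's own derivation exactly: the paper obtains Corollary~\ref{corr:general-lb} from Theorem~\ref{thm:lb} by the same data processing inequality for $\dtv$, noting that $\D_{h\vert S}$ and $\D_h(h)$ are pushforwards of $\D_{X\vert S}$ and $\D_X(h)$ under the common deterministic map $h$, so that $\dtv(\D_{h\vert S}, \D_h(h)) \leq \dtv(\D_{X\vert S}, \D_X(h))$, and the subtracted term can only loosen the lower bound. Your additional bookkeeping of the inequality's direction and your recap of where the prediction-marginal term originates in the proof of Theorem~\ref{thm:lb} are both sound but introduce nothing beyond the paper's one-line argument.
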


The benefit of \Cref{corr:general-lb} is that the bound does not contain any quantities that are functions of the induced distribution; as a result, for any classifier $h$, we can estimate the learning tradeoffs between its source risk and its induced risk using values that are computable without actually deploying the classifier at the first place.  

\subsection{How to Use Our Bounds}
\label{sec:how-to-use-our-bound}
The upper and lower bounds we derived in the previous sections (\Cref{thm:transf:sh2} and \Cref{thm:lb}) depend on the following two quantities either explicitly or implicitly: 
1) the distribution $\D(h)$ induced by the deployment of the model $h$ in question, and 2) the optimal target classifier $h^*_T$ as well as the distribution $\D(h^*_T)$ it induces.  
The bounds may therefore seem to be of only theoretical interest since in reality we generally cannot compute $\D(h)$ without actual deployment, let alone compute $h_T^*$. Thus in general it is unclear how to compute the value of these bounds. 
 
Nevertheless, our bounds can still be useful and informative in the following ways:

\paragraph{General modeling framework with flexible hypothetical shifting models} 
The bounds can be evaluated if the decision maker has a particular shift model in mind, which specifies how the population would adapt to a  model. A common special case is when the decision maker posits an individual-level agent response model (e.g. the strategic agent \citep{hardt2016strategic} - we demonstrate how to evaluate in \Cref{sec:strategic-classification}). In these cases, the $\H$-divergence can be consistently estimated from finite samples of the population \citep{wang2005divergence}, allowing the decision maker to estimate the performance gap of a given $h$ without deploying it. The general bounds provided can thus be viewed as a framework by which specialized, computationally tractable bounds can be derived.

\paragraph{Estimate the optimal target classifier $h_T^*$ from a set of imperfect models}
Secondly, when the decision maker has access to a set of imperfect models $\tilde{h}_1, \tilde{h}_2\cdots \tilde{h}_t \in H^T$ that will predict a range of possible shifted distribution $\D(\tilde{h}_1), \cdots \D(\tilde{h}_t) \in \D^T$ and a range of possibly optimal target distribution $h_T \in \H^T$, the bounds on $h_T^*$ can be further instantiated by calculating the worst case in this predicted set :\footnote{UpperBound and LowerBound are the RHS expressions in \Cref{thm:transf:sh2} and \Cref{thm:lb}.}
    \begin{align*}
    &\err{\D(h)}{h} - \err{\D(h^*_T)}{h^*_T} \\
    &~~~~~~~~~~~~~~~~~~~~~~~\lesssim  \max_{\D' \in \D^T, h' \in \H^T} \text{UpperBound}(\D', h'), \\
    &\max\{\err{\D_S}{h}, \err{\D(h^*_T)}{h^*_T} \}  \\ 
    &~~~~~~~~~~~~~~~~~~~~~~~\gtrsim  \min_{\D' \in \D^T, h' \in \H^T} \text{LowerBound}(\D', h'). 
    \end{align*}

We provide discussions on the tightness of our bounds in \Cref{sec:discussion-tightness}.


\section{Covariate Shift}
\label{sec:cs}
In this section, we focus on a particular distribution shift model known as \emph{covariate shift}, in which the distribution of features changes, but the distribution of labels conditioned on features remains the same:
\begin{align}
    \P_{\D(h)}(Y=y|X=x) &= \P_{\D_S}(Y=y|X=x)\\
    \P_{\D(h)}(X=x) &\neq \P_{\D_S}(X=x)
\end{align}

Thus with covariate shift, we have
\begin{align*}
    &\P_{\D(h)}(X=x,Y=y)\\
    =&\P_{\D(h)}(Y=y|X=x) \cdot  \P_{\D(h)}(X=x)\\
    =& \P_{\D_S}(Y=y|X=x) \cdot  \P_{\D(h)}(X=x) 
\end{align*}

Let $\omega_x(h) := \frac{\P_{\D(h)}(X=x)}{\P_{\D_S}(X=x)}$ be the \emph{importance weight} at $x$, which characterizes the amount of adaptation induced by $h$ at instance $x$. Then for any loss function $\ell$ we have:

\begin{proposition}[Expected Loss on $\D(h)$ Under Covariate Shift]
    \label{prop:reweight-loss-induced-distribution}
    $
    \E_{\D(h)}[\ell(h;X,Y)] = \E_{\D_S}[\omega_x(h) \cdot \ell(h;x,y)].
    $
\end{proposition}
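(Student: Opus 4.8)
The plan is to carry out a direct change-of-measure (importance reweighting) computation, leveraging the covariate-shift factorization already displayed just above the proposition. First I would expand the target expectation against the induced joint law, writing $\E_{\D(h)}[\ell(h;X,Y)] = \sum_{x,y} \ell(h;x,y)\, \P_{\D(h)}(X=x,Y=y)$ (with the sum read as a Lebesgue integral and $\P$ as a density in the continuous case, per the footnote defining the marginals). Then I would invoke the factorization derived in the display preceding the proposition, namely $\P_{\D(h)}(X=x,Y=y) = \P_{\D_S}(Y=y|X=x)\cdot \P_{\D(h)}(X=x)$, which is exactly where the covariate-shift assumption $\P_{\D(h)}(Y=y|X=x)=\P_{\D_S}(Y=y|X=x)$ enters.

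The key algebraic step is then to multiply and divide by the source marginal $\P_{\D_S}(X=x)$, rewriting $\P_{\D(h)}(X=x) = \omega_x(h)\cdot \P_{\D_S}(X=x)$ directly by the definition of the importance weight $\omega_x(h)$. Recombining $\P_{\D_S}(Y=y|X=x)\cdot \P_{\D_S}(X=x) = \P_{\D_S}(X=x,Y=y)$ restores the source joint law, so the summand becomes $\omega_x(h)\,\ell(h;x,y)$ weighted by the source joint probability. Reading this back as an expectation gives precisely $\E_{\D_S}[\omega_x(h)\cdot\ell(h;x,y)]$, which completes the identity.

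The only point requiring care — and the sole, mild, ``obstacle'' — is well-definedness of the importance weight: the division step needs $\P_{\D_S}(X=x)>0$ wherever $\P_{\D(h)}(X=x)>0$, i.e. absolute continuity of the induced feature marginal with respect to the source feature marginal. Under covariate shift this is the natural standing assumption (the induced distribution merely reweights the support of the source), so $\omega_x(h)$ is finite $\D_S$-almost everywhere and the manipulation is valid. Beyond this, the statement is purely a restatement of how the expected loss transforms under a change of the feature marginal, and no inequality or auxiliary result from the earlier development is needed.
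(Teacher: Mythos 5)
Your proposal is correct and follows essentially the same route as the paper's proof: expanding $\E_{\D(h)}[\ell(h;X,Y)]$ against the induced joint law, applying the covariate-shift factorization $\P_{\D(h)}(X=x,Y=y)=\P_{\D_S}(Y=y|X=x)\cdot\P_{\D(h)}(X=x)$, and multiplying and dividing by $\P_{\D_S}(X=x)$ to surface $\omega_x(h)$ and recover the source expectation. Your added remark on absolute continuity of $\D_X(h)$ with respect to $\D_{X\vert S}$ is a sensible precaution that the paper leaves implicit, but it does not change the argument.
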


The above derivation is a classic trick and offers the basis for performing importance reweighting when learning under covariate shift \citep{sugiyama2008direct}. The particular form informs us that $\omega_x(h)$ controls the generation of $\D(h)$ and encodes its dependency on both $\D_S$ and $h$, and is critical for deriving our results below. 

\subsection{Upper Bound}

We now derive an upper bound for transferability under covariate shift. We will particularly focus on the optimal model trained on the source data $\D_S$, which we denote as  $h^*_S:= \argmin_{h \in \H} \errd{S}{(h)}$. Recall that the classifier with minimum induced risk is denoted as $h^*_T:= \argmin_{h \in \H} \errd{\D(h)}{(h)}$. We can upper bound the difference between $h^*_S$ and $h^*_T$ as follows: 

\begin{theorem}[Suboptimality of $h^*_S$]
    \label{thm:cs-upper-bound}
    Let $X$ be distributed according to $\D_S$. We have:
    \small
    \begin{align*}
        &\err{\D(h^*_S)}{h^*_S} - \err{\D(h^*_T)}{h^*_T}\\
        \leq& \sqrt{\err{\D_S}{h^*_T}} \cdot \left(\sqrt{\var(\omega_{X}(h^*_S))} + \sqrt{\var(\omega_{X}(h^*_T))}\right) .
    \end{align*}
\end{theorem}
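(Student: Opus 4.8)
The plan is to push every quantity back onto the source distribution $\D_S$ using the covariate-shift reweighting identity of \Cref{prop:reweight-loss-induced-distribution}, and then combine a centering trick with the source-optimality of $h^*_S$. Specializing that proposition to the $0$-$1$ loss and writing $e_h := \Indicator(h(X) \neq Y)$, we have $\err{\D(h)}{h} = \E_{\D_S}[\omega_X(h)\, e_h]$ and $\err{\D_S}{h} = \E_{\D_S}[e_h]$. The crucial observation is that the importance weights are centered at one: since $\E_{\D_S}[\omega_X(h)] = \int \P_{\D(h)}(x)\, dx = 1$, I can subtract the mean and write
\begin{align*}
    \err{\D(h)}{h} = \E_{\D_S}\big[(\omega_X(h) - 1)\, e_h\big] + \err{\D_S}{h}.
\end{align*}
This is precisely the step that will later convert a second-moment estimate into the standard deviation $\sqrt{\var(\omega_X(h))}$ appearing in the statement.

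I would then apply this decomposition to both $h^*_S$ and $h^*_T$ and subtract. The two additive source-error terms combine into $\err{\D_S}{h^*_S} - \err{\D_S}{h^*_T}$, which is nonpositive by the definition of $h^*_S$ as the source minimizer, so it can be dropped, leaving
\begin{align*}
    \err{\D(h^*_S)}{h^*_S} - \err{\D(h^*_T)}{h^*_T} \leq \E_{\D_S}\big[(\omega_X(h^*_S) - 1)\, e_{h^*_S}\big] - \E_{\D_S}\big[(\omega_X(h^*_T) - 1)\, e_{h^*_T}\big].
\end{align*}
Each of the two cross terms is then controlled by Cauchy--Schwarz (taking absolute values, which handles the sign of the second, subtracted term), using that $e_h$ is an indicator so $\E_{\D_S}[e_h^2] = \E_{\D_S}[e_h] = \err{\D_S}{h}$. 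This produces the bound $\sqrt{\var(\omega_X(h^*_S))}\,\sqrt{\err{\D_S}{h^*_S}} + \sqrt{\var(\omega_X(h^*_T))}\,\sqrt{\err{\D_S}{h^*_T}}$.

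The last step is to invoke source-optimality a second time, $\err{\D_S}{h^*_S} \leq \err{\D_S}{h^*_T}$, to upper bound the first square-root factor and then factor out the common $\sqrt{\err{\D_S}{h^*_T}}$, which recovers exactly the claimed inequality. The only genuinely nontrivial idea --- and hence the main obstacle --- is the centering step: a direct Cauchy--Schwarz on $\E_{\D_S}[\omega_X(h)\, e_h]$ would yield $\sqrt{\E_{\D_S}[\omega_X(h)^2]}$ rather than the smaller $\sqrt{\var(\omega_X(h))}$, and it is only the mean-one property of the importance weights, together with the two uses of source-optimality to absorb the leftover source-risk terms, that lets the clean variance form survive. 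The remaining manipulations (Cauchy--Schwarz and the final factoring) are routine.
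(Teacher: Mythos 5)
Your proof is correct and follows essentially the same route as the paper's: both exploit the mean-one property $\E_{\D_S}[\omega_X(h)]=1$ to reduce the induced-risk gap to the centered terms $\E_{\D_S}[(\omega_X(h)-1)\,\Indicator(h(X)\neq Y)]$ (i.e., covariances), bound them via Cauchy--Schwarz using $\E_{\D_S}[\Indicator(h(X)\neq Y)^2]=\err{\D_S}{h}$, and invoke the source-optimality of $h^*_S$ twice --- once to discard the source-error difference and once to factor out the common $\sqrt{\err{\D_S}{h^*_T}}$. Your symmetric two-sided decomposition merely streamlines the paper's presentation, which reaches the identical covariance terms by inserting $\bar{\omega}(h)=1$ and comparing against an arbitrary $h$ before specializing to $h=h^*_T$.
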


This result can be interpreted as follows: $h^*_T$ incurs an irreducible amount of error on the source data set, represented by $\sqrt{\err{\D_S}{h^*_T}}$. Moreover, the difference in induced risks between $h^*_S$ and $h^*_T$ is at its maximum when the two classifiers induce adaptations in ``opposite'' directions; this is represented by the sum of the standard deviations of their importance weights, $\sqrt{\var(\omega_{X}(h^*_S))} + \sqrt{\var(\omega_{X}(h^*_T ))}$.

\subsection{Lower Bound}
Recall from \Cref{thm:lb}, for the general setting, it is unclear whether the lower bound is strictly positive or not. In this section, we provide further understanding for when the lower bound $\frac{ \dtv(\D_{Y\vert S},\D_Y(h)) - \dtv(\D_{h\vert S}, \D_h(h))}{2}$ is indeed positive under covariate shift. 
Under several assumptions, our previously provided lower bound in \Cref{thm:lb} is strictly positive with covariate shift.

\begin{assumption}\label{as:cs-lb-1}
      $
    |\E_{X \in X_+(h), Y=+1}[1- \omega_X(h)]| \geq |\E_{X \in X_-(h), Y=+1}[1- \omega_X(h)]|~.
   $
\end{assumption}
{where $X_+(h) = \{x: \omega_x(h)\geq 1\}$ and $X_-(h) = \{x: \omega_x(h) < 1\}$.}

This assumption states that increased $\omega_x(h)$ value points are more likely to have positive labels.

\begin{assumption}\label{as:cs-lb-2}
        $
    |\E_{X \in X_+(h), h(X)=+1}[1- \omega_X(h)]| \geq |\E_{X \in X_-(h), h(X)=+1}[1- \omega_X(h)]|$.
\end{assumption}

This assumption states that increased $\omega_x(h)$ value points are more likely to be classified as positive.

\begin{assumption}\label{as:cs-lb-3}
    $\text{Cov}\big(\P_{\D_S}(Y = +1 | X = x)- \P_{\D_S}(h(x) = +1 | X = x), \omega_x(h)\big)> 0$.
\end{assumption}

This assumption is stating that for a classifier $h$, within all $h(X)=+1$ or $h(X)=-1$, a higher $\P_\D(Y = +1 | X = x)$ associates with a higher $\omega_x(h)$.

\begin{theorem}
    \label{thm:cs-lower-bound}
    Under \cref{as:cs-lb-1}~-~\cref{as:cs-lb-3}, the following lower bound is strictly positive under covariate shift:
    \small
    \begin{align*}
        &\max\{\err{\D_S}{h}, \err{\D(h)}{h}\}\\
        \geq& \frac{ \dtv(\D_{Y\vert S},\D_Y(h)) - \dtv(\D_{h\vert S}, \D_h(h))}{2}> 0 .
    \end{align*}
\end{theorem}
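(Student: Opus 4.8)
The plan is to prove the displayed bound by showing that its right-hand-side numerator is strictly positive; the outer inequality $\max\{\err{\D_S}{h}, \err{\D(h)}{h}\} \geq \tfrac{1}{2}\big(\dtv(\D_{Y\vert S},\D_Y(h)) - \dtv(\D_{h\vert S}, \D_h(h))\big)$ is already delivered by \Cref{thm:lb}, so it suffices to establish $\dtv(\D_{Y\vert S},\D_Y(h)) > \dtv(\D_{h\vert S}, \D_h(h))$. First I would exploit that $Y$ and $h(X)$ are both binary, so each total variation distance collapses to a single absolute difference of the probability of the $+1$ outcome. Writing $\Delta_Y := \P_{\D(h)}(Y=+1) - \P_{\D_S}(Y=+1)$ and $\Delta_h := \P_{\D(h)}(h(X)=+1) - \P_{\D_S}(h(X)=+1)$, the goal reduces to showing $|\Delta_Y| > |\Delta_h|$.

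Next I would push both quantities back onto the source distribution using the importance weight $\omega_X(h)$ from covariate shift. By the reweighting identity of \Cref{prop:reweight-loss-induced-distribution} (applied with indicator ``losses''), together with $\P_{\D(h)}(Y=y\mid X)=\P_{\D_S}(Y=y\mid X)$ and the normalization $\E_{\D_S}[\omega_X(h)] = 1$, I obtain $\Delta_Y = \E_{\D_S}[(\omega_X(h)-1)\,p(X)]$ and $\Delta_h = \E_{\D_S}[(\omega_X(h)-1)\,q(X)]$, where $p(x) := \P_{\D_S}(Y=+1\mid X=x)$ and $q(x) := \P_{\D_S}(h(X)=+1\mid X=x) = \Indicator(h(x)=+1)$.

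I then remove the two absolute values. Splitting each expectation over the over-weighted region $X_+(h)=\{\omega_x(h)\ge 1\}$ and the under-weighted region $X_-(h)=\{\omega_x(h)<1\}$, \Cref{as:cs-lb-1} asserts precisely that the $X_+(h)$ contribution outweighs the $X_-(h)$ contribution in magnitude, which forces $\Delta_Y \ge 0$; the identical split applied to $\Delta_h$ via \Cref{as:cs-lb-2} gives $\Delta_h \ge 0$. Finally I would note the covariance identity $\Delta_Y - \Delta_h = \E_{\D_S}[(\omega_X(h)-1)(p(X)-q(X))] = \Cov_{\D_S}\big(p(X)-q(X),\,\omega_X(h)\big)$, where the second equality again uses $\E_{\D_S}[\omega_X(h)]=1$. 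This covariance is exactly the quantity assumed positive in \Cref{as:cs-lb-3}, so $\Delta_Y > \Delta_h$. Combining with the previous step, $|\Delta_Y| - |\Delta_h| = \Delta_Y - \Delta_h > 0$, which is the required strict positivity.

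The reweighting identities and the binary total-variation reduction are routine; the step needing the most care is the correct reading of the unnormalized, region-restricted expectations in \Cref{as:cs-lb-1} and \Cref{as:cs-lb-2}, and matching their sign bookkeeping so that each $\dtv$ opens with a ``$+$'' sign. The essential point these two assumptions buy is ruling out the case where $\Delta_Y$ or $\Delta_h$ is negative: once both are shown nonnegative, the absolute values disappear cleanly and \Cref{as:cs-lb-3} supplies the strict gap.
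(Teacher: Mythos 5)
Your proposal is correct and takes essentially the same route as the paper's own proof: the outer inequality is imported from \Cref{thm:lb}, the binary total-variation distances are rewritten as source-side expectations of $p(X)$ and $\Indicator(h(X)=+1)$ against $\omega_X(h)-1$, \Cref{as:cs-lb-1} and \Cref{as:cs-lb-2} are used exactly as in the paper to fix the signs and drop the absolute values, and \Cref{as:cs-lb-3} together with $\E_{\D_S}[\omega_X(h)]=1$ delivers the strict gap via the identity $\E_{\D_S}[(\omega_X(h)-1)(p(X)-q(X))]=\Cov_{\D_S}(p(X)-q(X),\omega_X(h))>0$. Your $\Delta_Y,\Delta_h$ bookkeeping (including the caveat you flag about reading the region-restricted expectations in \Cref{as:cs-lb-1}--\ref{as:cs-lb-2} as magnitude comparisons of the weighted contributions) is the paper's argument verbatim, merely phrased in covariance language rather than explicit integrals.
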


\subsection{Covariate Shift via Strategic Response}
\label{sec:strategic-classification}
As introduced in \Cref{sec:example}, we consider a setting caused by \emph{strategic response} in which agents are
classified by and adapt to a binary threshold classifier. In particular, each agent is associated with a $d$~dimensional continuous feature $x\in \R^d$ and a binary true qualification $y(x)\in \{-1,+1\}$, where $y(x)$ is a function of the feature vector $x$. 
Consistent with the literature in strategic classification \citep{hardt2016strategic}, a simple case where after seeing the threshold binary decision rule $h(x) = 2 \cdot \Indicator[x\geq \tau_h]-1$, the agents will \emph{best response} to it by maximizing the following utility function:
\begin{align*}
    u(x, x') = h(x') - h(x) - c(x, x'),    
\end{align*}

where $c(x,x')$ is the \emph{cost function} for decision subjects to modify their feature from $x$ to $x'$. We assume all agents are rational utility maximizers: they will only \emph{attempt} to change their features when the benefit of manipulation is greater than the cost (i.e. when $c(x, x')\leq 2$) and the agent will not change their feature if they are already accepted (i.e. $h(x) = +1$).
For a given threshold $\tau_h$ and manipulation budget $B$, the theoretical best response of an agent with original feature $x$ is:
\begin{align*}
  \Delta(x) = \argmax_{x'} u(x,x') ~~s.t.~ c(x, x')\leq B.  
\end{align*}
To make the problem tractable and meaningful, we further specify the following setups:

\begin{setup}(Initial Feature)
    \label{assumption:agent-initial-feature}
    Agents' initial features are uniformly distributed between $[0,1]\in \R^1$.
\end{setup}

\begin{setup}{(Agent's Cost Function)}
    \label{assumption:cost-function}
    The cost of changing from $x$ to $x'$ is proportional to the distance between them: $c(x, x') = \|x - x'\|$.
\end{setup} 
\Cref{assumption:cost-function} implies that only agents whose features are in between $[\tau_h - B, \tau_h)$ will \emph{attempt} to change their feature. 
We also assume that feature updates are \emph{probabilistic}, such that agents with
features closer to the decision boundary $\tau_h$ have a greater
\emph{chance} of updating their feature and each updated feature $x'$ is
sampled from a uniform distribution depending on $\tau_{h}$, $B$, and $x$ (see \Cref{assumption:agent-success-probability} \& \ref{assumption:agent-new-feature}):
\begin{setup}{(Agent's Success Manipulation Probability)}
    \label{assumption:agent-success-probability}
    For agents who \emph{attempt} to update their features, the probability of
    a successful feature update is
    $\P(X' \neq X) = 1 - \frac{|x - \tau_h|}{B}$.
\end{setup}
Intuitively this setup means that the closer the agent's original feature $x$ is to the decision boundary $\tau_h$, the more likely they can successfully change their feature to cross the decision boundary.
\begin{setup}[Adapted Feature's Distribution]
    \label{assumption:agent-new-feature}
    An agent's updated feature $x'$, given original $x$, manipulation budget $B$, and classification boundary $\tau_h$, is sampled as $X' \sim \text{Unif}(\tau_h, \tau_h + |B - x|)$.
\end{setup}
\Cref{assumption:agent-new-feature} aims to capture the fact that even though agent targets to change their feature to the decision boundary $\tau_h$ (i.e. the least cost action to get a favorable prediction outcome), they might end up reaching a feature that is beyond the decision boundary.

With the above setups, we can specify the bound in \Cref{thm:cs-upper-bound} for the strategic response setting as follows:
\begin{proposition} 
For our assumed setting of strategic response described above, \Cref{thm:cs-upper-bound} implies
\label{proposition:bound-strategic-response}

$
    \err{\D(h^*_S)}{h^*_S} - \err{\D(h^*_T)}{h^*_T}
    \leq  \sqrt{\frac{2B}{3}\err{\D_S}{h^*_T}} .
$
\end{proposition}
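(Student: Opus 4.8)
The plan is to specialize Theorem~\ref{thm:cs-upper-bound}, whose right-hand side reads $\sqrt{\err{\D_S}{h^*_T}}\,\bigl(\sqrt{\var(\omega_X(h^*_S))}+\sqrt{\var(\omega_X(h^*_T))}\bigr)$, so that the whole problem collapses to evaluating the variance of the importance weight $\omega_X(h)$ of a threshold classifier under the strategic-response model of Setups~\ref{assumption:agent-initial-feature}--\ref{assumption:agent-new-feature}. Since both $h^*_S$ and $h^*_T$ are threshold rules on $[0,1]$ with the same budget $B$, I expect both variance terms to take an identical value, and the target factor $\sqrt{2B/3}=2\sqrt{B/6}$ suggests that each variance should evaluate to $B/6$ (or at least be bounded by it). Thus the crux is to show $\var(\omega_X(h))=B/6$ for an arbitrary threshold $\tau_h$, after which the substitution is immediate.

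First I would write out the induced density $\P_{\D(h)}(x)$ explicitly from the setups. Because agents only attempt to move when their feature lies in $[\tau_h-B,\tau_h)$ and accepted agents never move, the weight $\omega_x(h)=\P_{\D(h)}(x)/\P_{\D_S}(x)$ equals $1$ on $[0,\tau_h-B)$ and is perturbed only on the adaptation region and on the arrival interval just above $\tau_h$. On $[\tau_h-B,\tau_h)$ the surviving (non-moving) mass has density governed by the success probability of Setup~\ref{assumption:agent-success-probability}, producing a density that decays linearly toward the boundary; above $\tau_h$ the moved mass, distributed as in Setup~\ref{assumption:agent-new-feature}, adds to the retained accepted population. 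Since $\D_S$ is uniform, $\E_{\D_S}[\omega_X(h)]$ is simply the total induced mass placed back on $[0,1]$, so I would use the identity $\var(\omega_X(h))=\E_{\D_S}[(\omega_X(h)-1)^2]-(\E_{\D_S}[\omega_X(h)]-1)^2$ to organize the computation.

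I would then integrate $(\omega_x(h)-1)^2$ region by region: the linear decay on the departure interval, the square of the arrival density above $\tau_h$, and the mean-deficit correction coming from any mass pushed beyond the support. The main obstacle is precisely this bookkeeping --- tracking how the probabilistic success model and the spread of the uniform arrival distribution trade off against the mass lost from the departure interval so that the net variance collapses to a clean value uniformly in $\tau_h$; getting the boundary/escape terms to cancel correctly is where the delicate part lies, and it is also where the specific constant in the final bound is determined. Once $\var(\omega_X(h^*_S))=\var(\omega_X(h^*_T))=B/6$ is established, I substitute into Theorem~\ref{thm:cs-upper-bound} to obtain $\sqrt{\err{\D_S}{h^*_T}}\cdot 2\sqrt{B/6}=\sqrt{\tfrac{2B}{3}\err{\D_S}{h^*_T}}$, which is the claimed bound.
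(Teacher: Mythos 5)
Your strategy is structurally identical to the paper's proof: derive the piecewise form of $\omega_x(h)$ from Setups~\ref{assumption:agent-initial-feature}--\ref{assumption:agent-new-feature}, compute its variance under the uniform source, and substitute into \Cref{thm:cs-upper-bound}. But the pivotal step you defer --- showing $\var(\omega_X(h)) = B/6$ --- is false, and no boundary/escape bookkeeping can rescue it. For a threshold $\tau_h$ with $[\tau_h - B, \tau_h + B]\subseteq[0,1]$, the setups give (as in \Cref{lemma:strategic-classification-w_g(x)})
\begin{align*}
\omega_x(h) = \begin{cases} 1, & x \in [0,\tau_h - B)\cup[\tau_h+B,1],\\ \frac{\tau_h - x}{B}, & x\in[\tau_h-B,\tau_h),\\ \frac{\tau_h + 2B - x}{B}, & x\in[\tau_h,\tau_h+B),
\end{cases}
\end{align*}
where the middle branch is the survival probability on the departure interval and the third branch is $1$ plus the arrival density $\tfrac{1}{B}$ integrated over donors. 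Mass is conserved exactly --- $\E_{\D_S}[\omega_X(h)]=1$, so the ``mean-deficit correction'' you anticipate from mass escaping the support vanishes identically --- and your identity reduces to $\var(\omega_X(h)) = \E_{\D_S}[(\omega_X(h)-1)^2]$. Computing region by region: on the departure interval, with $u=\tau_h-x$, $\int_0^B \bigl(\tfrac{u-B}{B}\bigr)^2 du = \tfrac{B}{3}$; on the arrival interval, with $u=x-\tau_h$, $\int_0^B \bigl(\tfrac{B-u}{B}\bigr)^2 du = \tfrac{B}{3}$. Hence $\var(\omega_X(h)) = \tfrac{2B}{3}$, uniformly in $\tau_h$ --- four times your conjectured $B/6$.

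Substituting the true value into \Cref{thm:cs-upper-bound} yields $\sqrt{\err{\D_S}{h^*_T}}\cdot 2\sqrt{2B/3} = 2\sqrt{\tfrac{2B}{3}\err{\D_S}{h^*_T}}$, which is \emph{twice} the right-hand side of \Cref{proposition:bound-strategic-response}. You were misled by reverse-engineering the constant from the stated proposition: the paper's own appendix proof also computes $\var(\omega_x(h)) = \tfrac{2}{3}B$ (modulo an obvious typo writing $\E(w^2)-\E(w^2)$) and then asserts the stated bound, silently dropping the factor $2$ that arises from summing the two standard deviations $\sqrt{\var(\omega_X(h^*_S))}+\sqrt{\var(\omega_X(h^*_T))}$. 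So the proposition as printed appears off by a factor of $2$; an honest completion of your computation (and the paper's) proves $\err{\D(h^*_S)}{h^*_S} - \err{\D(h^*_T)}{h^*_T} \le \sqrt{\tfrac{8B}{3}\err{\D_S}{h^*_T}}$, and your plan cannot establish the tighter stated constant because its key lemma, $\var(\omega_X(h))=B/6$, is simply not true under the given setups.
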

We can see that the upper bound for strategic response depends on the manipulation budget $B$, and the error the ideal classifier made on the source distribution $\err{D_S}{h^*_T}$. This aligns with our intuition that the smaller the manipulation budget is, the fewer agents will change their features, thus leading to a tighter upper bound on the difference between $\err{h^*_S}{h^*_S}$ and $\err{h^*_T}{h^*_T}$. This expression also allows us to provide bounds even without the knowledge of the mapping between $\D(h)$ and $h$, since we can directly compute $\err{\D_S}{h^*_T}$ from the source distribution and an estimated optimal classifier $h^*_T$.


\section{Target Shift}
\label{sec:ls}
We consider another popular domain adaptation setting known as \emph{target shift}, in which the distribution of labels changes, but the distribution of features conditioned on the label remains the same:
\begin{align}
    \P_{\D(h)}(X=x|Y=y) &= \P_{\D_S}(X=x|Y=y)\\
    \P_{\D(h)}(Y=y) &\neq \P_{\D_S}(Y=y)
\end{align}

For binary classification, let
${p(h)} := \P_{\D(h)}(Y=+1)$, and $ \P_{\D(h)}(Y=-1) =1-{p(h)}$. Notice that $p(h)$ encodes the full adaptation information from $\D_S$ to $\D(h)$, since the mapping between $Y$ and $X$ $\P(X=x|Y=y)$ is known and remains unchanged during target shift. 
We have for any proper loss function $\ell$:
\begin{align*}
    &\E_{\D(h)}[\ell(h;X,Y)] \\
    =& p(h) \cdot \E_{\D(h)}[\ell(h;X,Y)|Y=+1] \\
    & ~~ ~~ + (1-p(h)) \cdot \E_{\D(h)}[\ell(h;X,Y)|Y=-1] \\
    =& p(h) \cdot \E_{\D_S}[\ell(h;X,Y)|Y=+1] \\
    & ~~ ~~ + (1-p(h)) \cdot \E_{\D_S}[\ell(h;X,Y)|Y=-1] 
\end{align*}
We will adopt the following shorthands:
$
    \err{+}{h} := \E_{\D_S}[\ell(h;X,Y)|Y=+1],~~ \err{-}{h} := \E_{\D_S}[\ell(h;X,Y)|Y=-1].
$
Note that $\err{+}{h}, \err{-}{h}$ are both defined on the conditional source distribution, which is invariant under the target shift assumption.

\subsection{Upper Bound}
We first provide characterizations of the upper bound on the transferability of $h^*_S$ under target shift. 
Denote by $\D_+$ the positive label distribution on $\D_S$ ($\P_{\D_S}(X=x|Y=+1)$) and $\D_-$ the negative label distribution on $\D_S$ ($\P_{\D_S}(X=x|Y=-1)$). Let \underline{$p:= \P_{\D_S}(Y=+1)$}. 

\begin{theorem}
    \label{thm:ls:ub}
    For target shift, the difference between $\err{\D(h^*_S)}{h^*_S}$ and $\err{\D(h^*_T)}{h^*_T}$ bounds as:
    \begin{align*}
        &\err{\D(h^*_S)}{h^*_S} - \err{\D(h^*_T)}{h^*_T}
        \leq  |\omega(h^*_S)-\omega(h^*_T)| \\
         + &(1+p)
        \cdot \left(\dtv(\D_+(h^*_S),\D_+(h^*_T))+\dtv(\D_-(h^*_S),\D_-(h^*_T)\right).
    \end{align*}
\end{theorem}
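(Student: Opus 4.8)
The plan is to exploit the exact additive structure of the induced risk under target shift. Writing $p(h)=\P_{\D(h)}(Y=+1)$ and recalling that the class-conditional distributions $\D_+,\D_-$ are invariant, the derivation preceding the theorem gives the identity
\[
\err{\D(h)}{h}=p(h)\,\err{+}{h}+(1-p(h))\,\err{-}{h},
\]
so the induced risk of any classifier is completely determined by its induced label marginal $p(h)$ together with its two class-conditional errors $\err{\pm}{h}$, the latter being defined on the \emph{invariant} source conditionals and hence directly comparable across classifiers. The target difference $\err{\D(h^*_S)}{h^*_S}-\err{\D(h^*_T)}{h^*_T}$ therefore splits naturally into a piece driven by the change in the label marginal and a piece driven by the change in the classifier.

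First I would introduce a hybrid quantity that mixes the two classifiers, namely $h^*_S$'s class-conditional errors weighted against $h^*_T$'s induced marginal (equivalently, evaluating $h^*_S$ on $\D(h^*_T)$, which under target shift shares the conditionals $\D_\pm$ and only swaps the label marginal). Adding and subtracting this hybrid term splits the difference into a \emph{marginal-shift} term of the form $\big(p(h^*_S)-p(h^*_T)\big)\big(\err{+}{h^*_S}-\err{-}{h^*_S}\big)$, which I would collect into the single scalar $|\omega(h^*_S)-\omega(h^*_T)|$, and a \emph{classifier-discrepancy} term that is a convex combination, with weights $p(h^*_T)$ and $1-p(h^*_T)$, of $\err{+}{h^*_S}-\err{+}{h^*_T}$ and $\err{-}{h^*_S}-\err{-}{h^*_T}$.

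For the classifier-discrepancy term I would bound each class-conditional error difference by a total-variation distance. Writing $\err{\pm}{h}$ as the expectation of a $\{0,1\}$-valued functional of the prediction $h(X)$ (for $0$-$1$ loss, $\Indicator[h(X)\neq \pm1]$) against the invariant class-conditional, the difference across the two classifiers is the difference of expectations of one and the same loss functional against the two prediction laws $\D_+(h^*_S),\D_+(h^*_T)$ (respectively $\D_-(h^*_S),\D_-(h^*_T)$), hence at most $\dtv(\D_\pm(h^*_S),\D_\pm(h^*_T))$. To recover the stated $(1+p)$ prefactor I would, rather than discarding the remaining marginal-shift comparison, route part of it through the \emph{source} risk and invoke the source-optimality $\err{\D_S}{h^*_S}\le\err{\D_S}{h^*_T}$, exactly as in the covariate-shift argument of \Cref{thm:cs-upper-bound}; the source-risk comparison contributes a second copy of the conditional-error differences carrying marginal weight $p=\P_{\D_S}(Y=+1)$, and consolidating the two contributions under a single coarse coefficient gives $(1+p)$ multiplying $\dtv(\D_+(h^*_S),\D_+(h^*_T))+\dtv(\D_-(h^*_S),\D_-(h^*_T))$.

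The main obstacle is the coupling: both $p(h)$ and $\err{\pm}{h}$ move simultaneously as we pass from $h^*_S$ to $h^*_T$, so a single add--subtract step leaves cross terms of the form $(p(h^*_S)-p(h^*_T))\big((\err{+}{h^*_S}-\err{+}{h^*_T})-(\err{-}{h^*_S}-\err{-}{h^*_T})\big)$. The delicate part is organizing these cross terms, using the crude but sufficient bounds $|\err{+}{h}-\err{-}{h}|\le 1$ and $0\le p(h)\le 1$ together with the source-optimality of $h^*_S$, so that exactly one scalar $|\omega(h^*_S)-\omega(h^*_T)|$ survives while the remaining mass lands on the TV terms with coefficient $1+p$. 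Once this bookkeeping is fixed the individual inequalities are elementary.
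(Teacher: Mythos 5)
Your proof is correct, but it takes a genuinely different route from the paper's. The paper pivots on the \emph{source} marginal $p$: it writes $\err{\D(h^*_S)}{h^*_S} = \err{\D_S}{h^*_S} + (p(h^*_S)-p)\left(\err{+}{h^*_S}-\err{-}{h^*_S}\right)$, invokes the source-optimality $\err{\D_S}{h^*_S}\le\err{\D_S}{h^*_T}$ (this step is essential to its argument), re-expands $\err{\D_S}{h^*_T}$ around $p(h^*_T)$ to surface $\err{\D(h^*_T)}{h^*_T}$, and then handles the two residual terms via the uniform-prior identity $\tfrac12\left(\err{+}{h}-\err{-}{h}\right) = \tfrac12 - \P_{\D_u}(h(X)=+1)$, bounding $|\P_{\D_u}(h^*_S(X)=+1)-\P_{\D_u}(h^*_T(X)=+1)|$ by half the sum of the two class-conditional TV distances; the cross-term bookkeeping in that route is what produces the $(1+p)$ coefficient. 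You instead interpolate through the hybrid risk $\err{\D(h^*_T)}{h^*_S} = p(h^*_T)\,\err{+}{h^*_S}+(1-p(h^*_T))\,\err{-}{h^*_S}$, which splits the difference \emph{exactly} into the marginal term $(p(h^*_S)-p(h^*_T))\left(\err{+}{h^*_S}-\err{-}{h^*_S}\right)$ plus the convex combination $p(h^*_T)\left(\err{+}{h^*_S}-\err{+}{h^*_T}\right)+(1-p(h^*_T))\left(\err{-}{h^*_S}-\err{-}{h^*_T}\right)$; with this choice of hybrid there are in fact \emph{no} leftover cross terms, so the delicate bookkeeping you anticipate in your final paragraph is vacuous. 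Bounding $\left|\err{+}{h^*_S}-\err{-}{h^*_S}\right|\le 1$ gives exactly the paper's marginal term $|p(h^*_S)-p(h^*_T)|$, and each conditional-error difference equals the corresponding TV distance for binary predictions, so the discrepancy term is bounded with coefficient at most $1\le 1+p$. Your argument is therefore more elementary (no uniform-prior device), yields a strictly tighter constant, and — notably — never uses optimality of $h^*_S$ on $\D_S$, so it holds with any classifier in place of $h^*_S$; what the paper's route buys instead is an explicit role for the suboptimality of $h^*_S$, mirroring its covariate-shift proof of \Cref{thm:cs-upper-bound}.

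One caution: your detour to ``recover'' the $(1+p)$ prefactor by additionally routing through source-optimality does not consolidate as sketched. That comparison contributes the conditional-error differences with weights $p$ and $1-p$ on the positive- and negative-class TV terms respectively; added to your direct discrepancy bound this gives coefficients at most $1+p$ and $2-p$, and $2-p > 1+p$ whenever $p < 1/2$, so the combined expression is not dominated by $(1+p)\left(\dtv(\D_+(h^*_S),\D_+(h^*_T))+\dtv(\D_-(h^*_S),\D_-(h^*_T))\right)$ in general. But the detour is unnecessary: your direct bound with coefficient $1$ already implies the stated inequality, so you should simply drop the inflation step rather than repair it.
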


The above bound consists of two components. The first quantity captures the difference between the two induced distributions $\D(h^*_S)$ and $\D(h^*_T)$. The second quantity characterizes the difference between the two classifiers $h^*_S, h^*_T$ on the source distribution. 

\subsection{Lower Bound}

Now we discuss lower bounds. Denote by $\TPR_S(h)$ and $\FPR_S(h)$ the true positive and false positive rates of $h$ on the source distribution $\D_S$. We prove the following:
\begin{theorem}
    For target shift, any model $h$ must incur the following error on either $\D_S$ or $\D(h)$:
    \small
    \begin{align*}
        & \max\{\err{\D_S}{h}, \err{\D(h)}{h} \}    \\
        \geq &
    \frac{|p-{p(h)}|\cdot (1-|\TPR_S(h)-\FPR_S(h)|)}{2}  .  
    \end{align*}
    \label{thm:ls:lb}
\end{theorem}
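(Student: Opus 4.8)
The plan is to specialize the general lower bound of \Cref{thm:lb} to the target shift setting by evaluating the two total variation distances on its right-hand side. Recall \Cref{thm:lb} gives $\max\{\err{\D_S}{h}, \err{\D(h)}{h}\} \geq \tfrac{1}{2}\left(\dtv(\D_{Y\vert S}, \D_Y(h)) - \dtv(\D_{h\vert S}, \D_h(h))\right)$, so it suffices to show $\dtv(\D_{Y\vert S}, \D_Y(h)) = |p - p(h)|$ and $\dtv(\D_{h\vert S}, \D_h(h)) = |p - p(h)| \cdot |\TPR_S(h) - \FPR_S(h)|$, and then substitute.

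For the first quantity, I would observe that both $\D_{Y\vert S}$ and $\D_Y(h)$ are Bernoulli distributions over $Y \in \{-1,+1\}$, with $\P(Y=+1)$ equal to $p$ and $p(h)$ respectively. The total variation distance between two Bernoulli laws with success probabilities $p$ and $p(h)$ is exactly $|p - p(h)|$, which settles the first identity immediately.

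For the second quantity, I would condition on the label. Since $h(X)$ is binary, $\dtv(\D_{h\vert S}, \D_h(h)) = |\P_{\D_S}(h(X)=+1) - \P_{\D(h)}(h(X)=+1)|$. Expanding each probability by the law of total probability over $Y$ and invoking the defining target shift property $\P_{\D(h)}(X=x\mid Y=y) = \P_{\D_S}(X=x\mid Y=y)$ — which implies $\P_{\D(h)}(h(X)=+1\mid Y=y) = \P_{\D_S}(h(X)=+1\mid Y=y)$, i.e. the class-conditional acceptance rates $\TPR_S(h)$ and $\FPR_S(h)$ are identical under both distributions — I obtain $\P_{\D_S}(h(X)=+1) = p\,\TPR_S(h) + (1-p)\,\FPR_S(h)$ and $\P_{\D(h)}(h(X)=+1) = p(h)\,\TPR_S(h) + (1-p(h))\,\FPR_S(h)$. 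Subtracting these and collecting the common factor $(p - p(h))$ gives $\dtv(\D_{h\vert S}, \D_h(h)) = |p - p(h)|\cdot|\TPR_S(h) - \FPR_S(h)|$.

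Substituting both identities into \Cref{thm:lb} and factoring out $|p - p(h)|$ yields the claimed bound. The only real subtlety lies in the second step: the crucial observation is that target shift leaves the class-conditional feature distributions — and hence $h$'s true and false positive rates — invariant, so the change in $\P(h(X)=+1)$ between $\D_S$ and $\D(h)$ is driven \emph{entirely} by the shift in the label prior from $p$ to $p(h)$. This is precisely what lets the $\TPR_S(h) - \FPR_S(h)$ difference factor out cleanly. Everything else is elementary, so I do not anticipate any serious obstacle beyond bookkeeping the conditional expansions correctly.
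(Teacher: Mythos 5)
Your proposal is correct and follows essentially the same route as the paper: both invoke the general bound of \Cref{thm:lb} and then compute $\dtv(\D_{Y\vert S},\D_Y(h)) = |p-p(h)|$ as a Bernoulli total variation and $\dtv(\D_{h\vert S},\D_h(h)) = |p-p(h)|\cdot|\TPR_S(h)-\FPR_S(h)|$ via the invariance of the class-conditional acceptance rates. Your inline observation that target shift preserves $\P(h(X)=+1\mid Y=y)$ is exactly the content of the paper's Lemma~\ref{lemma:label-shift-TPR-FPR}, which it proves by the same pushforward argument written out as an integral.
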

The proof extends the bound of Theorem \ref{thm:lb} by further explicating each of $\dtv(\D_{Y \vert S},\D_Y(h))$, $\dtv(\D_{h|S}$, $\D_h(h))$ under the assumption of target shift. Since $|\TPR_S(h)-\FPR_S(h)| < 1$ unless we have a trivial classifier that has either $\TPR_S(h)= 1, \FPR_S(h)=0$ or $\TPR_S(h)= 0, \FPR_S(h)=1$, the lower bound is strictly positive. Taking a closer look, the lower bound is determined linearly by how much the label distribution shifts: $p-{p(h)}$. The difference is further determined by the performance of $h$ on the source distribution through $1-|\TPR_S(h)-\FPR_S(h)|$. For instance, when $\TPR_S(h)>\FPR_S(h)$, the quality becomes $\text{FNR}_S(h)+\FPR_S(h)$, that is the more error $h$ makes, the larger the lower bound will be. 

\subsection{Target Shift via Replicator Dynamics}
\label{sec:replicator-dynamics}
We now further instantiate our theoretical bound for target shift (\Cref{thm:ls:ub}) using a particular replicator dynamics model previously used in \cite{raab2021unintended}. In particular, the fitness function is specified as the prediction accuracy of $h$ for class $y$:
\begin{align}
    \textbf{Fitness}(Y=y) := \P_{\D_S}(h(X)=y|Y=y) \label{eqn:fitness}
\end{align}
Then we have $\E\left[\textbf{Fitness}(Y)\right] = 1- \err{\D_S}{h}$, and
$
   \frac{{p(h)}}{\P_{\D_S}(Y=+1)} = \frac{\Pr_{\D_S}(h(X)=+1|Y=+1)}{1-\err{\D_S}{h}}.
$
Plugging the result back into \Cref{thm:ls:ub} we get the following bound for the above replicator dynamic setting:
\begin{proposition}
    \label{prop:ls:rd} 
    Under the replicator dynamics model described in \Cref{eqn:fitness}, $ |\omega(h^*_S)-\omega(h^*_T)|$ bounds as:
    \begin{align*}
      &   |\omega(h^*_S)-\omega(h^*_T)|\leq \P_{\D_S}(Y=+1) \\
        \cdot &\frac{|\err{\D_S}{h^*_S }-\err{\D_S}{h^*_T}| \cdot |\TPR_S(h^*_S)-\TPR_S(h^*_T)|}{\err{\D_S}{h^*_S } \cdot \err{\D_S}{h^*_T}}.
    \end{align*}
\end{proposition}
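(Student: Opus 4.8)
The plan is to read \Cref{prop:ls:rd} as a direct specialization of the first summand $|\omega(h^*_S)-\omega(h^*_T)|$ of the upper bound in \Cref{thm:ls:ub}, where under target shift the induced distribution is fully determined by the induced positive-class probability, so $\omega(h)$ is the scalar $p(h):=\P_{\D(h)}(Y=+1)$. First I would substitute the closed form established just above the proposition from the replicator-dynamics fitness in \Cref{eqn:fitness}: since $\textbf{Fitness}(Y=+1)=\TPR_S(h)$ and $\E_{\D_S}[\textbf{Fitness}(Y)]=1-\err{\D_S}{h}$, the replicator law gives $p(h)=\P_{\D_S}(Y=+1)\cdot \TPR_S(h)/(1-\err{\D_S}{h})$. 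Writing $T_S,T_T$ for $\TPR_S(h^*_S),\TPR_S(h^*_T)$ and $e_S,e_T$ for $\err{\D_S}{h^*_S},\err{\D_S}{h^*_T}$, this carries the common factor $\P_{\D_S}(Y=+1)$ to both sides and reduces the claim to the purely algebraic inequality $|\,T_S/(1-e_S)-T_T/(1-e_T)\,|\le |e_S-e_T|\,|T_S-T_T|/(e_S e_T)$ in four source-side quantities.

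Next I would put the left-hand difference over the common denominator $(1-e_S)(1-e_T)$ and telescope its numerator $T_S(1-e_T)-T_T(1-e_S)$ by adding and subtracting the cross term $T_T(1-e_T)$, which decomposes it as $(T_S-T_T)(1-e_T)+T_T(e_S-e_T)$. This cleanly isolates one contribution proportional to $T_S-T_T$ and one proportional to $e_S-e_T$, exposing exactly the two factors that appear in the stated bound. I would then invoke the defining optimality of the two classifiers --- $h^*_S$ minimizes the source risk, so $e_S\le e_T$, while $h^*_T$ minimizes the induced risk --- together with the elementary bounds $\TPR_S(\cdot)\le 1$ and a lower bound on the accuracies $1-\err{\D_S}{\cdot}$, to combine the two telescoped pieces and recover the product form with $e_S e_T$ in the denominator.

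The hard part will be this last regrouping. A naive triangle-inequality split of the telescoped numerator only yields a \emph{sum} of a $|T_S-T_T|$ term and an $|e_S-e_T|$ term, each with \emph{accuracies} rather than errors in the denominator, whereas the claimed bound is their \emph{product} divided by $e_S e_T$; passing from the former to the latter forces one to trade accuracies for errors and to reabsorb each first-order term into the cross term, which is where the structural link between $\Delta\TPR$ and $\Delta\err$ forced by the two optimality conditions must be used. I would finally sanity-check the inequality on the degenerate case $h^*_S=h^*_T$ (both sides vanish) and on extreme error values to confirm the direction of the bound and that no constant has been dropped.
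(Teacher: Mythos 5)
Your opening reduction coincides with the paper's: substituting $p(h)=\P_{\D_S}(Y=+1)\cdot \TPR_S(h)/(1-\err{\D_S}{h})$ from the replicator law, factoring out $\P_{\D_S}(Y=+1)$, and putting the difference over a common denominator is exactly how the paper's proof of \Cref{prop:ls:rd} begins (write $e_S=\err{\D_S}{h^*_S}$, $e_T=\err{\D_S}{h^*_T}$, $T_S=\TPR_S(h^*_S)$, $T_T=\TPR_S(h^*_T)$). But at the step you yourself flag as ``the hard part,'' your proposal has a genuine gap, and the route you sketch for closing it would fail. After clearing denominators, the product-form bound requires $|T_S(1-e_T)-T_T(1-e_S)|\le |e_S-e_T|\cdot|T_S-T_T|$, and this is false as a general algebraic fact: take $e_S=e_T=e$ with $T_S\neq T_T$, so the left side equals $(1-e)|T_S-T_T|>0$ while the right side vanishes. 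Neither optimality condition excludes this configuration --- optimality of $h^*_S$ on $\D_S$ only gives $e_S\le e_T$, and the two minimizers can tie in source error while differing in true positive rate --- so no regrouping of your telescoped numerator $(T_S-T_T)(1-e_T)+T_T(e_S-e_T)$, combined with $\TPR_S(\cdot)\le 1$, can convert the sum of two first-order terms into the product of the two differences. The paper does not derive this step either: it disposes of it in a single line by invoking Lemma~7 of \cite{liu2015online}, so the missing ingredient in your write-up is precisely that external lemma (with whatever hypotheses it carries), not a rearrangement recoverable from the optimality of $h^*_S$ and $h^*_T$.

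A second difficulty you correctly sensed but misdiagnosed as something to be ``traded'' away: under the main-text fitness normalization $\E[\textbf{Fitness}(Y)]=1-\err{\D_S}{h}$, the denominator that actually emerges is $(1-e_S)(1-e_T)$ --- and that is indeed what the paper's own proof display produces --- whereas the $e_S\, e_T$ appearing in the statement of \Cref{prop:ls:rd} matches the appendix variant (\Cref{sec:ts:appendix}) in which $\E[\textbf{Fitness}(Y)]$ is set to $\err{\D_S}{h}$, giving $p(h)=\P_{\D_S}(Y=+1)\,\TPR_S(h)/\err{\D_S}{h}$. The accuracies-versus-errors mismatch is thus an internal inconsistency between the paper's statement and its proof, not a failure of your algebra; under the main-text dynamics the correct target denominator is $(1-e_S)(1-e_T)$, and there is no legitimate passage from it to $e_S\, e_T$. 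In short: your first two steps replicate the paper's proof, but the proposal is incomplete at the decisive inequality, the completion you gesture at cannot work, and in the paper itself that step rests entirely on the cited lemma.
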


The above result shows that the difference between the induced risks $\err{\D(h^*_S)}{h^*_S}$ and $\err{\D(h^*_T)}{h^*_T}$ only depends on the difference between the two classifiers' performances on the source data $\D_S$. This offers the decision maker a great opportunity to evaluate the performance gap by using their corresponding evaluations on the source data only without observing their corresponding induced distributions. 

\begin{figure}
    \centering
    \includegraphics[width=0.34\textwidth]{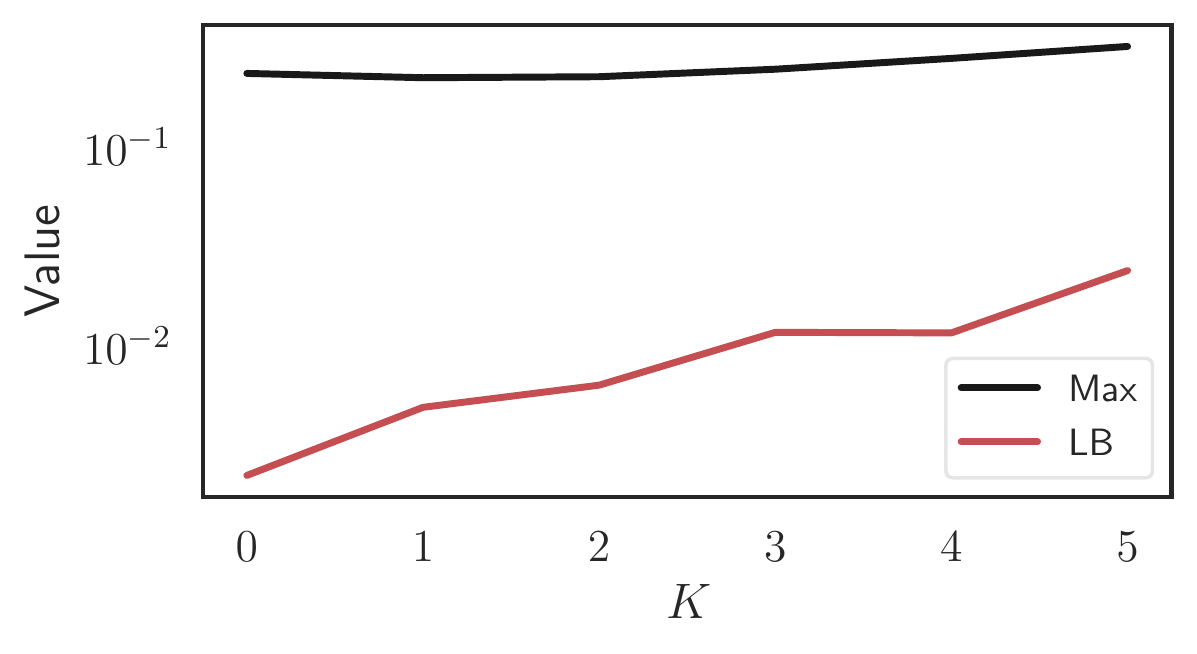}
\includegraphics[width=0.34\textwidth]{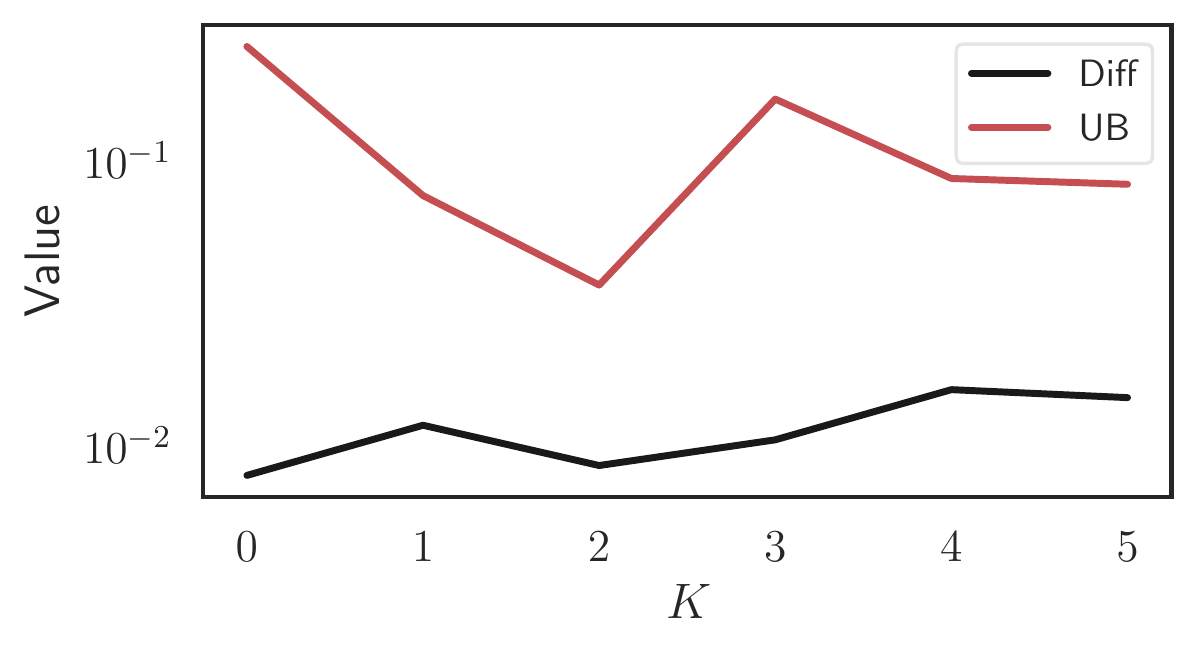}
    \caption{{
        Results for synthetic experiments on real-world data.
        $\textsf{Diff}:= \err{\D(h^*_S)}{h^*_S} - \err{\D(h^*_T)}{h^*_T}$, $\textsf{Max} := \max\{\err{\D_S}{h^*_T}, \err{\D(h^*_T)}{h^*_T}\}$, $\textsf{UB}:=$ upper bound specified in \Cref{thm:cs-upper-bound}, and $\textsf{LB}:=$ lower bound specified in \Cref{thm:cs-lower-bound}. For each time step $K = k$, we compute and deploy the source optimal classifier $h^*_S$ and update the credit score for each individual according to the received decision as the new reality for time step $K = k + 1$. Details of the data generation are deferred to  \Cref{sec:experiment-details}}.
    }
    \label{subfig:exp-DAG}
\end{figure}

\section{Experiments}\label{sec:exp} 
We present synthetic experimental results on both simulated and real-world data sets. 

\paragraph{Synthetic experiments using simulated data} 
We generate synthetic data sets from the structural equation models described on simple causal DAG in Figure \ref{subfig:CS-DAG} for covariate shift and target shift. To generate the induced distribution $\D(h)$, we posit a specific \emph{adaptation function} $\Delta: \R^d \times \H \to \R^d$, so that when an input $x$ encounters classifier $h \in \H$, its induced features are precisely $x' = \Delta(x,h)$. We provide details of the data generation processes and adaptation functions in \Cref{sec:experiment-details}.

We take our training data set $\{x_1,\ldots,x_n\}$ and learn a ``base'' logistic regression model $h(x) = \sigma(w \cdot x)$.\footnote{$\sigma(\cdot)$ is the logistic function and $w\in \R^3$ denotes the weights.} We then consider the hypothesis class $\H := \{h_\tau ~|~ \tau \in [0,1]\}$, where $h_\tau(x) := 2\cdot \Indicator[\sigma(w \cdot x) > \tau]-1$. To compute $h^*_S$, the model that performs best on the source distribution, we simply vary $\tau$ and take the $h_\tau$ with the lowest prediction error. Then, we posit a specific adaptation function $\Delta(x,h_\tau)$. Finally, to compute $h_T^*$, we vary $\tau$ from $0$ to $1$ and find the classifier $h_\tau$ that minimizes the prediction error on its induced data set $\{\Delta(x_1,h_\tau),\ldots,\Delta(x_n,h_\tau)\}$. 
We report our results in \Cref{fig:synthetic-experiments}.

For all four datasets, we do observe positive gaps $\err{D(h_S^*)}{h_S^*} - \err{D(h_T^*)}{h_T^*}$, indicating the suboptimality of training on $\D_S$. The gaps are well bounded by the theoretical results. For the lower bound, the empirical observation and the theoretical bounds are roughly within the same magnitude except for one target shift dataset, indicating the effectiveness of our theoretical result. Regarding the upper bound, for target shift, the empirical observations are well within the same magnitude of the theoretical bounds while the results for the covariate shift are relatively loose.

\begin{figure}[h!]
    \centering
    \includegraphics[width=60mm]{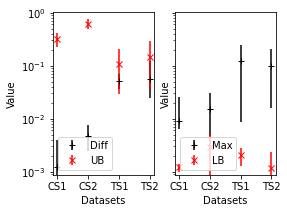}
    \caption{Results for synthetic experiments on simulated and real-world data.
  $\textsf{Diff}:= \err{\D(h^*_S)}{h^*_S} - \err{\D(h^*_T)}{h^*_T}$, $\textsf{Max} := \max\{\err{\D_S}{h^*_T}, \err{\D(h^*_T)}{h^*_T}\}$, $\textsf{UB}:=$ upper bound specified in \Cref{thm:cs-upper-bound}, and $\textsf{LB}:=$ lower bound specified in \Cref{thm:cs-lower-bound}.}
    \label{fig:synthetic-experiments}
\end{figure}

\paragraph{Synthetic experiments using real-world data}
We also perform synthetic experiments using real-world data to demonstrate our bounds.
In particular, we use the FICO credit score data set \citep{board2007report} which contains more than 300k records of TransUnion credit scores of clients from different demographic groups.
For our experiment on the preprocessed FICO data set \citep{hardt2016equality}, we convert the cumulative distribution function (CDF) of TransRisk score among different groups into group-wise credit score densities, from which we generate a balanced sample to represent a population where groups have equal representations.
We demonstrate the application of our results in a series of resource allocations.
Similar to the synthetic experiments on simulated data, we consider the hypothesis class of threshold classifiers and treat the classification outcome as the decision received by individuals.

For each time step $K = k$, we compute $h^{*}_S$, the statistical optimal classifier on the source distribution (i.e., the current reality for step $K = k$), and update the credit score for each individual according to the received decision as the new reality for time step $K = k + 1$. Details of the data generation are again deferred to  \Cref{sec:experiment-details}.  We report our results in Figure \ref{subfig:exp-DAG}.
We do observe positive gaps  $\err{\D(h^*_S)}{h^*_S} - \err{\D(h^*_T)}{h^*_T}$, indicating the suboptimality of training on $\D_S$. The gaps are well bounded by the theoretical upper bound (UB). Our lower bounds (LB) do return meaningful positive gaps, demonstrating the trade-offs that a classifier has to suffer on either the source distribution or the induced target distribution. We also provide additional experimental results using synthetic datasets generated according to causal graphs defined in \Cref{subfig:CS-DAG}. Due to page limits, we defer the detailed discussions of these results to \Cref{sec:appendix_additional_exps}.

\section{Conclusions and Future Directions}
Unawareness of the potential distribution shift might lead to unintended consequences when training a machine learning model. One goal of our paper is to raise awareness of this issue for the safe deployment of machine learning methods in high-stake scenarios. We also provide a general framework for characterizing the performance difference for a fixed-trained classifier when the decision subjects respond to it. 

Our contributions are mostly theoretical. A natural extension of our work is to collect real human experiment data to verify the usefulness and tightness of our bounds. Another potential future direction is to develop algorithms to find an optimal model that achieves minimum induced risk, which has been an exciting ongoing research problem in the field of performative prediction. Furthermore, using techniques from general domain adaptation to find robust classifiers that perform well in both the source and induced distribution is another promising direction. 

\paragraph{Ackowledgement} Y. Chen is partially supported by the National Science Foundation (NSF) under grants IIS-2143895 and IIS-2040800. 
The work is also supported in part by the NSF-Convergence Accelerator Track-D award \#2134901, by the National Institutes of Health (NIH) under Contract R01HL159805, by grants from Apple Inc., KDDI Research, Quris AI, and IBT, and by generous gifts from Amazon, Microsoft Research, and Salesforce.

\bibliographystyle{icml2023}

\bibliography{myref,da,performative_prediction}

\newpage
\onecolumn



\appendix

\icmltitle{Supplement to ``Model Transferability with Responsive Decision Subjects''}

\renewcommand \thepart{}
\renewcommand \partname{}

\addcontentsline{toc}{section}{} 
\part{} 
\parttoc 


We arrange the appendix as follows:
\squishlist
    \item \Cref{sec:use-transparent-model} provides some real-life scenarios where transparent models are useful or required. 
    \item \Cref{sec:appendix-related-work} provides additional related work on strategic classification and domain adaptation, as well as a detailed comparison of our setting and other sub-areas in domain adaptation.
    \item \Cref{sec:proof-thm-sh} provides proof for \Cref{thm:transf:sh}.
    \item \Cref{sec:proof-thm-sh2} provides proof for \Cref{thm:transf:sh2}.
    \item \Cref{sec:proof-thm-lb} provides proof of  \Cref{thm:lb}.
    \item \Cref{sec:proof-reweight-prop} provides proof for \Cref{prop:reweight-loss-induced-distribution}.
    \item \Cref{sec:proof-cs-ub} provides proof for \Cref{thm:cs-upper-bound}.
    \item \Cref{sec:proof-cs-lb} provides proof for \Cref{thm:cs-lower-bound}.
    \item \Cref{sec:proof-strategic-response} provides omitted assumptions and proof for \Cref{sec:strategic-classification}.
    \item \Cref{sec:proof-ls-ub} provides proof for \Cref{thm:ls:ub}.
    \item \Cref{sec:proof-ls-lb} provides proof for \Cref{thm:ls:lb}.
    \item \Cref{sec:proof-ls-rd} provides proof for \Cref{prop:ls:rd}.
    \item \Cref{sec:ts:appendix} provides additional lower bound and examples for the target shift setting. 
    \item \Cref{sec:experiment-details} provides missing experimental details.
    \item \Cref{sec:computation} discusses challenges in minimizing induced risk.
    \item \Cref{sec:discussion-minimize-ida} provides discussions on how to directly minimize the induced risk.
    \item \Cref{sec:reguarlized-training} provides discussions on adding regularization to the objective function.
    {\item \Cref{sec:discussion-tightness} provides discussions on the tightness of our theoretical bounds.}
\squishend

\section{Additional Discussions}
\subsection{Example Usages of Transparent Models}
\label{sec:use-transparent-model}
As we mentioned in \Cref{sec:introduction}, there is an increasing requirement of making the decision rule to be transparent due to its potential consequences impacts to individual decision subject. Here we provide the following reasons for using transparent models:
\begin{itemize}
    \item Government regulation may require the model to be transparent, especially in public services;
    \item In some cases, companies may want to disclose their models so users will have explanations and are incentivized to better use the provided services.
    \item Regardless of whether models are published voluntarily, model parameters can often be inferred via well-known query “attacks”.
\end{itemize}

In addition, we name some concrete examples of some real-life applications:
\begin{itemize}
    \item Consider the \emph{Medicaid health insurance program} in the United States, which serves low-income people. There is an obligation to provide transparency/disclose the rules (model to automate the decisions) that decide whether individuals qualify for the program — in fact, most public services have "terms" that are usually set in stone and explained in the documentation. Agents can observe the rules and will adapt their profiles to be qualified if needed. For instance, an agent can decide to provide additional documentation they need to guarantee approval. For more applications along these lines, please refer to this report.\footnote{\url{https://datasociety.net/library/poverty-lawgorithms/}}
    \item Credit score companies directly publish their criteria for assessing credit risk scores. In loan application settings, companies actually have the incentive to release criteria to incentivize agents to meet their qualifications and use their services.Furthermore, making decision models transparent will gain the trust of users.
    \item It is also known that it is possible to steal model parameters, if agents have incentives to do so.\footnote{\url{https://www.wired.com/2016/09/how-to-steal-an-ai/}} For instance, spammers frequently infer detection mechanisms by sending different email variants; they then adjust their spam content accordingly.
\end{itemize}

\subsection{Additional Related Work}
\label{sec:appendix-related-work}
\paragraph{Strategic Classification}
Strategic Classification focuses on the problem of how to make predictions in the presence of agents who behave strategically in order to obtain desirable outcomes \citep{hardt2016strategic,chen2020learning,Dong2018reveal,chen2020strategic,miller2020strategic}. 
In particular, \citep{hardt2016strategic} first formalizes strategic classification tasks as a two-player sequential game (i.e., a Stackelberg game) between a model designer and strategic agents. Agent best response behavior is typically viewed as malicious in the traditional setting; as a result, the model designer seeks to disincentivize this behavior or limit its impact by publishing classifiers that are robust to any agent's adaptations. In our work, the agents' strategic behaviors are not necessarily malicious; instead, we aim to provide a general framework that works for any distribution shift resulting from the human agency.  

Most existing work in strategic classification assumes that human agents are fully rational and will always perform \emph{best response} to any given classifier. As a result, their behaviors can be fully characterized based on pre-specified human response models \cite{hardt2016strategic, chen2020learning}. 
While we are also interested in settings where agents respond to a decision rule, we focus on the distribution shift of human agents at a population level and characterize the induced distribution as a function of the deployed model. Instead of specifying a particular individual-level agent's response model, we only require the knowledge of the source data $\D_S$, as well as some characterizations of the relationship between the source and the induced distribution, e.g., they satisfy some particular distribution shift models, like covariate shift (see \Cref{sec:cs}), or target shift (see \Cref{sec:ls}), or we have access to some data points from the induced distribution so we can estimate their statistical differences like H-divergence (see \cref{sec:general-bound}). 
In addition, the focus of our work is different from strategic classification. Instead of designing models robust to strategic behavior, we primarily study the \emph{transferability} of a given model's performance on the distribution itself induces. 

\paragraph{Domain Adaptation}
There has been tremendous work in domain adaptation studying different distribution shifts and learning from shifting distributions \citep{jiang2008literature,ben-david2010domain,sugiyama2008direct,zhang2019bridging,kang2019contrastive,NEURIPS2020_3430095c,xie2022gearnet}. Our results differ from these previous works since in our setting, changes in distribution are not passively provided by the environment, but rather an active consequence of model deployment. Part of our technical contributions is inspired by the transferability results in domain adaptations \cite{ben-david2010domain,zadrozny2004learning,gretton2009covariate,sugiyama2008direct,lipton2018detecting,azizzadenesheli2019regularized}.

Our work, at first sight, looks similar to several sub-areas within the literature of domain adaptation, e.g., domain generalization, adversarial attack, and test-time adaptation, to name a few. 
For instance, the notion of observing an ``induced distribution" resembles similarity of the adversarial machine learning literature \citep{Lowd05,Huang11,Vorobeychik18}. 
One of the major differences between ours and adversarial machine learning is that in adversarial machine learning, the true label $Y$ stays the same for the attacked feature, while in our paper, both $X$ and $Y$ might change in the induced distribution $\D(h)$. In \Cref{sec:comparison-IDA-DA}, we provide detailed comparisons between our setting and the other subfields in domain adaptation mentioned above. 

\paragraph{Comparisons of our setting and Some Areas in Domain Adaptation} 
\label{sec:comparison-IDA-DA}
We compare our setting (We address it as IDA, representing ``induced domain adaptation'') with the following areas:
\begin{itemize}
    \item Adversarial attack \cite{chakraborty2018adversarial,papernot2016transferability,song2019improving}: in adversarial attack, the true label $Y$ stays the same for the attacked feature, while in IDA, we allow the true label to change as well. One can think of adversarial attack as a specific form of IDA where the induced distribution has a specific target, that is to maximize the classifier’s error by only perturbing/modifying. Our transferability bound does, however, provide insights into how standard training results transfer to the attack setting.
    
    \item Domain generalization \cite{wang2021generalizing, li2017learning,muandet2013domain}: the goal of domain generalization is to learn a model that can be generalized to any unseen distribution; Similar to our setting, one of the biggest challenges in domain generalization also the lack of target distribution during training. The major difference, however, is that our focus is to understand how the performance of a classifier  trained on the source distribution degrades when evaluated on the induced distribution (which depends on how the population of decision subjects responds); this degradation depends on the classifier itself. 
    \item Test-time adaptation \cite{varsavsky2020testtime, wang2021tent,nado2021evaluating}: the issue of test-time adaptation falls into the classical domain adaptation setting where the adaptation is independent of the model being deployed. Applying this technique to solve our problem requires accessing data (either unsupervised or supervised) drawn from $\D_S(h)$ for each $h$ being evaluated during different training epochs.
\end{itemize}

\paragraph{Remark}

We believe that techniques from Domain Adaptation can potentially be applied to our setting when the decision-maker is interested in producing a classifier that performs well on the induced distribution. In general, we suspect that it will require the decision maker to know certain information about how the induced distribution and the source distribution differ, e.g. some potential characterizations of their statistical differences. Here we provide two possible directions:
\begin{itemize}
    \item Data Augmentation: The basic idea of data augmentation is to augment the original data point $(x, y)$ pairs with new pairs $(A(x), B(x)) $where $A(\cdot)$ and $B(\cdot)$ denote a pair of transformations. Then we add the new pairs to the training dataset. $A(\cdot)$ and $B(\cdot)$ are usually seen as a way of simulating domain shift, and the design of $A(\cdot)$ and $B(\cdot)$ is key to performance. In our case, A and B are functions of the classifier $h$, and they capture how the classifier influences the potential response from the decision subjects. This requires the decision maker to have a specific response model in mind when designing the augmented data points.
     \item Learning Disentangled Representations: Instead of forcing the entire model or features to be domain invariant, which is challenging, one can relax this constraint by allowing some parts to be domain-specific, essentially learning disentangled representations. In our induced domain adaptation setting, this means separating features into two sets, one set contains features that are invariant to the deployment of a classifier, and the other set contains features that will be potentially affected by. Then we can decompose a classifier into domain-specific biases and domain-agnostic weights, and only keep the latter when dealing with the unseen induced domain.
\end{itemize}

\section{Proof of Results}
\subsection{Proof of Theorem \ref{thm:transf:sh}}
\label{sec:proof-thm-sh}
\begin{proof}
We first establish two lemmas that will be helpful for bounding the errors of a pair of classifiers. Both are standard results from the domain adaption literature \cite{ben-david2010domain}.
\begin{lemma}
\label{lemma:h-divergence}
For any hypotheses $h, h^\prime\in \mathcal H$ and distributions $\D, \D'$,
\begin{align*}
    |\err{\D}{h,h'} - \err{\D'}{h,h'}|\leq \frac{d_{\H \times \H}(\D, \D')}{2}.
\end{align*}
\end{lemma}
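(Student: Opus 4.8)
The plan is to prove this directly from the definition of the $\H \times \H$-divergence, since the statement is essentially definitional. First I would fix the reading of the two-argument error notation: $\err{\D}{h,h'}$ denotes the \emph{disagreement probability} $\P_{\D}(h(X) \neq h'(X))$ between the two classifiers $h, h'$ measured under $\D$, as opposed to the one-argument $\err{\D}{h} = \P_{\D}(h(X) \neq Y)$, which compares against the true label $Y$. Under this reading the left-hand side of the claimed inequality becomes $|\P_{\D}(h(X) \neq h'(X)) - \P_{\D'}(h(X) \neq h'(X))|$.

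Next I would simply compare this expression against the definition $d_{\H \times \H}(\D, \D') = 2\sup_{g,g' \in \H} |\P_{\D}(g(X) \neq g'(X)) - \P_{\D'}(g(X) \neq g'(X))|$. For the particular fixed pair $(h,h')$, the quantity $|\P_{\D}(h(X) \neq h'(X)) - \P_{\D'}(h(X) \neq h'(X))|$ is one of the terms over which this supremum is taken, and is therefore bounded above by the supremum itself. Since the supremum equals $\tfrac{1}{2} d_{\H \times \H}(\D, \D')$ by the factor of $2$ in the definition, I obtain $|\err{\D}{h,h'} - \err{\D'}{h,h'}| \leq \tfrac{1}{2} d_{\H \times \H}(\D, \D')$, which is exactly the claim.

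The only points requiring care are bookkeeping rather than analysis: confirming that $\err{\D}{h,h'}$ is indeed the classifier-disagreement probability (so that it matches the disagreement events appearing in the divergence), and tracking the factor of $\tfrac{1}{2}$ correctly against the leading $2$ in the definition of $d_{\H \times \H}$. There is no genuine analytic obstacle here; this is a standard lemma from the domain-adaptation literature (ben-David et al.), included only as a building block so that the triangle-inequality manipulations in the proof of \Cref{thm:transf:sh} can be carried out cleanly.
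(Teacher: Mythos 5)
Your proposal is correct and matches the paper's own proof essentially verbatim: the paper likewise defines the cross-prediction disagreement $\errd{\D}(h,h') := \P_{\D}(h(X)\neq h'(X))$ and observes that for the fixed pair $(h,h')$ this term is dominated by the supremum in the definition of $d_{\H \times \H}(\D,\D')$, yielding the factor $\tfrac{1}{2}$. There is nothing to add; your bookkeeping of the leading $2$ is exactly the paper's argument.
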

\begin{proof}
Define the-cross prediction disagreement between two classifiers $h,h'$ on a distribution $\D$ as $\errd{\D}(h,h'):=\P_{\D}(h(X)\neq h'(X))$.
By the definition of the $\cal H-$divergence,
\begin{align*}
    d_{\H \times \H} (\D, \D') 
    &= 2 \sup_{h,h' \in \H} \left|\P_{\D} (h(X)\neq h'(X)) - \P_{\D'} (h(X)\neq h'(X))\right|\\
    &= 2 \sup_{h,h' \in \H} \left|\err{\D}{h,h'}- \err{\D'}{h,h'}\right|\\
    &\geq 2\left|\err{\D}{h,h'}- \err{\D'}{h,h'}\right|.
\end{align*}
\end{proof}

Another helpful lemma for us is the well-known fact that the 0-1 error obeys the triangle inequality (see, e.g., \cite{crammer2008learning}):
\begin{lemma}
\label{lemma:triangle-inequality-classification-error}
For any distribution $\D$ over instances and any labeling functions $f_1$, $f_2$, and $f_3$, we have $\err{\D}{f_1,f_2} \leq \err{\D}{f_1,f_3} + \err{\D}{f_2,f_3}$.
\end{lemma}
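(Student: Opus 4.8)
The plan is to reduce the statement to a pointwise inequality between indicator functions and then integrate against $\D$. Recall that, in the notation introduced just above, the cross-prediction disagreement satisfies $\err{\D}{f_i,f_j}=\P_{\D}(f_i(X)\neq f_j(X))=\E_{\D}[\Indicator[f_i(X)\neq f_j(X)]]$, so it suffices to prove the deterministic inequality
\[
\Indicator[f_1(x)\neq f_2(x)]\;\leq\;\Indicator[f_1(x)\neq f_3(x)]+\Indicator[f_2(x)\neq f_3(x)]
\]
for every fixed $x$, and then take expectations over $X\sim\D$.

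First I would establish the pointwise inequality by a two-case analysis on the value of $f_1(x)$ relative to $f_2(x)$. If $f_1(x)=f_2(x)$, the left-hand side equals $0$ while the right-hand side is a sum of nonnegative indicators, so the inequality is immediate. If $f_1(x)\neq f_2(x)$, the left-hand side equals $1$; since $f_3(x)$ is a single label, it cannot simultaneously agree with both $f_1(x)$ and $f_2(x)$ (as these differ), hence $f_3(x)$ must disagree with at least one of them and the right-hand side is at least $1$. This is precisely the triangle inequality for the discrete metric $d(a,b)=\Indicator[a\neq b]$ on the label space, so the argument is insensitive to whether labels are binary or take values in a larger set.

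Finally I would take the expectation of both sides over $X\sim\D$ and invoke linearity and monotonicity of expectation to obtain
\[
\E_{\D}\big[\Indicator[f_1(X)\neq f_2(X)]\big]\;\leq\;\E_{\D}\big[\Indicator[f_1(X)\neq f_3(X)]\big]+\E_{\D}\big[\Indicator[f_2(X)\neq f_3(X)]\big],
\]
which upon rewriting each expectation as the corresponding disagreement probability yields $\err{\D}{f_1,f_2}\leq \err{\D}{f_1,f_3}+\err{\D}{f_2,f_3}$, as claimed. There is no substantial obstacle here: the entire content is the elementary observation that the $0$-$1$ disagreement is a pseudometric, and the only step requiring any care is the case analysis establishing the pointwise inequality, which I would state explicitly so that the integration step is immediate.
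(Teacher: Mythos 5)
Your proof is correct. The paper does not actually prove this lemma---it invokes it as a well-known fact with a citation to \cite{crammer2008learning}---and your argument (the pointwise triangle inequality for the discrete metric $d(a,b)=\Indicator[a\neq b]$, followed by monotonicity and linearity of expectation over $X\sim\D$) is precisely the standard derivation such a citation points to, including the correct observation that nothing depends on the label space being binary.
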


Denote by $\bar{h}^*$ the \emph{ideal joint hypothesis}, which minimizes the combined error:
\begin{align*}
    \bar{h}^* := \argmin_{h' \in \H} \err{\D(h)}{h'} + \err{\D_S}{h'}
\end{align*}
We have:
\begin{align*}
    \err{\D(h)}{h} &\leq \err{\D(h)}{\bar{h}^*} + \errd{\D(h)}(h,\bar{h}^*) \tag{Lemma \ref{lemma:triangle-inequality-classification-error}}\\
    &\leq\err{\D(h)}{\bar{h}^*} +  \errd{\D_S}(h,\bar{h}^*) + \left|\errd{\D(h)}(h,\bar{h}^*)-\errd{\D_S}(h,\bar{h}^*)\right|\\
    &\leq \err{\D(h)}{\bar{h}^*} + \err{\D_S}{h} + \err{\D_S}{\bar{h}^*} + \frac{1}{2}d_{\H \times \H}(\D_S, \D(h)) \tag{\text{Lemma \ref{lemma:h-divergence}}}\\
    &= \err{\D_S}{h} + \lambda_{\D_S\rightarrow \D(h)} +\frac{1}{2}d_{\H \times \H}(\D_S, \D(h)) \tag{Definition of $\bar{h}^*$}.
\end{align*}
\end{proof}

\subsection{Proof of Theorem \ref{thm:transf:sh2}}
\label{sec:proof-thm-sh2}
\begin{proof}
Invoking Theorem \ref{thm:transf:sh}, and replacing $h$ with $h^*_T$ and $S$ with $ \D(h^*_T)$, we have
\begin{align}
    \err{\D(h)}{h^*_T} \leq \err{\D(h^*_T)}{h^*_T} + \lambda_{\D(h) \rightarrow \D(h^*_T)} +\frac{1}{2}d_{\H \times \H}(\D(h^*_T), \D(h)) \label{eqn:thm1}
\end{align}

Now observe that
\begin{align*}
    \err{\D(h)}{h} &\leq \err{\D(h)}{h^*_T} +  \errd{\D(h)}(h,h^*_T) \\
    &\leq\err{\D(h)}{h^*_T} +  \errd{\D(h^*_T)}(h,h^*_T) + \left|\errd{\D(h)}(h,h^*_T) - \errd{\D(h^*_T)}(h,h^*_T)\right| \\
    &\leq\err{\D(h)}{h^*_T} +  \errd{\D(h^*_T)}(h,h^*_T) + \frac{1}{2}d_{\H \times \H}(\D(h^*_T), \D(h)) \tag{by Lemma \ref{lemma:h-divergence}} \\
    &\leq \err{\D(h)}{h^*_T} + \err{\D(h^*_T)}{h} + \err{\D(h^*_T)}{h^*_T} + \frac{1}{2}d_{\H \times \H}(\D(h^*_T), \D(h)) \tag{by Lemma \ref{lemma:triangle-inequality-classification-error}} \\
    &\leq \err{\D(h^*_T)}{h^*_T} + \lambda_{\D(h) \rightarrow \D(h^*_T)} +\frac{1}{2}d_{\H \times \H}(\D(h^*_T), \D(h)) \tag{\text{by \eqref{eqn:thm1}}}\\
    &\quad +\err{\D(h^*_T)}{h} + \err{\D(h^*_T)}{h^*_T} + \frac{1}{2}d_{\H \times \H}(\D(h^*_T), \D(h))
\end{align*}
Adding $\err{\D(h)}{h}$ to both sides and rearranging terms yields
\begin{align*}
    2\err{\D(h)}{h} - 2\err{\D(h^*_T)}{h^*_T} &\leq \err{\D(h)}{h} + \err{\D(h^*_T)}{h}+\lambda_{\D(h) \rightarrow \D(h^*_T)} + d_{\H \times \H}(\D(h^*_T), \D(h)) \\
    &= \Lambda_{\D(h) \rightarrow \D(h^*_T)}(h)+ \lambda_{\D(h) \rightarrow \D(h^*_T)} + d_{\H \times \H}(\D(h^*_T), \D(h))
\end{align*}
Dividing both sides by 2 completes the proof.
\end{proof}
\subsection{Proof of Theorem \ref{thm:lb}}
\label{sec:proof-thm-lb}

\begin{proof}
Using the triangle inequality of $\dtv$, we have
\begin{align}
\label{eqn:triangle-inequality-dtv}
    \dtv(\D_{Y \vert S},\D_{Y}(h))
    \leq \dtv(\D_{Y \vert S},\D_{h \vert S})
        + \dtv(\D_{h \vert S},\D_{h}(h))
        + \dtv(\D_{h}(h),\D_{Y}(h))
\end{align}
and by the definition of $\dtv$, the divergence term $\dtv(\D_{Y \vert S},\D_{Y}(h))$ becomes
\begin{align*}
    \dtv(\D_{Y \vert S},\D_{h \vert S})
    &= \left|\P_{\D_S}(Y = +1 ) - \P_{\D_S}( h(x) = +1)\right|\\
    &= \left|\frac{\E_{\D_S}[Y]+1}{2} - \frac{\E_{\D_S}[h(X)]+1}{2}\right| \\
    &= \left|\frac{\E_{\D_S}[Y]}{2} - \frac{\E_{\D_S}[h(X)]}{2}\right| \\
    &\leq \frac{1}{2} \cdot \E_{\D_S}\left[|Y-h(X)|\right] \\
    &= \err{\D_S}{h}
\end{align*}
Similarly, we have
\begin{align*}
    \dtv(\D_{h}(h),\D_{Y}(h)) \leq \err{\D(h)}{h}
\end{align*}

As a result, we have
\begin{align*}
    \err{\D_S}{h}+\err{\D(h)}{h} 
    \geq & \  \dtv(\D_{Y \vert S},\D_{h \vert S}) + \dtv(\D_{h}(h),\D_{Y}(h)) \\
    \geq & \ \dtv(\D_{Y \vert S},\D_{Y}(h)) - \dtv(\D_{h \vert S},\D_{h}(h))   \tag{by \eqref{eqn:triangle-inequality-dtv}}
\end{align*}

which implies
\[
    \max\{\err{\D_S}{h}, \err{\D(h)}{h} \} \geq  \frac{\dtv(\D_{Y \vert S},\D_{Y}(h)) - \dtv(\D_{h \vert S}, \D_{h}(h))}{2}~.
\]
\end{proof}
\subsection{Proof of \Cref{prop:reweight-loss-induced-distribution}}
\label{sec:proof-reweight-prop}
\begin{proof}
\vspace{-0.05in}
\begin{align*}
     & \E_{\D(h)}[\ell(h;X,Y)] \\
     =& \int  \P_{\D(h)}(X=x,Y=y) \ell(h;x,y)~dx dy\\
     =& \int   \P_{\D_S}(Y=y|X=x) \cdot  \P_{\D(h)}(X=x) \ell(h;x,y)~dx dy\\
     =& \int   \P_{\D_S}(Y=y|X=x) \cdot  \P_{\D_S}(X=x) \cdot \frac{\P_{\D(h)}(X=x)}{\P_{\D_S}(X=x)} \cdot \ell(h;x,y)~dx dy\\
      =& \int   \P_{\D_S}(Y=y|X=x) \cdot  \P_{\D_S}(X=x) \cdot \omega_x(h) \cdot \ell(h;x,y)~dx dy\\
      =& \E_{\D_S}[\omega_x(h) \cdot \ell(h;x,y)]
\end{align*}
\end{proof}

\subsection{Proof of Theorem \ref{thm:cs-upper-bound}}
\label{sec:proof-cs-ub}
\begin{proof}
We start from the error induced by $h^*_S$. Let the \emph{average importance weight induced by $h^*_S$} be $\bar{\omega}(h^*_S) = \E_{\D_S}[\omega_{x}(h^*_S)]$; we add and subtract this from the error:
\begin{align*}
    \err{\D(h^*_S)}{h^*_S} &= \E_{\D_S}\left[\omega_{x}(h^*_S) \cdot \Indicator(h^*_S(x)\neq y)\right]\\
    &= \E_{\D_S}\left[\bar{\omega}(h^*_S)  \cdot \Indicator(h^*_S(x)\neq y)\right]+ \E_{\D_S}\left[(\omega_{x}(h^*_S) -\bar{\omega}(h^*_S) ) \cdot \Indicator(h^*_S(x)\neq y)\right]
\end{align*}
In fact, $\bar{\omega}(h^*_S)  = 1$, since
\begin{align*}
   \bar{\omega}(h^*_S)  =& \E_{\D_S}[\omega_{x}(h_S^*) ]
   = \int \omega_{x}(h_S^*)  \P_{\D_S}(X = x)dx
   \\
   &= \int \frac{\P_{\D(h)}(X = x)}{\P_{\D_S}(X = x)} \P_{\D_S}(X = x)dx
   = \int {\P_{\D(h)}(X = x)} dx
   = 1
\end{align*}

Now consider any other classifier $h$. We have
\begin{align*}
    & \err{\D(h^*_S)}{h^*_S} \\
    =& \ \E_{\D_S}\left[ \Indicator(h^*_S(x)\neq y)\right]
        + \E_{\D_S}\left[(\omega_{x}(h^*_S) - \bar{\omega}(h^*_S) ) \cdot \Indicator(h^*_S(x)\neq y)\right] \\
    \leq& \ \E_{\D_S}\left[ \Indicator(h(x)\neq y)\right]
        + \E_{\D_S}\left[(\omega_{x}(h^*_S) -\bar{\omega}(h^*_S) ) \cdot \Indicator(h^*_S(x)\neq y)\right]
        && \tag{by optimality of $h^*_{S}$ on $\D_S$} \\
    =& ~ \E_{\D_S}\left[\bar{\omega}(h) \cdot \Indicator(h(x)\neq y)\right]
        + \E_{\D_S}\left[(\omega_{x}(h^*_S) -\bar{\omega}(h^*_S) ) \cdot \Indicator(h^*_S(x)\neq y)\right]
        && \tag{multiply by $\bar{\omega}(h^*_S)  = 1$} \\
    =& \ \E_{\D_S}\left[\omega_{x}(h) \cdot \Indicator(h(x)\neq y)\right]
        + \E_{\D_S}\left[(\bar{\omega}(h)-\omega_{x}(h)) \cdot \Indicator(h(x)\neq y)\right]
        && \tag{add and subtract $\bar{\omega}(h^*_S)$} \\
        &\quad + \E_{\D_S}\left[(\omega_{x}(h^*_S) -\bar{\omega}(h^*_S) ) \cdot \Indicator(h^*_S(x)\neq y)\right] \\
    =& \ \err{\D(h)}{h}
        + \cov(\omega_{x}(h^*_S),\Indicator(h^*_S(x)\neq y))
        - \cov(\omega_{x}(h),\Indicator(h(x)\neq y)) 
\end{align*}

Moving the error terms to one side, we have
\begin{align*}
    & \ \err{\D(h^*_S)}{h^*_S} - \err{\D(h)}{h} \\
    \leq& \ \cov(\omega_{x}(h^*_S),\Indicator(h^*_S(x)\neq y)) -\cov(\omega_{x}(h),\Indicator(h(x)\neq y)) \\
    \leq& ~ \sqrt{\var(\omega_{x}(h^*_S))\cdot \var(\Indicator(h^*_S(x)\neq y))}
        && \tag{$|\cov(X,Y)| \leq \sqrt{\var(X) \cdot \var(Y)}$} \\
        &\quad + \sqrt{\var(\omega_{x}(h))\cdot \var(\Indicator(h(x)\neq y))} \\
    =& \ \sqrt{\var(\omega_{x}(h^*_S))\cdot\err{S}{h^*_S}(1-\err{S}{h^*_S})}
        + \sqrt{\var(\omega_{x}(h) )\cdot\err{\D_S}{h}(1-\err{\D_S}{h})} \\
    \leq& \ \sqrt{\var(\omega_{x}(h^*_S))\cdot\err{S}{h^*_S}}+\sqrt{\var(\omega_{x}(h) ) \cdot \err{\D_S}{h}}
        && \tag{$1-\err{\D_S}{h} \leq 1$} \\
    \leq& \ \sqrt{\err{\D_S}{h}} \cdot \left( \sqrt{\var(\omega_{x}(h^*_S))}
        + \sqrt{\var(\omega_{x}(h) )}\right)
\end{align*}
Since this holds for any $h$, it certainly holds for $h = h^*_T$.
\end{proof}

\subsection{Omitted Assumptions and Proof of Theorem \ref{thm:cs-lower-bound}}
\label{sec:proof-cs-lb}
Denote $X_+(h) = \{x: \omega_x(h)\geq 1\}$ and $X_-(h) = \{x: \omega_x(h) < 1\}$. 
First, we observe that
\begin{align*}
    &\int_{X_+(h)} \P_{\D_S} (X = x) (1- \omega_x(h))dx \\
    + & \int_{X_-(h)} \P_{\D_S} (X = x) (1-\omega_x(h))dx = 0
\end{align*}
This is simply because of $ \int_{x} \P_{\D_S} (X = x) \cdot \omega_x(h) dx =\int_{x} \P_{\D(h)} (X = x)  dx =1$.

\begin{proof}
    Notice that in the setting of binary classification, we can write the total variation distance between $\D_{Y \vert S}$ and $\D_Y(h)$ as the difference between the probability of $Y = +1$ and the probability of $Y = -1$:
    \begin{align}
    &\dtv(\D_{Y \vert S},\D_{Y}(h))\nonumber \\
    =& \left|\P_{\D_S}(Y=+1) - \P_{\D(h)}(Y=+1) \right|\nonumber \\
    =&\left|\int \P_{\D_S}(Y=+1|X=x) \P_{\D_S}(X=x) dx - \int \P_{\D_S}(Y=+1|X=x) \P_{\D_S}(X=x) \omega_{x}(h)  dx \right| \nonumber\\
    =&\left|\int \P_{\D_S}(Y=+1|X=x) \P_{\D_S}(X=x) \cdot (1-\omega_{x}(h)  ) dx \right| \label{eqn:Dtv-Y}
    \end{align}
    Similarly we have
    \begin{align}
    \dtv(\D_{h \vert S},\D_{h}(h)) 
    =\left|\int \P_{\D_S}(h(x)=+1|X=x) \P_{\D_S}(X=x) \cdot (1-\omega_{x}(h)  ) dx \right| \label{eqn:Dtv-h}
    \end{align}
    We can further expand the total variation distance between $\D_{Y \vert S}$ and $\D_{Y}(h)$ as follows:
    \begin{align*}
    &\dtv(\D_{Y \vert S},\D_{Y}(h)) \\
    =&\left|\int \P_{\D_S}(Y=+1|X=x) \P_{\D_S}(X=x) \cdot (1-\omega_{x}(h)  ) dx \right|\\
    =& \Bigl|\underbrace{\int_{X_+(h) } \P_\D(Y=+1|X=x) \P_{\D_S}(X=x) \cdot (1-\omega_{x}(h)  ) dx}_{\leq 0} \\
    &+ \underbrace{\int_{X_-(h) } \P_{\D_S}(Y=+1|X=x) \P_{\D_S}(X=x) \cdot (1-\omega_{x}(h)  ) dx}_{>0}\Bigr|\\
    =& -\int_{X_+(h) } \P_{\D_S}(Y=+1|X=x) \P_{\D_S}(X=x) \cdot (1-\omega_{x}(h)  ) dx \\
    &- \int_{X_-(h) } \P_{\D_S}(Y=+1|X=x) \P_{\D_S}(X=x) \cdot (1-\omega_{x}(h)  ) dx \tag{ by Assumption \ref{as:cs-lb-1}}\\
    =& \int_{X_+(h) } \P_{\D_S}(Y=+1|X=x) \P_{\D_S}(X=x) \cdot (\omega_{x}(h) -1 ) dx\\
    &+ \int_{X_-(h) } \P_{\D_S}(Y=+1|X=x) \P_{\D_S}(X=x) \cdot ( \omega_{x}(h) -1) dx \tag{\text{by \eqref{eqn:Dtv-Y}}}\\
    =& \int \P_{\D_S}(Y=+1|X=x) \P_{\D_S}(X=x) \cdot (\omega_{x}(h) -1 ) dx 
    \end{align*}
    Similarly, by assumption \ref{as:cs-lb-2} and equation \eqref{eqn:Dtv-h}, we have
    \begin{align*}
    &\dtv(\D_{h \vert S},\D_{h}(h)) = \int \P_{\D_S}(h(x)=+1|X=x) \P_{\D_S}(X=x) \cdot (\omega_{x}(h) -1 ) dx 
    \end{align*}
    
    Thus we can bound the difference between $\dtv(\D_{Y \vert S},\D_{Y}(h))$ and $\dtv(\D_{h \vert S},\D_{h}(h))$ as follows:
    \begin{align*}
        &\dtv(\D_{Y \vert S},\D_{Y}(h)) - \dtv(\D_{h \vert S},\D_{h}(h)) \\
        =& \int \P_{\D_S}(Y=+1|X=x) \P_{\D_S}(X=x) \cdot (\omega_{x}(h) -1 ) dx \\
        & \quad - \int \P_\D(h(x)=+1|X=x) \P_{\D_S}(X=x) \cdot (\omega_{x}(h) -1 ) dx \\
        =& \int [\P_{\D_S}(Y=+1|X=x)- \P_{\D_S}(h(x)=+1|X=x)] \P_{\D_S}(X=x) \cdot (\omega_{x}(h) -1 ) dx  \\
        =& \ \E_{\D_S}  [\left(\P_{\D_S}(Y=+1|X=x) - \P_{\D_S}(h(x)=+1|X=x)\right)\left(\omega_{x}(h) -1\right)] \quad \tag{\text{by Assumption \ref{as:cs-lb-3}}}\\
        >& \ \E_{\D_S} [\P_{\D_S}(Y=+1|X=x) - \P_{\D_S}(h(x)=+1|X=x)]\E_{\D_S}[\omega_{x}(h) -1]\\
        =& \ 0
    \end{align*}
    Combining the above with Theorem \ref{thm:lb}, we have
    \begin{align*}
    \max\{\err{\D_S}{h}, \err{\D(h)}{h}\} \geq \frac{ \dtv(\D_{Y \vert S},\D_{Y}(h)) - \dtv(\D_{h \vert S}, \D_{h}(h))}{2}> 0
    \end{align*}
\end{proof}

\subsection{Omitted details for \Cref{sec:strategic-classification}}
\label{sec:proof-strategic-response}

With \Cref{assumption:cost-function}~-~\Cref{assumption:agent-new-feature}, we can further specify the important weight $w_x(h)$ for the strategic response setting:  

\begin{lemma}
  \label{lemma:strategic-classification-w_g(x)}
    Recall the definition for the covariate shift important weight coefficient $\omega_{x}(h): = \frac{\P_{D(h)}(X = x)}{\P_{D_S}(X = x)}$, for our strategic response setting, we have,
    \begin{align}
   w_x(h) = 
    \begin{cases}
    1, & x\in [0, \tau_h - B) \\
    \frac{\tau_h - x}{B}, & x\in [\tau_h - {B} ,\tau_h)\\
    \frac{1}{B}(- x + \tau_h + 2B), & x\in [\tau_h, \tau_h + B)\\
    1, & x \in [\tau_h + B, 1]
    \end{cases}
\end{align}
\end{lemma}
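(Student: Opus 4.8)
The plan is to exploit that, by \Cref{assumption:agent-initial-feature}, $\D_S$ is uniform on $[0,1]$, so its density is identically $1$ and hence $\omega_x(h)=\frac{\P_{\D(h)}(X=x)}{\P_{\D_S}(X=x)}=\P_{\D(h)}(X=x)$ is exactly the induced density. It therefore suffices to track, for each target location $x$, how much probability mass lands there after the strategic response. I would partition $[0,1]$ into the four regions $[0,\tau_h-B)$, $[\tau_h-B,\tau_h)$, $[\tau_h,\tau_h+B)$, and $[\tau_h+B,1]$ dictated by \Cref{assumption:cost-function} (only agents in $[\tau_h-B,\tau_h)$ attempt to move, and they only move to locations $\geq\tau_h$), and compute the induced density region by region as (mass that stays) $+$ (mass that flows in).

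The three ``boundary'' regions are immediate. On $[0,\tau_h-B)$ no agent attempts to move and nobody arrives (all targets are $\geq\tau_h$), so the density is unchanged and $\omega_x(h)=1$. On $[\tau_h-B,\tau_h)$ agents attempt with probability $1-\tfrac{\tau_h-x}{B}$ by \Cref{assumption:agent-success-probability}, there is no inflow, so only the survivors remain and the density is $\tfrac{\tau_h-x}{B}$, giving $\omega_x(h)=\tfrac{\tau_h-x}{B}$ (which equals $1$ at $x=\tau_h-B$, matching the first region). On $[\tau_h+B,1]$ the agents are already accepted and do not move; since any mover from $x_0\in[\tau_h-B,\tau_h)$ lands strictly below $\tau_h+B$, nothing arrives, so again $\omega_x(h)=1$.

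The crux is the inflow into $[\tau_h,\tau_h+B)$, whose original occupants, being already accepted, stay and contribute density $1$. For a source $x_0\in[\tau_h-B,\tau_h)$ the mass that successfully leaves is $m(x_0)=1-\tfrac{\tau_h-x_0}{B}$, and by \Cref{assumption:agent-new-feature} it is spread uniformly, beginning at $\tau_h$, over an interval whose width is the remaining budget $w(x_0)=B-(\tau_h-x_0)=x_0+B-\tau_h$. The key simplification I would highlight is the cancellation $m(x_0)=w(x_0)/B$, so the density deposited at any reachable target by source $x_0$ is the constant $m(x_0)/w(x_0)=1/B$. A target $x\in[\tau_h,\tau_h+B)$ is reachable exactly from sources with $x_0+B>x$, i.e.\ $x_0\in(x-B,\tau_h)$, an interval of length $\tau_h+B-x$; integrating the constant $1/B$ over it gives inflow $\tfrac{\tau_h+B-x}{B}$. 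Adding the retained density $1$ yields $\omega_x(h)=1+\tfrac{\tau_h+B-x}{B}=\tfrac{-x+\tau_h+2B}{B}$, as claimed. Collecting the four cases completes the proof.

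I expect the main obstacle to be this inflow computation: correctly identifying, for each target $x$, the exact set of source points that can reach it and pinning down the limits of the resulting integral, together with verifying the normalization of the uniform adapted-feature law. The algebra is light once one notices the cancellation $m(x_0)=w(x_0)/B$; without that observation the integrand is a ratio and the calculation is considerably messier. I would also state the mild parameter regime ($B\le\tau_h$ and $\tau_h+B\le 1$) under which all four regions are nondegenerate, so that the piecewise formula reads exactly as in the statement.
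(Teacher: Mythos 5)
Your proposal is correct and follows essentially the same route as the paper's proof: a case analysis over the same four regions, with the only nontrivial case being the inflow into $[\tau_h,\tau_h+B)$, where your cancellation $m(x_0)/w(x_0)=1/B$ is exactly the paper's simplification of the integrand $\frac{1-(\tau_h-z)/B}{B-\tau_h+z}$ to $\frac{1}{B}$ before integrating over $(x-B,\tau_h)$. Your added remarks --- reading Setup~\ref{assumption:agent-new-feature} as a uniform law of width $B-(\tau_h-x_0)$ (which matches the paper's own integral, despite the setup's literal ``$|B-x|$'') and the nondegeneracy regime $B\le\tau_h$, $\tau_h+B\le 1$ --- only make explicit what the paper leaves implicit.
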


Proof for \Cref{lemma:strategic-classification-w_g(x)}:
\begin{proof}
We discuss the induced distribution $\D(h)$ by cases:
\begin{itemize}
    \item For the features distributed between $[0, \tau_h - B]$: since we assume the agents are rational, under assumption \ref{assumption:cost-function}, agents with feature that is smaller than $[0, \tau_h - B]$ will not perform any kinds of adaptations, and no other agents will adapt their features to this range of features either, so the distribution between $[0, \tau_h - B]$ will remain the same as before. 
    \item For the target distribution between $[\tau_h - B, \tau_h]$ can be directly calculated from assumption \ref{assumption:agent-success-probability}.
    \item For distribution between $[\tau_h, \tau_h + B]$, 
consider a particular feature $x^\star\in [\tau_h, \tau_h + B]$, under \Cref{assumption:agent-new-feature}, we know its new distribution becomes:
\begin{align*}
    \P_{\D(h)} (x = x^\star) &= 1 + \int_{x^\star - B}^{\tau_h} \frac{1 - \frac{\tau_h - z}{B}}{B - \tau_h + z} dz\\
    &= 1 +  \int_{x^\star - B}^{\tau_h} \frac{1}{B}d z\\
    & = \frac{1}{B} (-x^\star + \tau_h + 2B)
\end{align*}
\item For the target distribution between $[\tau_h + B, 1]$: under assumption \ref{assumption:cost-function} and \ref{assumption:agent-new-feature}, we know that no agents will change their feature to this feature region. So the distribution between $[\tau_h + B, 1]$ remains the same as the source distribution.
\end{itemize}

Recall the definition for the covariate shift important weight coefficient $\omega_{x}(h): = \frac{\P_{D(h)}(X = x)}{\P_{D_S}(X = x)}$, the distribution of $\omega_x(h)$ after agents' strategic responding becomes:
\begin{align}
   \omega_x(h) = 
    \begin{cases}
    1, \ \ & x\in [0, \tau_h - {B}) ~\text{and}~  x\in [\tau_h + B, 1]\\
    \frac{\tau_h - x}{B}, & x\in [\tau_h - {B} ,\tau_h)\\
    \frac{1}{B}(- x + \tau_h + 2B), & x\in [\tau_h, \tau_h + B)\\
     0,  & \text{otherwise}
    \end{cases}
\end{align}
\end{proof}

Proof for \Cref{proposition:bound-strategic-response}:
\begin{proof}
According to \Cref{lemma:strategic-classification-w_g(x)}, we can compute the variance of $w_x(h)$ as $\var(w_x(h)) = \E(w_x(h)^2) - \E(w_x(h)^2) = \frac{2}{3}B$. Then plugging it into the general bound for \Cref{thm:cs-upper-bound} gives us the desired result.
\end{proof}

\subsection{Proof of Theorem \ref{thm:ls:ub}}
\label{sec:proof-ls-ub}
\begin{proof}
Defining $p:= \P_{\D_S}(Y=+1)$, {$p(h) = \P_{\D(h)}(Y=+1)$}, we have
\begin{align}
    \err{\D(h^*_S)}{h^*_S} \nonumber
    &= {p(h^*_S)} \cdot \err{+}{h^*_S} + (1-{p(h^*_S)}) \cdot \err{-}{h^*_S} \tag{by definitions of ${p(h^*_S)}$, $\err{+}{h^*_S}$, and $\err{-}{h^*_S}$} \nonumber \\
    &= \underbrace{p \cdot \err{+}{h^*_S} + (1-p) \cdot \err{-}{h^*_S}}_{\text{(I)}} + ({p(h^*_S)} - p) [\err{+}{h^*_S} - \err{-}{h^*_S}] \label{eqn:ls-ub}
\end{align}
We can expand (I) as follows:
\begin{align*}
     & \ p \cdot \err{+}{h^*_S} + (1-p) \cdot \err{-}{h^*_S} \\
     \leq& \ p \cdot \err{+}{h^*_T} + (1-p) \cdot \err{-}{h^*_T} \tag{\text{by optimality of $h^*_S$ on $\D_S$}}\\
     =& \ {p(h^*_T)} \cdot \err{+}{h^*_T}  + (1-{p(h^*_T)}) \cdot \err{-}{h^*_T} + (p-{p(h^*_T)}) \cdot [\err{+}{h^*_T}  - \err{-}{h^*_T} ]\\
     =& \ \err{\D(h^*_T)}{h^*_T} + (p-{p(h^*_T)}) \cdot [\err{+}{h^*_T}  - \err{-}{h^*_T} ]~.
\end{align*}
Plugging this back into \eqref{eqn:ls-ub}, we have
\begin{align*}
    \err{\D(h^*_S)}{h^*_S}-\err{\D(h^*_T)}{h^*_T} \leq ({p(h^*_S)} - p) [\err{+}{h^*_S}  - \err{-}{h^*_S} ]+(p-{p(h^*_T)}) \cdot [\err{+}{h^*_T}  - \err{-}{h^*_T} ]
\end{align*}
Notice that
\begin{align*}
    0.5( \err{+}{h}  - \err{-}{h})
    =& \ 0.5 \cdot 1 - 0.5 \cdot \P(h(X)=+1|Y=+1) - 0.5 \cdot \P(h(X)=+1|Y=-1)\\
    =& \ 0.5 - \P_{\D_u}(h(X)=+1)
\end{align*}
where $\D_u$ is a distribution with a uniform prior. Then
\begin{align*}
    &({p(h^*_S)} - p) [\err{+}{h^*_S}  - \err{-}{h^*_S} ] =   2({p(h^*_S)} - p) \cdot (0.5-\P_{\D_u}(h(X)=+1))\\
        &(p-{p(h^*_T)}) [\err{+}{h^*_T}  - \err{-}{h^*_T} ]=  2(p-{p(h^*_T)}) \cdot (0.5-\P_{\D_u}(h(X)=+1))
\end{align*}
Adding together these two equations yields
\begin{align}
    & \ ({p(h^*_S)} - p) [\err{+}{h^*_S}  - \err{-}{h^*_S} ] + (p-{p(h^*_T)}) \cdot [\err{+}{h^*_T}  - \err{-}{h^*_T} ] \nonumber \\
    =& \ 2 ({p(h^*_S)} - p)\cdot \left( 0.5 - \P_{\D_u}(h^*_S(X)=+1) \right) + 2(p-{p(h^*_T)}) \cdot  \left( 0.5 - \P_{\D_u}(h^*_T(X)=+1) \right) \nonumber \\
    =& \ ({p(h^*_S)}-{p(h^*_T)}) -2\left({p(h^*_S)}\P_{\D_u}(h^*_S(X)=+1)- {p(h^*_T)}\P_{\D_u}(h^*_T(X)=+1)\right) \nonumber \\
        &\quad + 2p\cdot \left (\P_{\D_u}(h^*_S(X)=+1) - \P_{\D_u}(h^*_T(X)=+1)\right) \nonumber \\
    \leq& \ |{p(h^*_S)}-{p(h^*_T)}| \cdot \left (1+2|\P_{\D_u}(h^*_S(X)=+1) - \P_{\D_u}(h^*_T(X)=+1)|\right) \nonumber \\
        &\quad + 2p \cdot |\P_{\D_u}(h^*_S(X)=+1) - \P_{\D_u}(h^*_T(X)=+1)|
        \label{eqn:label-shift-ub-1}
\end{align}
Meanwhile,
\begin{align}
    & \ |\P_{\D_u}(h^*_S(X)=+1) - \P_{\D_u}(h^*_T(X)=+1)| \nonumber \\
    \leq& \ 0.5 \cdot |\P_{\D|Y=+1}(h^*_S(X)=+1) - \P_{\D|Y=+1}(h^*_T(X)=+1)| \nonumber \\
        &\quad + 0.5 \cdot |\P_{\D|Y=-1}(h^*_S(X)=+1) - \P_{\D|Y=-1}(h^*_T(X)=+1)| \nonumber \\
    =& \ 0.5 \left (\dtv(\D_+(h^*_S),\D_+(h^*_T)) + \dtv(\D_-(h^*_S),\D_-(h^*_T)\right)
        \label{eqn:label-shift-ub-2}
\end{align}
Combining \eqref{eqn:label-shift-ub-1} and \eqref{eqn:label-shift-ub-2} gives
\begin{align*}
    & \ |{p(h^*_S)} - {p(h^*_T)}| \cdot \left(1+2 \cdot |\P_{\D_u}(h^*_S(X)=+1) - \P_{\D_u}(h^*_T(X)=+1)|\right) \\
        &\quad + 2p \cdot |\P_{\D_u}(h^*_S(X)=+1) - \P_{\D_u}(h^*_T(X)=+1)| \\
    \leq& \ |{p(h^*_S)}-{p(h^*_T)}| \cdot (1+\dtv(\D_+(h^*_S),\D_+(h^*_T))+\dtv(\D_-(h^*_S),\D_-(h^*_T))\\
        &\quad + p \cdot (\dtv(\D_+(h^*_S),\D_+(h^*_T))+\dtv(\D_-(h^*_S),\D_-(h^*_T)) \\
    \leq& \ |{p(h^*_S)}-{p(h^*_T)}| + (1+p) \cdot \left(\dtv(\D_+(h^*_S),\D_+(h^*_T))+\dtv(\D_-(h^*_S),\D_-(h^*_T)\right)~.
\end{align*}
\end{proof}

\subsection{Proof of Theorem \ref{thm:ls:lb}}
\label{sec:proof-ls-lb}
We will make use of the following fact:
\begin{lemma} \label{lemma:label-shift-TPR-FPR}
Under label shift, $\TPR_S(h) = \TPR_h(h)$ and $\FPR_S(h) = \FPR_h(h)$.
\end{lemma}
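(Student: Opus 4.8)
The plan is to observe that both the true positive rate and the false positive rate are, by definition, probabilities conditioned on the label $Y$, and that the target shift assumption leaves every label-conditional feature distribution untouched. First I would write out the definitions explicitly: $\TPR_S(h) = \P_{\D_S}(h(X)=+1 \mid Y=+1)$ and $\TPR_h(h) = \P_{\D(h)}(h(X)=+1 \mid Y=+1)$, and likewise the two false positive rates with the conditioning event $Y=-1$.

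Next I would expand each conditional probability as an integral of $\Indicator[h(x)=+1]$ against the corresponding class-conditional feature density. For the true positive rate on the induced distribution this gives $\TPR_h(h) = \int \Indicator[h(x)=+1]\, \P_{\D(h)}(X=x \mid Y=+1)\, dx$. The decisive step is then to invoke the defining property of target shift, namely $\P_{\D(h)}(X=x \mid Y=y) = \P_{\D_S}(X=x \mid Y=y)$, which lets me replace the induced class-conditional density by the source class-conditional density inside the integral. The integral then collapses back to $\P_{\D_S}(h(X)=+1 \mid Y=+1) = \TPR_S(h)$. Repeating the identical argument with the conditioning event $Y=-1$ yields $\FPR_h(h) = \FPR_S(h)$.

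I do not anticipate a genuine obstacle here: the lemma is essentially a restatement of the target shift assumption once one recognizes that $h(X)$ is a deterministic function of $X$, so that conditioning on $Y$ and pushing forward through $h$ commutes with the assumed invariance of $\P(X \mid Y)$. The only thing to watch is purely notational --- keeping the two conditioning events straight, and noting that the label marginals $\P(Y)$, which are precisely what differ between $\D_S$ and $\D(h)$, never enter the computation because they are conditioned away. This lemma then feeds directly into the proof of \Cref{thm:ls:lb}, where it is used to re-express the divergence term $\dtv(\D_{h\vert S}, \D_h(h))$ purely in terms of $\TPR_S(h)$ and $\FPR_S(h)$.
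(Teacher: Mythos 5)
Your proof is correct and takes essentially the same route as the paper's: both expand $\TPR_h(h)$ (and analogously $\FPR_h(h)$) as an integral of $\Indicator[h(x)=+1]$ against the class-conditional density $\P_{\D(h)}(X=x \mid Y=y)$, replace that density with $\P_{\D_S}(X=x \mid Y=y)$ by the target shift assumption, and collapse back to the source-distribution rate. Your closing observation --- that the label marginals, which are the only quantities that change under target shift, are conditioned away and hence never enter --- is exactly the reason the paper's computation goes through.
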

\begin{proof}
We have
\begin{align*}
    \TPR_h(h) =& \P_{\D(h)}(h(X) = +1|Y = +1)\\
    =& \int \P_{\D(h)}(h(X) = +1, X= x|Y = +1)dx\\
    =& \int \P_{\D(h)}(h(X) = +1| X= x,Y = +1) \P_{\D(h)}(X= x | Y = +1)dx\\
    =& \int \Indicator(h(x) = +1) \P_{\D(h)}(X= x | Y = +1)dx\\
    =& \int \Indicator(h(x) = +1) \P_{\D_S}(X= x | Y = +1)dx \tag{by definition of label shift}\\
    =& \int \P_{\D_S}(h(X) = +1| X= x,Y = +1) \P_{\D_S}(X= x | Y = +1)dx\\
    =& \TPR_S(h)
\end{align*}
The argument for $\TPR_h(h) = \TPR_S(h)$ is analogous.
\end{proof}

Now we proceed to prove the theorem.
\begin{proof}[Proof of Theorem \ref{thm:ls:lb}]
In section \ref{sec:lower-bound} we showed a general lower bound on the maximum of $\err{\D_S}{h}$ and $\err{\D(h)}{h}$:
\begin{align*}
    \max\{\err{\D_S}{h}, \err{\D(h)}{h} \} \geq \frac{\dtv(\D_{Y \vert S},\D_{Y}(h)) - \dtv(\D_{h \vert S}, \D_{h}(h))}{2}
\end{align*}
In the case of label shift, and by the definitions of $p$ and ${p(h)}$,
\begin{align}
    \dtv(\D_{Y \vert S},\D_{Y}(h)) 
    = |\P_{\D_S}(Y = +1) - \P_{\D(h)}(Y = +1)|
    = |p-{p(h)}| \label{eqn:dtv-ls-Y}
\end{align}

In addition, we have
\begin{align}
\label{eqn:ls-dhs}
    \D_{h \vert S} = \P_S(h(X)=+1) = p \cdot \TPR_S(h) + (1-p) \cdot \FPR_S(h)
\end{align}
Similarly
\begin{align}
    \D_{h}(h) &= \P_{\D(h)}(h(X)=+1) \nonumber \\
    &= {p(h)} \cdot \TPR_{h}(h) + (1-{p(h)}) \cdot \FPR_{h}(h) \nonumber \\
    &= {p(h)} \cdot \TPR_{S}(h) + (1-{p(h)}) \cdot \FPR_{S}(h)
        && \text{(by Lemma \ref{lemma:label-shift-TPR-FPR})}
        \label{eqn:ls-dhh}
\end{align}
Therefore 
\begin{align}
    \dtv(\D_{h \vert S}, \D_{h}(h)) 
    =& |\P_{\D_S}(h(X)=+1) - \P_{\D(h)}(h(X)=+1)| \nonumber \\
    =&  |(p - {p(h)}) \cdot \TPR_S(h) + ({p(h)}-p) \cdot \FPR_S(h)| \tag{By \eqref{eqn:ls-dhh} and \eqref{eqn:ls-dhs}}\nonumber \\
    =& |p-{p(h)}| \cdot |\TPR_S(h)-\FPR_S(h)| \label{eqn:dtv-ls-h}
\end{align}
which yields:
\begin{align}
    \dtv(\D_{Y \vert S},\D_{Y}(h)) -  \dtv(\D_{h \vert S}, \D_{h}(h)) =|p-{p(h)}|(1-|\TPR_S(h)-\FPR_S(h)|) \tag{By \eqref{eqn:dtv-ls-Y} and \eqref{eqn:dtv-ls-h}}
\end{align}
completing the proof. 
\end{proof}

\subsection{Proof of Proposition \ref{prop:ls:rd}}
\label{sec:proof-ls-rd}
\begin{proof}
\begin{align}
    &|{p(h^*_S)}-{p(h^*_T)}|  \cdot \frac{1}{ \P_{\D_S}(Y=+1)}\nonumber \\
= & \frac{|(1-\err{\D_S}{h^*_S })\TPR_S(h^*_S)-(1-\err{\D_S}{h^*_T})\TPR_S(h^*_T)| }{(1-\err{\D_S}{h^*_S}) \cdot (1-\err{\D_S}{h^*_T})}\nonumber \\
    \leq & \frac{|\err{\D_S}{h^*_S }-\err{\D_S}{h^*_T}| \cdot |\TPR_S(h^*_S)-\TPR_S(h^*_T)|}{(1-\err{\D_S}{h^*_S}) \cdot (1-\err{\D_S}{h^*_T})} \label{eqn:omega:transfer}
\end{align}
The inequality above is due to Lemma 7  of \cite{liu2015online}. 
\end{proof}

\section{Lower Bound and Example for Target Shift}
\label{sec:ts:appendix}
\subsection{Lower Bound}

Now we discuss lower bounds. Denote by $\TPR_S(h)$ and $\FPR_S(h)$ the true positive and false positive rates of $h$ on the source distribution $\D_S$. We prove the following:
\begin{theorem}
Under target shift, any model $h$ must incur the following error on either the $\D_S$ or $\D(h)$:
\begin{align*}
    & \max\{\err{\D_S}{h}, \err{\D(h)}{h} \}\\
    \geq &
\frac{|p-{p(h)}|\cdot (1-|\TPR_S(h)-\FPR_S(h)|)}{2}  .  
\end{align*}

\end{theorem}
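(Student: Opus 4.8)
The plan is to specialize the general lower bound of \Cref{thm:lb} to the target shift setting by evaluating its two total-variation terms explicitly. Since \Cref{thm:lb} already gives $\max\{\err{\D_S}{h},\err{\D(h)}{h}\}\geq \tfrac{1}{2}\big(\dtv(\D_{Y\vert S},\D_Y(h))-\dtv(\D_{h\vert S},\D_h(h))\big)$, it suffices to show the numerator equals $|p-p(h)|\cdot(1-|\TPR_S(h)-\FPR_S(h)|)$. First I would handle the label term: because target shift changes only the marginal of $Y$ from $p$ to $p(h)$, the definition of $\dtv$ for a binary variable gives $\dtv(\D_{Y\vert S},\D_Y(h))=|p-p(h)|$ immediately.

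Next I would expand the prediction term as $\dtv(\D_{h\vert S},\D_h(h))=|\P_{\D_S}(h(X)=+1)-\P_{\D(h)}(h(X)=+1)|$ and decompose each acceptance rate by conditioning on $Y$. On the source we get $\P_{\D_S}(h(X)=+1)=p\,\TPR_S(h)+(1-p)\,\FPR_S(h)$, and on the induced distribution $\P_{\D(h)}(h(X)=+1)=p(h)\,\TPR_h(h)+(1-p(h))\,\FPR_h(h)$, where the subscript records on which distribution the rates are measured.

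The crux, and the step I expect to be the main obstacle, is to establish that the true- and false-positive rates are invariant under target shift, i.e.\ $\TPR_h(h)=\TPR_S(h)$ and $\FPR_h(h)=\FPR_S(h)$. This is exactly where the structural assumption does the work: since $h$ is deterministic, $\P(h(X)=+1\mid X=x,Y=y)=\Indicator(h(x)=+1)$ on either distribution, and since $\P(X\mid Y)$ is held fixed under target shift, integrating this indicator against the \emph{invariant} class-conditional density yields the same rate whether we are on $\D_S$ or on $\D(h)$. Granting this, the induced acceptance rate becomes $p(h)\,\TPR_S(h)+(1-p(h))\,\FPR_S(h)$, so the two acceptance rates share common $\TPR_S(h),\FPR_S(h)$ and differ only through $p$ versus $p(h)$; subtracting then gives $\dtv(\D_{h\vert S},\D_h(h))=|p-p(h)|\cdot|\TPR_S(h)-\FPR_S(h)|$.

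Finally I would assemble the numerator as $\dtv(\D_{Y\vert S},\D_Y(h))-\dtv(\D_{h\vert S},\D_h(h))=|p-p(h)|\big(1-|\TPR_S(h)-\FPR_S(h)|\big)$ and divide by $2$ to conclude. As a sanity check I would note that the bound is nonnegative, since $|\TPR_S(h)-\FPR_S(h)|\leq 1$, so it is never vacuous.
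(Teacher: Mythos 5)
Your proposal is correct and follows essentially the same route as the paper's own proof: you specialize \Cref{thm:lb}, evaluate $\dtv(\D_{Y\vert S},\D_Y(h))=|p-p(h)|$, and prove the invariance $\TPR_h(h)=\TPR_S(h)$, $\FPR_h(h)=\FPR_S(h)$ (the paper's \Cref{lemma:label-shift-TPR-FPR}) by the identical argument of integrating the deterministic indicator $\Indicator(h(x)=+1)$ against the class-conditional density fixed by target shift, then subtract the two acceptance-rate decompositions to obtain $\dtv(\D_{h\vert S},\D_h(h))=|p-p(h)|\cdot|\TPR_S(h)-\FPR_S(h)|$. You correctly identified the TPR/FPR invariance as the crux, and your assembly of the final bound matches the paper step for step.
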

The proof extends the bound of Theorem \ref{thm:lb} by further explicating each of $\dtv(\D_{Y \vert S},\D_Y(h))$, $\dtv(\D_{h|S}$, and $\D_h(h))$ under the assumption of target shift. Since $|\TPR_S(h)-\FPR_S(h)| < 0$ unless we have a trivial classifier that has either $\TPR_S(h)= 1, \FPR_S(h)=0$ or $\TPR_S(h)= 0, \FPR_S(h)=1$, the lower bound is strictly positive. Taking a closer look, the lower bound is determined linearly by how much the label distribution shifts: $p-{p(h)}$. The difference  is further determined by the performance of $h$ on the source distribution through $1-|\TPR_S(h)-\FPR_S(h)|$. For instance, when $\TPR_S(h)>\FPR_S(h)$, the quality becomes $\text{FNR}_S(h)+\FPR_S(h)$, that is the more error $h$ makes, the larger the lower bound will be.

\subsection{Example Using Replicator Dynamics}

Let us instantiate the discussion using a specific fitness function for the replicator dynamics model (Section \ref{sec:example}), which is the prediction accuracy of $h$ for class $+1$:
\begin{align}
\left[\textbf{Fitness of~}Y=+1\right] := \P_{\D_S}(h(X)=+1|Y=+1)
\label{eqn_supp:fitness}
\end{align}
Then we have $\E\left[\textbf{Fitness of~}Y\right] = \err{\D_S}{h}$, and
\begin{align*}
   \frac{ {p(h)}}{\P_{\D_S}(Y=+1)} = \frac{\P_{\D_S}(h(X)=+1|Y=+1)}{\err{\D_S}{h}} 
\end{align*}
Plugging the result back to our Theorem \ref{thm:ls:ub} we have
\begin{proposition}
Under the replicator dynamics model in Eqn. (\ref{eqn_supp:fitness}), $ |{p(h^*_S)}-{p(h^*_T)}|$ further bounds as:
\begin{align*}
  &   |{p(h^*_S)}-{p(h^*_T)}|\leq \P_{\D_S}(Y=+1) \\
    \cdot &\frac{|\err{\D_S}{h^*_S }-\err{\D_S}{h^*_T}| \cdot |\TPR_S(h^*_S)-\TPR_S(h^*_T)|}{\err{\D_S}{h^*_S } \cdot \err{\D_S}{h^*_T}}.
\end{align*}


\end{proposition}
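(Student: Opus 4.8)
The plan is to eliminate the opaque leading term of \Cref{thm:ls:ub}. That bound controls the induced-risk gap by $|\omega(h^*_S)-\omega(h^*_T)| = |p(h^*_S)-p(h^*_T)|$ (both symbols denote $\P_{\D(h)}(Y=+1)$) plus total-variation terms, so the proposition's only job is to re-express this single scalar difference in quantities that the decision maker can read off from the source distribution. First I would specialize the replicator-dynamics fitness of \Cref{eqn_supp:fitness}: writing $\textbf{Fitness}(Y=+1)=\P_{\D_S}(h(X)=+1\mid Y=+1)=\TPR_S(h)$ and using the population-average fitness identity recorded just above the statement, the replicator update collapses to
\[
\frac{p(h)}{\P_{\D_S}(Y=+1)}=\frac{\TPR_S(h)}{\err{\D_S}{h}},
\]
so that $p(h)$ is pinned down entirely by the two source-computable scalars $\TPR_S(h)$ and $\err{\D_S}{h}$.

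Next I would instantiate this identity at $h^*_S$ and $h^*_T$ and place the difference over a common denominator, giving
\[
\frac{|p(h^*_S)-p(h^*_T)|}{\P_{\D_S}(Y=+1)}=\frac{\big|\TPR_S(h^*_S)\,\err{\D_S}{h^*_T}-\TPR_S(h^*_T)\,\err{\D_S}{h^*_S}\big|}{\err{\D_S}{h^*_S}\cdot\err{\D_S}{h^*_T}}.
\]
The remaining move is to bound the cross term in the numerator by the product $|\err{\D_S}{h^*_S}-\err{\D_S}{h^*_T}|\cdot|\TPR_S(h^*_S)-\TPR_S(h^*_T)|$; this factorization is exactly what Lemma 7 of \cite{liu2015online} supplies, and substituting it and rearranging yields the claimed bound.

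I expect this last step to be the main obstacle. The naive estimate splits the difference of ratios additively,
\[
\frac{\TPR_S(h^*_S)}{\err{\D_S}{h^*_S}}-\frac{\TPR_S(h^*_T)}{\err{\D_S}{h^*_T}}=\frac{\TPR_S(h^*_S)-\TPR_S(h^*_T)}{\err{\D_S}{h^*_S}}+\TPR_S(h^*_T)\cdot\frac{\err{\D_S}{h^*_T}-\err{\D_S}{h^*_S}}{\err{\D_S}{h^*_S}\,\err{\D_S}{h^*_T}},
\]
which produces a \emph{sum} of two terms rather than the clean \emph{product} in the statement. The product form is moreover false for arbitrary scalars: if the two true-positive rates coincide while the errors differ, the right-hand side vanishes but the left-hand cross term need not. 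The factorization must therefore exploit the structural coupling between $\TPR_S(\cdot)$ and $\err{\D_S}{\cdot}$ for admissible classifiers that Lemma 7 encodes, and the delicate part of the argument is checking that the source-optimal and target-optimal classifiers $h^*_S,h^*_T$ meet the hypotheses under which that lemma applies.
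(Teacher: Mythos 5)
Your proposal follows essentially the same route as the paper's own proof, which likewise uses the replicator identity to write $|p(h^*_S)-p(h^*_T)|/\P_{\D_S}(Y=+1)$ as a single fraction over the common denominator and then obtains the product bound $|\err{\D_S}{h^*_S}-\err{\D_S}{h^*_T}|\cdot|\TPR_S(h^*_S)-\TPR_S(h^*_T)|$ in one step attributed to Lemma~7 of \cite{liu2015online}. Your caveat about the final step is well taken but does not put you behind the paper: the published proof also invokes that lemma without verifying its hypotheses (and your scalar counterexample --- equal true-positive rates, unequal errors --- shows the product inequality cannot hold unconditionally), so your argument sits at exactly the same level of rigor as the paper's.
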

That is, the difference between $\err{\D(h^*_S)}{h^*_S}$ and $\err{\D(h^*_T)}{h^*_T}$ is further dependent on the difference between the two classifiers' performances on the source data $\D_S$. This offers an opportunity to evaluate the possible error transferability using the source data only. 

\section{Missing Experimental Details}
\label{sec:experiment-details}

\subsection{Synthetic Experiments Using DAG}
\label{sec:synthetic-experiment-details}
Here we provide details in terms of the data-generating process for the simulated dataset.

\paragraph{Covariate Shift}
We specify the causal DAG for covariate shift setting in the following way:
\begin{align*}
    X_1 &\sim \text{Unif}(-1, 1)\\
    X_2&\sim 1.2 X_1 + \mathcal{N}(0, \sigma_2^2)\\
    X_3 &\sim - X_1^2 + \mathcal{N}(0, \sigma_3^2)\\
    Y& := 2\text{sign}(X_2 > 0) - 1
\end{align*}
where $\sigma_2^2$ and $\sigma_3^2$ are parameters of our choices. \\
\textit{Adaptation function}~
We assume the new distribution of feature $X_1'$ will be generated in the following way:
\begin{align*}
     X_1' = \Delta(X) = X_1 + c\cdot (h(X) - 1)
\end{align*}
where $c\in \R^1 > 0$ is the parameter controlling how much the prediction $h(X)$ affect the generating of $X_1'$, namely the magnitude of distribution shift.  
Intuitively, this adaptation function means that if a feature $x$ is predicted to be positive ($h(x) = +1$), then decision subjects are more likely to adapt to that feature in the induced distribution; 
Otherwise, decision subjects are more likely to be moving away from $x$ since they know it will lead to a negative prediction.

\paragraph{Target Shift} We specify the causal DAG for target shift setting in the following way:
\begin{align*}
    (Y+1)/2 &\sim \text{Bernoulli}(\alpha)\\
    X_1|Y = y &\sim \mathcal{N}_{[0,1]}(\mu_y, \sigma^2) \\
    X_2 &= -0.8 X_1 +  \mathcal{N}(0,\sigma_2^2)\\
    X_3 &= 0.2 Y +  \mathcal{N}(0,\sigma_3^2) 
\end{align*}
where $\mathcal{N}_{[0,1]}$ represents a truncated Gaussian distribution taken value between 0 and 1. $\alpha$, $\mu_y$, $\sigma^2$,$\sigma_2^2$ and $\sigma^2_3$ are parameters of our choices. \\
\textit{Adaptation function}~ We assume the new distribution of the qualification $Y'$ will be updated in the following way:
\begin{align*}
    \P(Y' = +1 |h(X) = h, Y = y) = c_{hy}, ~ \text{where} ~  \{h, y\}\in \{-1,+1\}
\end{align*}
where $0 \leq c_{hy}\in \R^1 \leq 1$ represents the likelihood for a person with original qualification $Y = y$ and get predicted as $h(X) = h$ to be qualified in the next step ($Y' = +1$).

\subsection{Synthetic Experiments Using Real-world Data}
On the preprocessed FICO credit score data set \citep{board2007report,hardt2016equality}, we convert the cumulative distribution function (CDF) of TransRisk score among demographic groups (denoted as $A$, including Black, Asian, Hispanic, and White) into group-dependent densities of the credit score.
We then generate a balanced sample where each group has equal representation, with credit scores (denoted as $Q$) initialized by sampling from the corresponding group-dependent density.
The value of attributes for each data point is then updated under a specified dynamics (detailed in \Cref{sec:appendix_dynamic_parameters}) to model the real-world scenario of repeated resource allocation (with decision denoted as $D$).

\subsubsection{Parameters for Dynamics}\label{sec:appendix_dynamic_parameters}
Since we are considering the dynamic setting, we further specify the data generating process in the following way (from time step $T = t$ to $T = t + 1$):
\begin{equation*}
    \begin{split}
        X_{t, 1} &\sim 1.5 Q_t + U[-\epsilon_1, \epsilon_1] \\
        X_{t, 2} &\sim 0.8 A_t + U[-\epsilon_2, \epsilon_2] \\
        X_{t, 3} &\sim A_t + \mathcal{N}(0, \sigma^2) \\
        Y_t &\sim \mathrm{Bernoulli}(q_t) \text{ for a given value of } Q_t = q_t \\
        D_t &= f_t(A_t, X_{t, 1}, X_{t, 2}, X_{t, 3}) \\
        Q_{t + 1} &= \{ Q_t \cdot [ 1 + \alpha_D(D_t) + \alpha_Y(Y_t) ] \}_{(0, 1]} \\
        A_{t + 1} &= A_t \text{ (fixed population)}
    \end{split}
\end{equation*}
where $\{\cdot\}_{(0, 1]}$ represents truncated value between the interval $(0, 1]$, $f_t(\cdot)$ represents the decision policy from input features, and $\epsilon_1, \epsilon_2, \sigma$ are parameters of choices.
{In our experiments, we set $\epsilon_1 = \epsilon_2 = \sigma = 0.1$.}

Within the same time step, i.e., for variables that share the subscript $t$, $Q_t$ and $A_t$ are root causes for all other variables ($X_{t, 1}, X_{t, 2}, X_{t, 3}, D_t, Y_t$).
{At each time step $T = t$, the institution first estimates the credit score $Q_t$ (which is not directly visible to the institution, but is reflected in the visible outcome label $Y_t$) based on $(A_t, X_{t, 1}, X_{t, 2}, X_{t, 3})$, then produces the binary decision $D_t$ according to the optimal threshold (in terms of the accuracy).}

{For different time steps, e.g., from $T = t$ to $T = t + 1$, the new distribution at $T = t + 1$ is induced by the deployment of the decision policy $D_t$.
Such impact is modeled by a multiplicative update in $Q_{t+1}$ from $Q_t$ with parameters (or functions) $\alpha_D(\cdot)$ and $\alpha_Y(\cdot)$ that depend on $D_t$ and $Y_t$, respectively.
In our experiments, we set $\alpha_D = 0.01$ and $\alpha_Y = 0.005$ to capture the scenario where one-step influence of the decision on the credit score is stronger than that for ground truth label.
}

\begin{figure*}[t]
    \centering
    \captionsetup{format=hang}
    \begin{subfigure}{.49\textwidth}
        \centering
        \includegraphics[width=.9\linewidth]{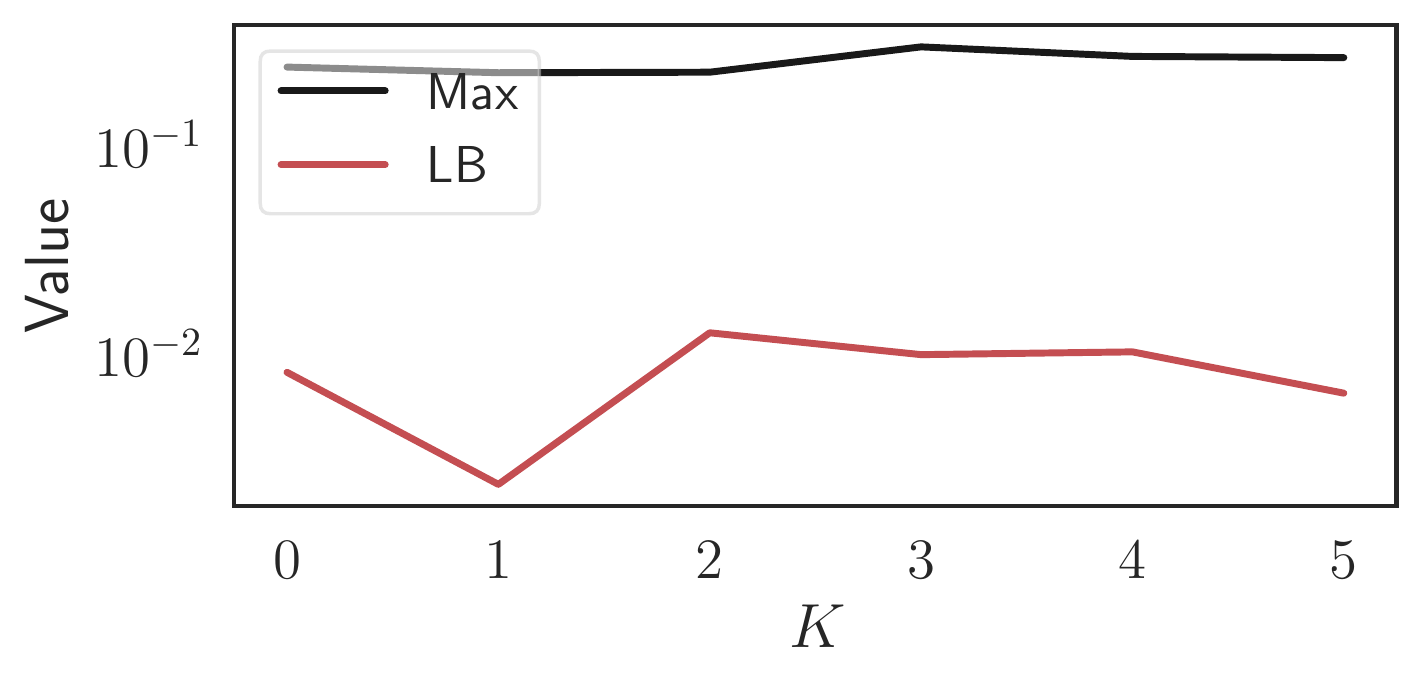}
        \caption{$L1$ penalty, strong regularization strength.}
        \label{fig:appendix_additional_exps_l1_strong_LB}
    \end{subfigure}
    \begin{subfigure}{.49\textwidth}
        \centering
        \includegraphics[width=.9\linewidth]{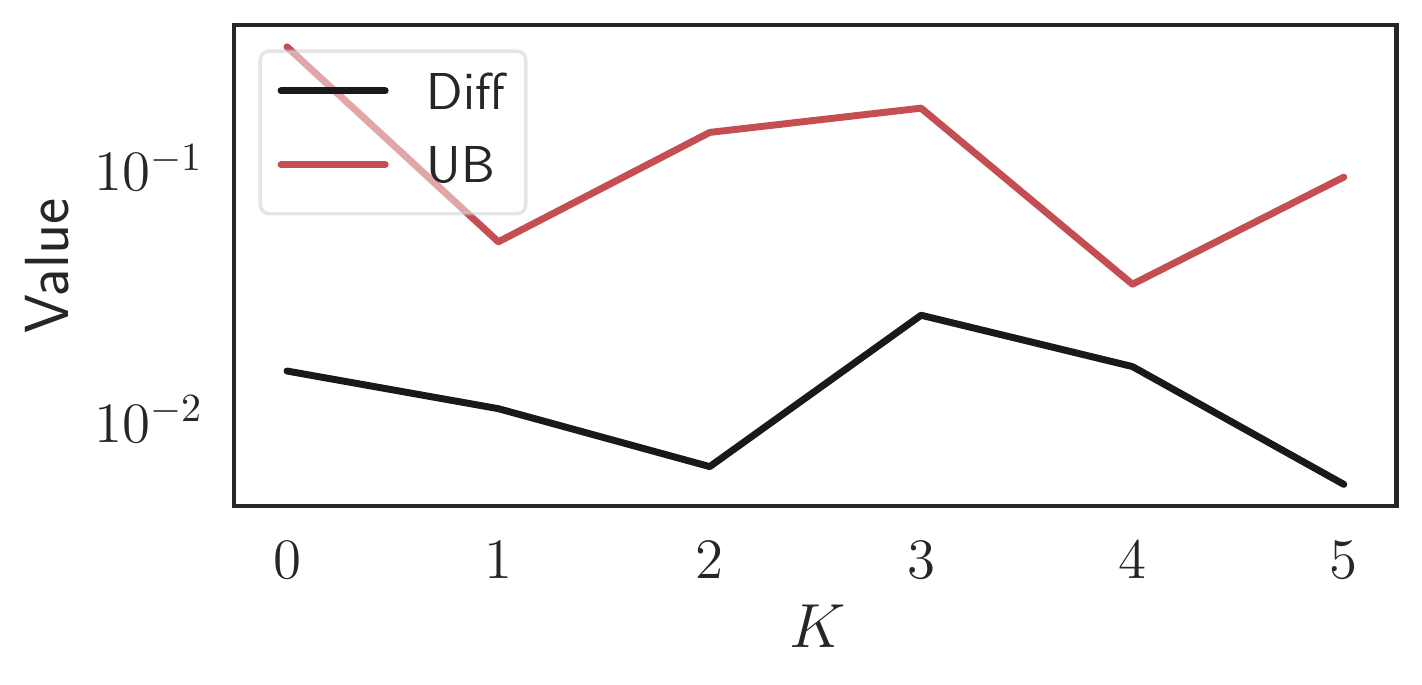}
        \caption{$L1$ penalty, strong regularization strength.}
        \label{fig:appendix_additional_exps_l1_strong_UB}
    \end{subfigure}
    \newline
    \begin{subfigure}{.49\textwidth}
        \centering
        \includegraphics[width=.9\linewidth]{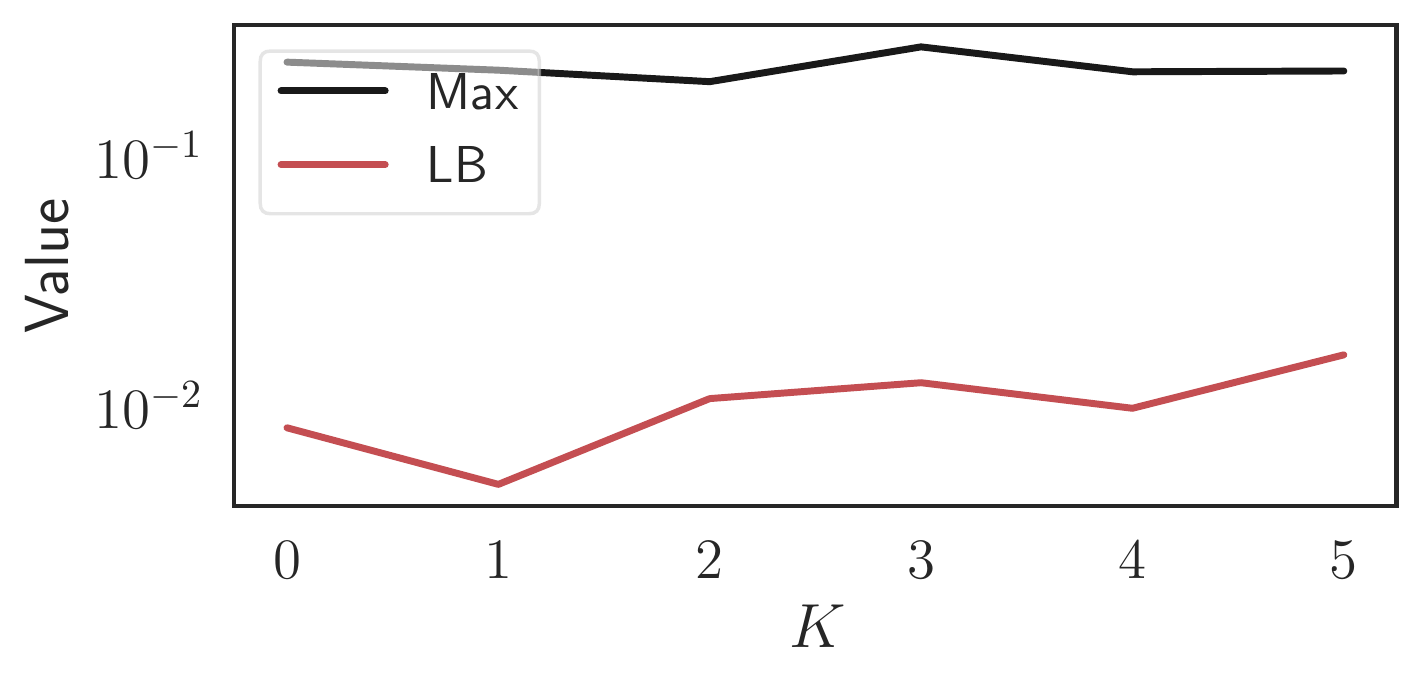}
        \caption{$L1$ penalty, medium regularization strength.}
        \label{fig:appendix_additional_exps_l1_medium_LB}
    \end{subfigure}
    \begin{subfigure}{.49\textwidth}
        \centering
        \includegraphics[width=.9\linewidth]{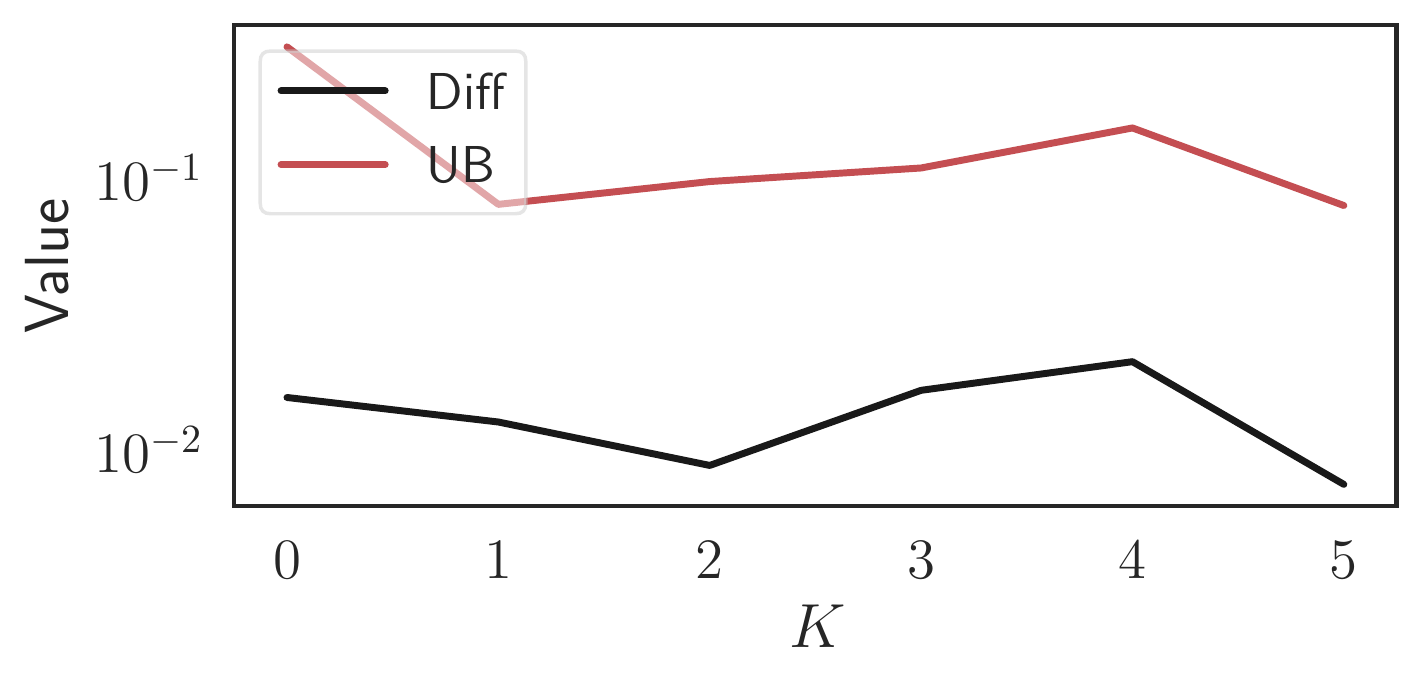}
        \caption{$L1$ penalty, medium regularization strength.}
        \label{fig:appendix_additional_exps_l1_medium_UB}
    \end{subfigure}
    \newline
    \begin{subfigure}{.49\textwidth}
        \centering
        \includegraphics[width=.9\linewidth]{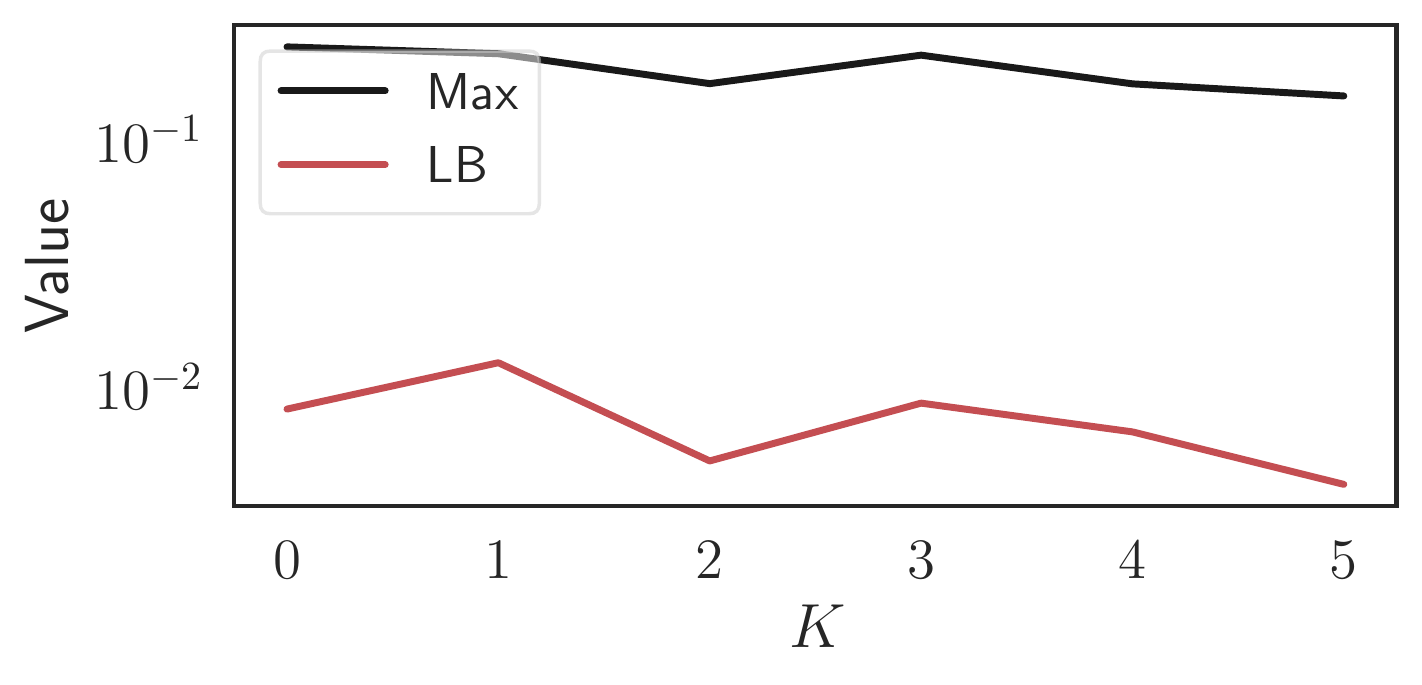}
        \caption{$L1$ penalty, weak regularization strength.}
        \label{fig:appendix_additional_exps_l1_weak_LB}
    \end{subfigure}
    \begin{subfigure}{.49\textwidth}
        \centering
        \includegraphics[width=.9\linewidth]{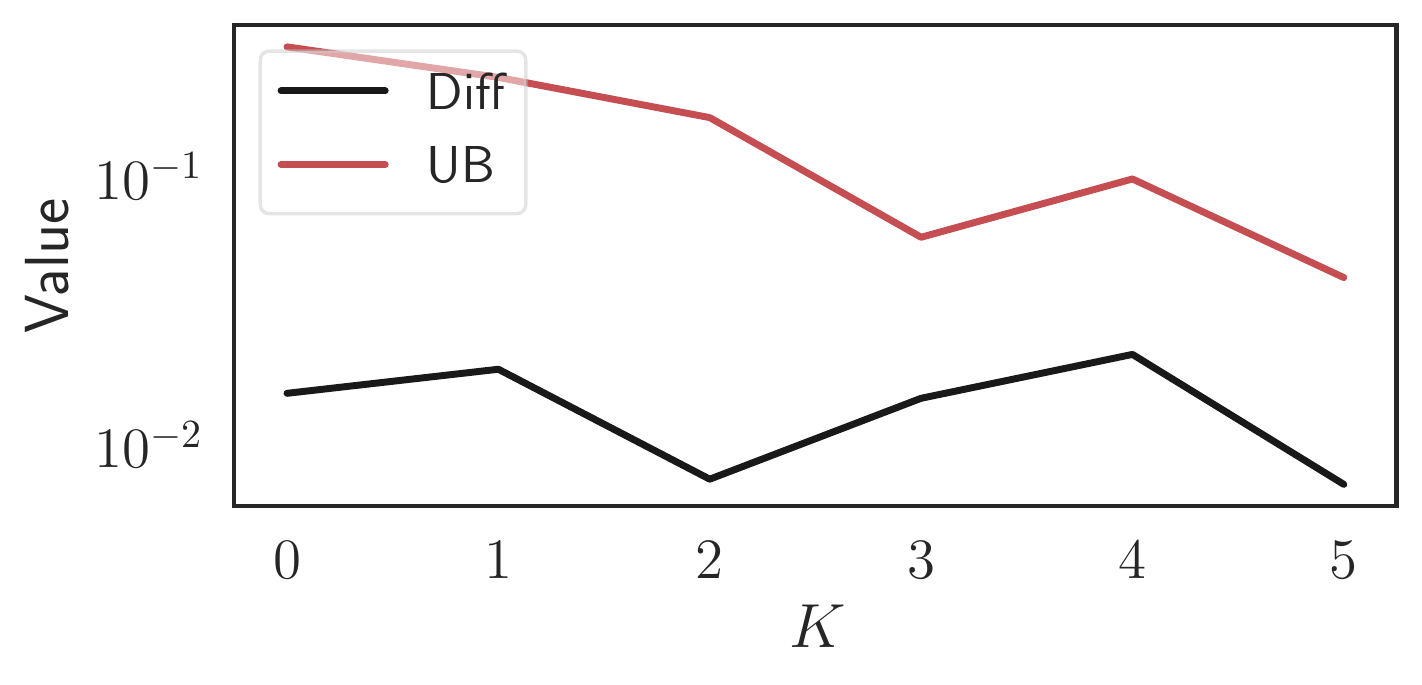}
        \caption{$L1$ penalty, weak regularization strength.}
        \label{fig:appendix_additional_exps_l1_weak_UB}
    \end{subfigure}
    \caption{
    Results of applying $L1$ penalty with different strength when constructing $h_S^*$.
    The left column consisting of panels (a), (c), and (e) compares $\textsf{Max} := \max\{\err{\D_S}{h^*_T}, \err{\D(h^*_T)}{h^*_T}\}$ and $\textsf{LB}:=$ lower bound specified in \Cref{thm:cs-lower-bound}.
    The right column consisting of panels (b), (d), and (f) compares $\textsf{Diff}:= \err{\D(h^*_S)}{h^*_S} - \err{\D(h^*_T)}{h^*_T}$ and $\textsf{UB}:=$ upper bound specified in \Cref{thm:cs-upper-bound}.
    For each time step $K = k$, we compute and deploy the source optimal classifier $h^*_S$ and update the credit score for each individual according to the received decision as the new reality for time step $K = k + 1$.
    }
    \label{fig:appendix_l1_penalty}
\end{figure*}

{\subsubsection{Additional Experimental Results}\label{sec:appendix_additional_exps}
In this section, we present additional experimental results on the real-world FICO credit score data set.
With the initialization of the distribution of credit score $Q$ and the specified dynamics, we present results comparing the influence of vanilla regularization terms in decision-making (when estimating the credit score $Q$) on the calculation of bounds for induced risks.\footnote{The regularization that involves induced risk considerations will be discussed in \Cref{sec:reguarlized-training}.}
In particular, we consider $L1$ norm (\Cref{fig:appendix_l1_penalty}) and $L2$ norm (\Cref{fig:appendix_l2_penalty}) regularization terms when optimizing decision-making policies on the source domain.
As we can see from the results, applying vanilla regularization terms (e.g., $L1$ norm and $L2$ norm) on source domain without specific considerations of the inducing-risk mechanism does not provide significant performance improvement in terms of smaller induced risk.
For example, there is no significant decrease of the term \textsf{Diff} as the regularization strength increases, for both $L1$ norm (\Cref{fig:appendix_l1_penalty}) and $L2$ norm (\Cref{fig:appendix_l2_penalty}) regularization terms.
}

\begin{figure*}[t]
    \centering
    \captionsetup{format=hang}
    \begin{subfigure}{.49\textwidth}
        \centering
        \includegraphics[width=.9\linewidth]{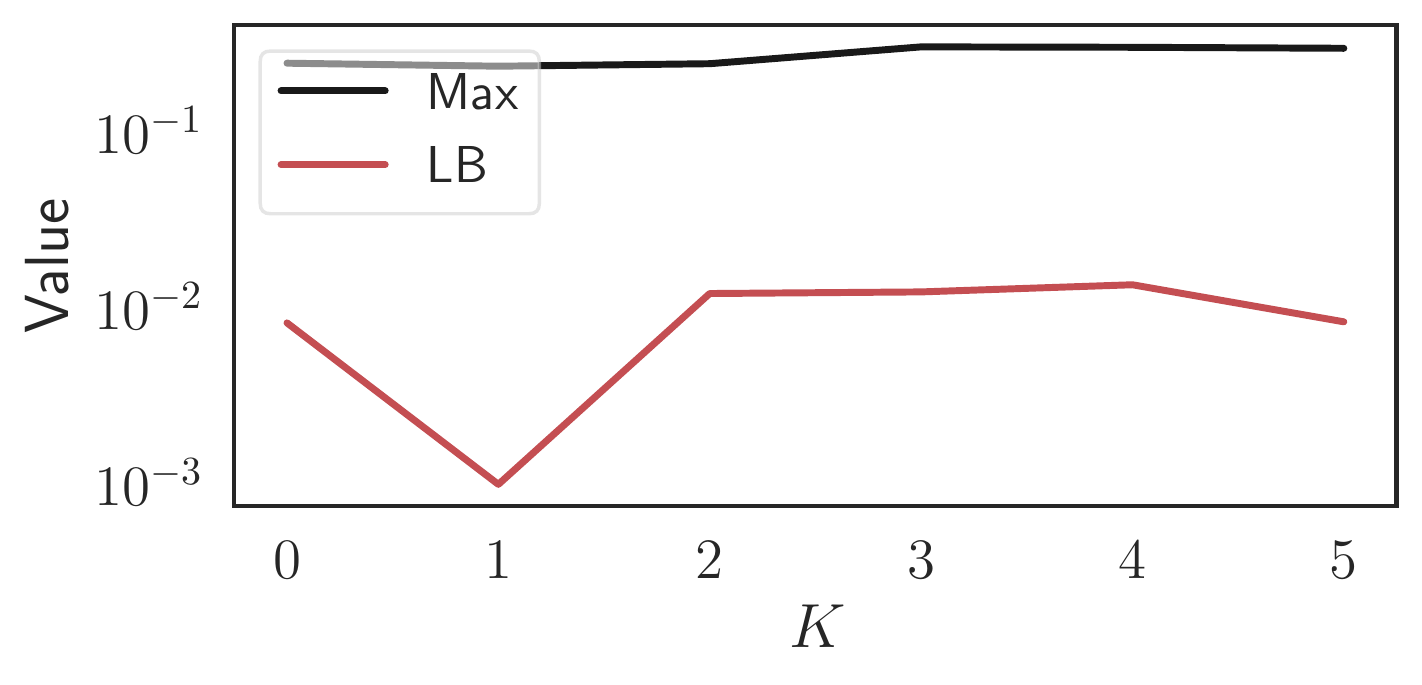}
        \caption{$L2$ penalty, strong regularization strength.}
        \label{fig:appendix_additional_exps_l2_strong_LB}
    \end{subfigure}
    \begin{subfigure}{.49\textwidth}
        \centering
        \includegraphics[width=.9\linewidth]{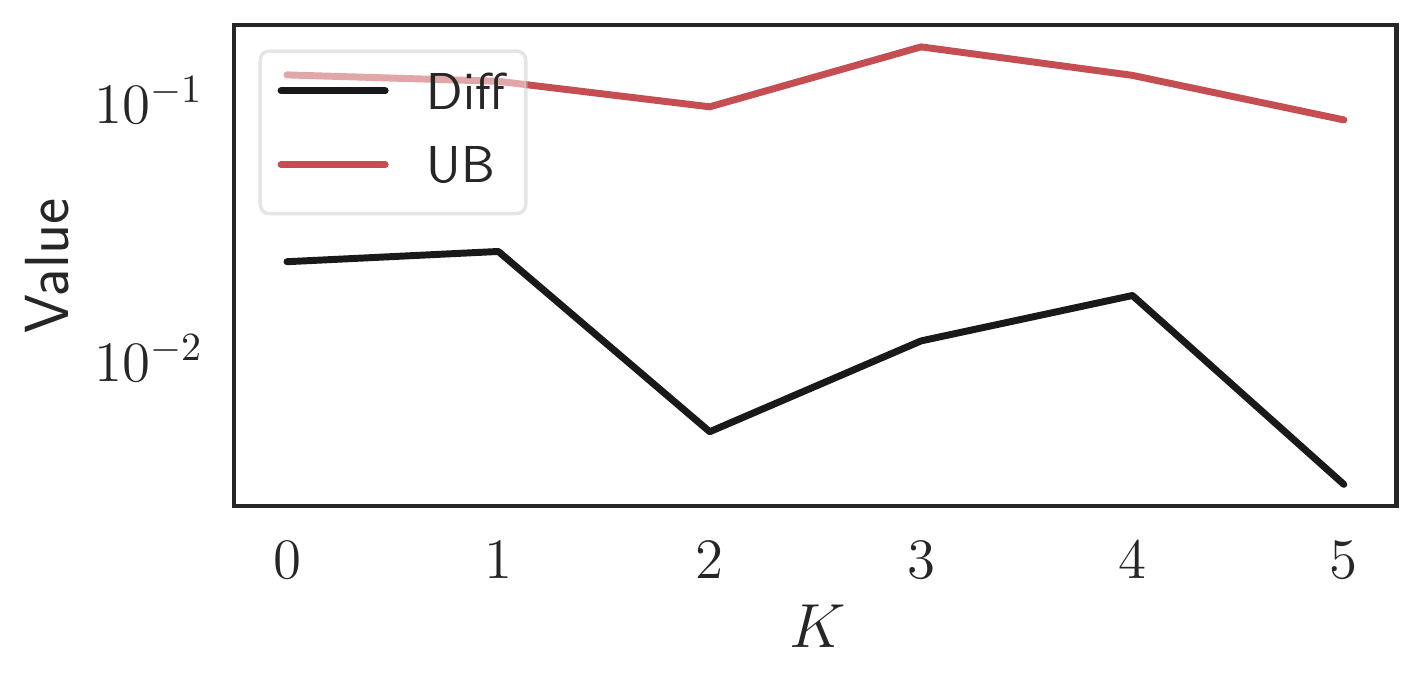}
        \caption{$L2$ penalty, strong regularization strength.}
        \label{fig:appendix_additional_exps_l2_strong_UB}
    \end{subfigure}
    \newline
    \begin{subfigure}{.49\textwidth}
        \centering
        \includegraphics[width=.9\linewidth]{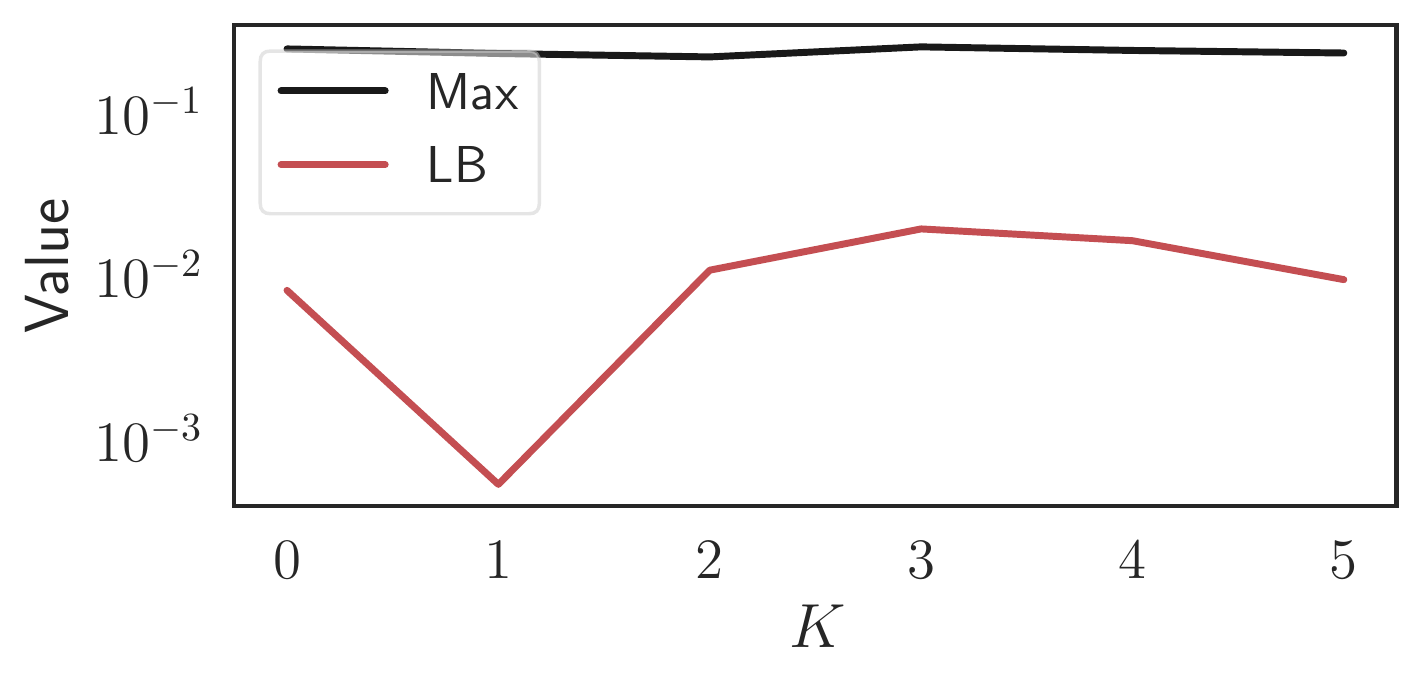}
        \caption{$L2$ penalty, medium regularization strength.}
        \label{fig:appendix_additional_exps_l2_medium_LB}
    \end{subfigure}
    \begin{subfigure}{.49\textwidth}
        \centering
        \includegraphics[width=.9\linewidth]{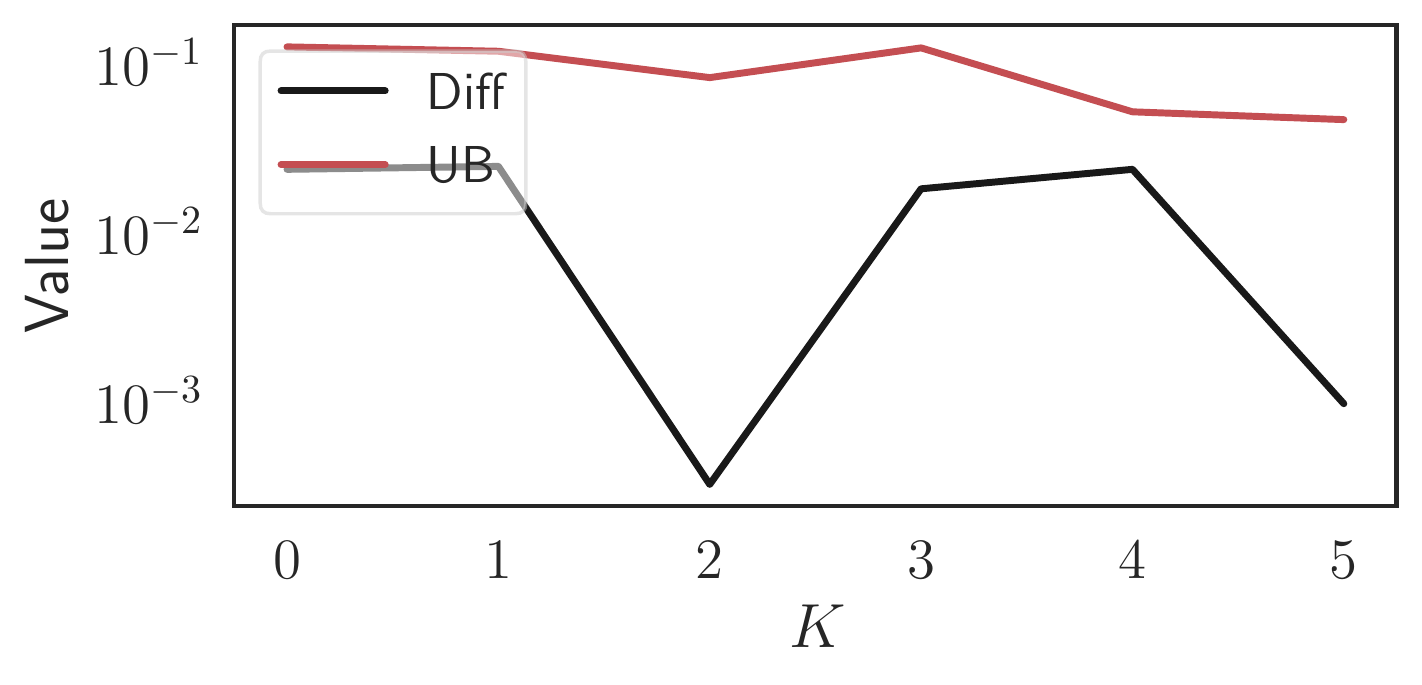}
        \caption{$L2$ penalty, medium regularization strength.}
        \label{fig:appendix_additional_exps_l2_medium_UB}
    \end{subfigure}
    \newline
    \begin{subfigure}{.49\textwidth}
        \centering
        \includegraphics[width=.9\linewidth]{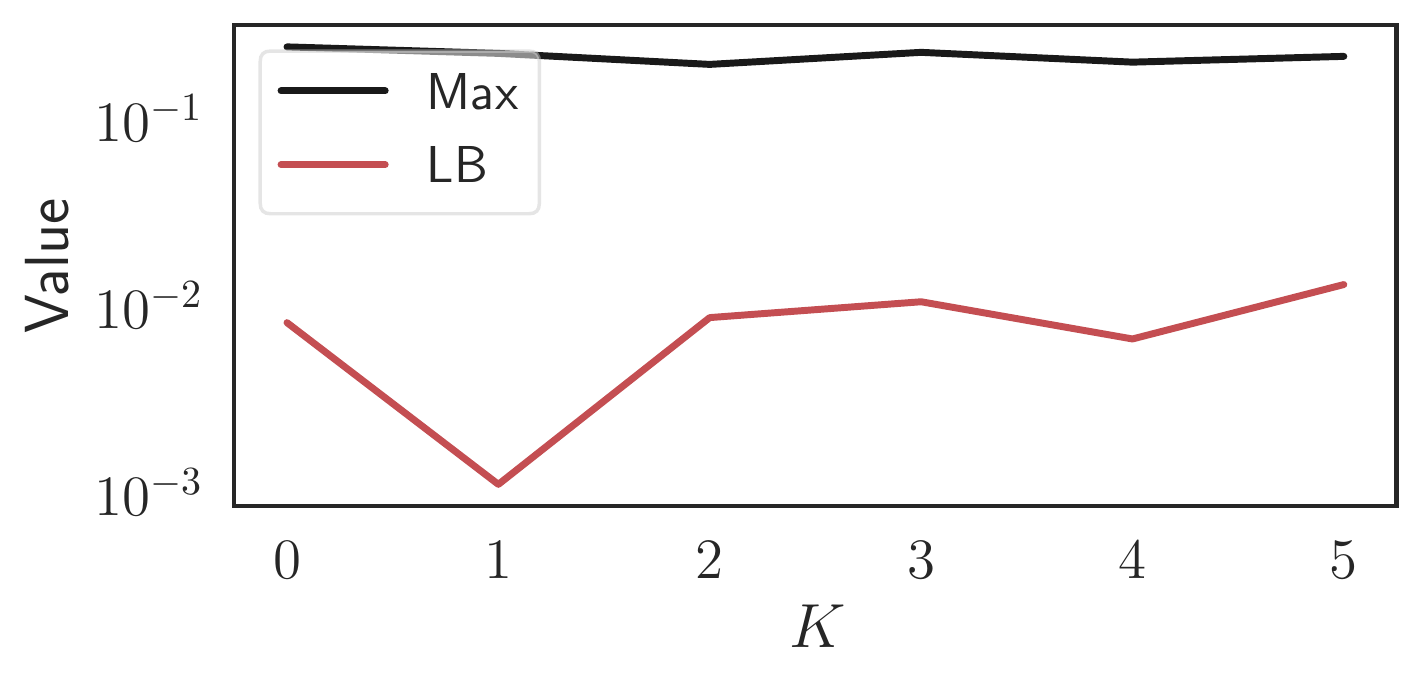}
        \caption{$L2$ penalty, weak regularization strength.}
        \label{fig:appendix_additional_exps_l2_weak_LB}
    \end{subfigure}
    \begin{subfigure}{.49\textwidth}
        \centering
        \includegraphics[width=.9\linewidth]{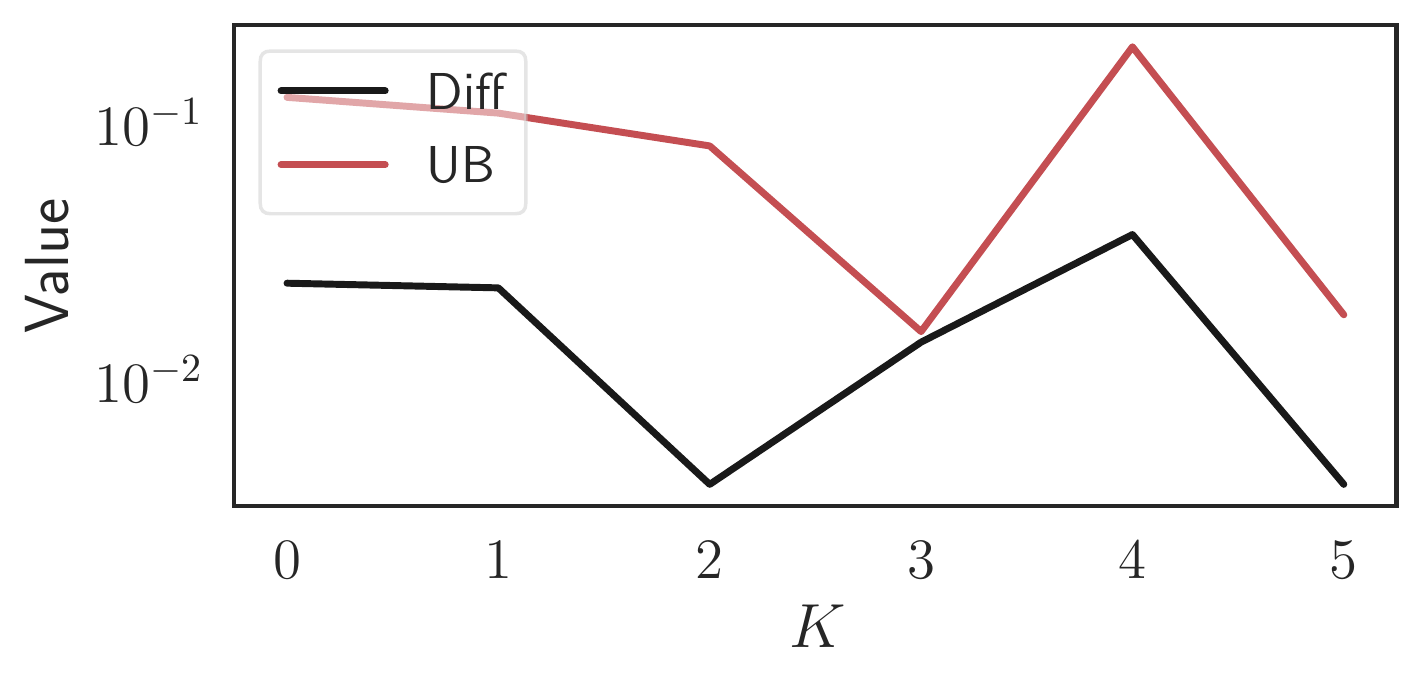}
        \caption{$L2$ penalty, weak regularization strength.}
        \label{fig:appendix_additional_exps_l2_weak_UB}
    \end{subfigure}
    \caption{
    Results of applying $L2$ penalty with different strength when constructing $h_S^*$.
    The left column consisting of panels (a), (c), and (e) compares $\textsf{Max} := \max\{\err{\D_S}{h^*_T}, \err{\D(h^*_T)}{h^*_T}\}$ and $\textsf{LB}:=$ lower bound specified in \Cref{thm:cs-lower-bound}.
    The right column consisting of panels (b), (d), and (f) compares $\textsf{Diff}:= \err{\D(h^*_S)}{h^*_S} - \err{\D(h^*_T)}{h^*_T}$ and $\textsf{UB}:=$ upper bound specified in \Cref{thm:cs-upper-bound}.
    For each time step $K = k$, we compute and deploy the source optimal classifier $h^*_S$ and update the credit score for each individual according to the received decision as the new reality for time step $K = k + 1$.
    }
    \label{fig:appendix_l2_penalty}
\end{figure*}

\section{Challenges in Minimizing Induced Risk}
\label{sec:computation}
In this section, we provide discussion on the challenges in performing induced domain adaptation. 
\subsection{Computational Challenges}
The literature of domain adaptation has provided us solutions to minimize the risk on the target distribution via a nicely developed set of results \cite{sugiyama2008direct,sugiyama2007covariate,shimodaira2000improving}. This allows us to extend the solutions  to minimize the induced risk too. Nonetheless we will highlight additional computational challenges.  

We focus on the covariate shift setting. The scenario for target shift is similar. For covariate shift, recall that earlier we derived the following fact:
\[
      \E_{\D(h)}[\ell(h;X,Y)] =\E_{\D}[\omega_x(h) \cdot \ell(h;x,y)]
\]
This formula informs us that a promising solution that uses $\omega_x(h) $ to perform reweighted ERM. Of course, the primary challenge that stands in the way is how do we know $\omega_x(h)$.
There are different methods proposed in the literature to estimate $\omega_x(h)$ when one has access to $\D(h)$ \cite{zhang2013domain,long2016unsupervised,gong2016domain}. How any of the specific techniques work in our induced domain adaptation setting will be left for a more thorough future study. 
In this section, we focus on explaining the computational challenges even when such knowledge of $\omega_x(h)$ can be obtained for each model $h$ being considered during training.

Though $\omega_x(h),\ell(h;x,y)$ might both be convex with respect to (the output of) the classifier $h$, their product is not necessarily convex. Consider the following example:
\begin{example}[$\omega_x(h) \cdot \ell(h;x,y)$ is generally non-convex]
Let $\mathcal{X} = (0, 1]$. Let the true label of each $x\in \mathcal{X}$ be $y(x) = \Indicator\left(x\geq \frac{1}{2}\right)$. Let $\ell(h;x, y) = \frac{1}{2}(h(x) - y)^2$, and let $h(x) = x$ (simple linear model). Notice that $\ell$ is convex in $h$. Let $\D$ be the uniform distribution, whose density function is
$
    f_\D = \begin{cases}
        1, & 0 < x \leq 1 \\
        0, & \text{otherwise}
    \end{cases}
$.
Notice that if the training data is drawn from $\D$, then $h$ is the linear classifier that minimizes the expected loss. Suppose that, since $h$ rewards large values of $x$, it induces decision subjects to shift towards higher feature values. In particular, let $\D(h)$ have density function
\begin{align*}
    f_{\D(h)} = \begin{cases}
        2x, & 0 < x \leq 1 \\
        0,  & \text{otherwise}
    \end{cases}
\end{align*}
Then for all $x \in \mathcal{X}$,
$
    \omega_{x}(h)
    = \frac{f_{\D(h)}(x)}{f_{\mathcal{D}}(x)}
    = 2x.~
$
Notice that $\omega_{x}(h) = 2x$ is convex in $h(x) = x$. Then
\begin{align*}
    \omega_{x}(h) \cdot \ell(h ; x, y)
    &= 2x \cdot \frac{1}{2}(h(x)-y)^{2} \\
    &= x(x-y)^2 = \begin{cases}
        x^3,        & 0 < x < \frac{1}{2} \\
        x(x-1)^2,   & \frac{1}{2} \leq x \leq 1
    \end{cases}
\end{align*}
which is clearly non-convex.
\end{example}

Nonetheless, we provide sufficient conditions under which $\omega_x(h) \cdot \ell(h;x,y)$ is in fact convex:
\begin{proposition}
\label{prop:convex}
Suppose $\omega_x(h)$ and $\ell(h;x,y)$ are both convex in $h$, and $\omega_x(h)$ and $\ell(h;x,y)$ satisfy $ \forall h,h',x,y$: $ (\omega_{x}(h) - \omega_{x}(h'))
        \cdot (\ell(h;x,y) - \ell(h';x,y))
    \geq 0$.
Then $\omega_x(h) \cdot \ell(h;x,y)$ is convex.
\end{proposition}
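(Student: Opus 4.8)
The plan is to verify the defining inequality of convexity directly along an arbitrary segment, reducing the product to a clean two-variable algebraic identity. Fix a point $(x,y)$ and two hypotheses $h_0,h_1$, and for $\lambda\in[0,1]$ write $h_\lambda:=(1-\lambda)h_0+\lambda h_1$. To lighten notation, set $\omega_i:=\omega_x(h_i)$ and $\ell_i:=\ell(h_i;x,y)$ for $i\in\{0,1\}$, together with the convex combinations $\bar\omega:=(1-\lambda)\omega_0+\lambda\omega_1$ and $\bar\ell:=(1-\lambda)\ell_0+\lambda\ell_1$. The goal is to show $\omega_x(h_\lambda)\,\ell(h_\lambda;x,y)\le(1-\lambda)\omega_0\ell_0+\lambda\omega_1\ell_1$.

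First I would use convexity of each factor together with their nonnegativity. Since both the importance weight and the loss are nonnegative quantities, convexity gives $0\le\omega_x(h_\lambda)\le\bar\omega$ and $0\le\ell(h_\lambda;x,y)\le\bar\ell$; because all four quantities are nonnegative, multiplying the two bounds yields $\omega_x(h_\lambda)\,\ell(h_\lambda;x,y)\le\bar\omega\,\bar\ell$. This step is where nonnegativity is essential --- without it the product of the upper bounds need not dominate the product of the values --- and I would flag that $\omega_x(h)\ge0$ (a ratio of densities) and $\ell\ge0$ (a loss) hold by construction in our setting.

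Next I would reduce everything to a single cross-term. Expanding $\bar\omega\,\bar\ell$ and subtracting it from the target right-hand side, the pure-square contributions recombine and one is left with the identity
\begin{equation*}
(1-\lambda)\omega_0\ell_0+\lambda\omega_1\ell_1-\bar\omega\,\bar\ell
=\lambda(1-\lambda)\,(\omega_0-\omega_1)(\ell_0-\ell_1).
\end{equation*}
The comonotonicity hypothesis applied to $h_0,h_1$ states exactly that $(\omega_x(h_0)-\omega_x(h_1))(\ell(h_0;x,y)-\ell(h_1;x,y))\ge0$, so the right-hand side above is nonnegative; hence $\bar\omega\,\bar\ell\le(1-\lambda)\omega_0\ell_0+\lambda\omega_1\ell_1$. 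Chaining this with the bound from the previous step gives the desired inequality, establishing convexity of $\omega_x(h)\cdot\ell(h;x,y)$.

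The main obstacle is not the algebra --- the cross-term identity is routine --- but rather making sure the two ingredients are used in the right order: the convexity-plus-nonnegativity step produces the intermediate quantity $\bar\omega\,\bar\ell$, and only the comonotonicity hypothesis closes the gap between $\bar\omega\,\bar\ell$ and the true convex combination of the products. I would also remark that the hypothesis cannot simply be dropped: the counterexample preceding the proposition has $\omega_x(h)$ increasing while $\ell$ is decreasing over part of the domain, violating comonotonicity, which is precisely why the product there fails to be convex.
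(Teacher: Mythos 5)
Your proof is correct and follows essentially the same route as the paper's: bound each factor at the interpolated hypothesis by the corresponding convex combination, then close the remaining gap between $\bar\omega\,\bar\ell$ and the convex combination of the products via the cross-term identity $\lambda(1-\lambda)(\omega_0-\omega_1)(\ell_0-\ell_1)\ge 0$ supplied by the comonotonicity hypothesis. If anything, your write-up is slightly more careful than the paper's, since you explicitly flag the nonnegativity of $\omega_x(h)$ and $\ell$ needed to multiply the two upper bounds, a step the paper's proof uses only implicitly.
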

\begin{proof}
Let us use the shorthand ${\omega(h)} := \omega_x(h)$ and $\ell(h) := \ell(h;x,y)$. To show that $\omega(h) \cdot \ell(h)$ is convex, it suffices to show that for any $\alpha \in [0,1]$ and any two hypotheses $h,h'$ we have
\begin{align*}
    \omega(\alpha \cdot h+(1-\alpha)\cdot h') \cdot \ell(\alpha \cdot h+(1-\alpha)\cdot h') \leq \alpha \cdot \omega(h)  \cdot \ell(h) + (1-\alpha) \cdot \omega(h') \cdot \ell(h')
\end{align*}
By the convexity of $\omega$,
\begin{align*}
    \omega(\alpha \cdot h+(1-\alpha)\cdot h')
    &\leq \alpha \cdot\omega(h)+ (1-\alpha) \cdot \omega(h')
\end{align*}
and by the convexity of $\ell$,
\begin{align*}
    \ell(\alpha \cdot h+(1-\alpha)\cdot h')
    &\leq \alpha \cdot \ell(h) +(1-\alpha)  \cdot \ell(h')
\end{align*}
Therefore it suffices to show that
\begin{align*}
    &\left[\alpha \cdot \omega(h) + (1-\alpha) \cdot \omega(h')\right] \cdot \left[\alpha \cdot \ell(h) + (1-\alpha) \cdot \ell(h')\right]
        - \alpha \cdot \omega(h)\cdot \ell(h) + (1-\alpha) \cdot \omega(h')\cdot \ell(h')
    \leq 0 \\
   & \Leftrightarrow \alpha(\alpha-1) \cdot \omega(h)\ell(h)
        - \alpha(\alpha-1) \cdot [\omega(h)\ell(h') + \omega(h')\ell(h)]
        + \alpha(\alpha-1) \cdot \omega(h')\ell(h')
    \leq 0 \\
   &\Leftrightarrow \alpha(\alpha-1) \cdot [\omega(h) - \omega(h')] \cdot [\ell(h) - \ell(h')]
    \leq 0 \\
    &\Leftrightarrow [\omega(h) - \omega(h')] \cdot [\ell(h) - \ell(h')]
    \geq 0
\end{align*}
By the assumed condition, the left-hand side is indeed non-negative, which proves the claim.
\end{proof}
 This condition is intuitive when each  $x$ belongs to a rational agent who responds to a classifier $h$ to maximize her chance of being classified as $+1$:  For $y=+1$, the higher loss point corresponds to the ones that are close to decision boundary, therefore, more $-1$ negative label points might shift to it, resulting to a larger $\omega_{x}(h)$.
  For $y=-1$, the higher loss point corresponds to the ones that are likely mis-classified as +1, which ``attracts" instances to deviate to. 
  
\subsection{Challenges due to the lack of access to data}\label{sec:conclude}
In the standard domain adaptation settings, one often assumes the access to a sample set of $X$, which already poses challenges when there is no access to label $Y$ after the adaptation. Nonetheless, the literature has observed a fruitful development of solutions \cite{sugiyama2008direct,zhang2013domain,gong2016domain}. 

One might think the above idea can be applied to our IDA setting rather straightforwardly by assuming observing samples from $\D(h)$, the induced distribution under each model $h$ during the training. However, we often do not know precisely how the distribution would shift under a model $h$ until we deploy it. This is particularly true when the distribution shifts are caused by human responding to a model.
Therefore, the ability to ``predict" accurately how samples ``react" to $h$ plays a very important role \cite{ustun2019actionable}. Indeed, the strategic classification literature enables this capability by assuming full rational human agents. For a more general setting,  building robust domain adaptation tools that are resistant to the above ``prediction error" is also going to be a crucial criterion.

\section{Discussions On Performing Direct Induced Risk Minimization}
\label{sec:discussion-minimize-ida}
In this section, we provide discussions on how to directly perform induced risk minimization for our induced domain adaptation setting. We first provide a gradient descent based method for a particular label shift setting where the underlying dynamic is replicator dynamic described in \Cref{sec:replicator-dynamics}. Then we propose a solution for a more general induced domain adaptation setting where we do not make any particular assumptions on the undelying distribution shift model.
\subsection{Gradient descent based method}
Here we provide a toy example of performing direct induced risk minimization under the assumption of label shift with underlying dynamics as the replicator dynamics described in \Cref{sec:replicator-dynamics}. 


\paragraph{Setting} Consider a simple setting in which each decision subject is associated with a 1-dimensional continuous feature $x\in \R$ and a binary true qualification $y\in \{-1, +1\}$. We assume label shift setting, and the underlying population dynamic evolves the replicator dynamic setting described in \Cref{sec:replicator-dynamics}. 
We consider a simple threshold classifier, where $\hat{Y} = h(x) = 1[X \geq \theta]$, meaning that the classifier is completely characterized by the threshold parameter $\theta$. Below we will use $\hat{Y}$ and $h(X)$ interchangeably to represent the classification outcome. Recall that the replicator dynamics is specified as follows:
\begin{align}
\label{eqn:replicator}
    \frac{\P_{\D(h)}(Y = y)}{\P_{\D_S}(Y = y)} = \frac{\textbf{Fitness}(Y = y)}{\E_{\D_S}[\textbf{Fitness}(Y)]}
\end{align}
where $\E_{\D_S}[\textbf{Fitness}(Y)] 
    = \textbf{Fitness}(Y = y) \P_{\D_S}(Y = y) + \textbf{Fitness}(Y = -y) (1 - \P_{\D_S}(Y = y))$.
 $\textbf{Fitness}(Y = y)$ is the fitness of strategy $Y = y$, which is further defined in terms of the expected utility $U_{y, \hat{y}}$ of each qualification-classification outcome pair $(y, \hat{y})$:
    \begin{align*}
        \textbf{Fitness}(Y = y):= \sum_{\hat{y}} \P[\hat{Y} = \hat{y} | Y = y] \cdot U_{y, \hat{y}}    
    \end{align*}
where $U_{y, \hat{y}}$ is the utility (or reward) for each qualification-classification outcome combination.$\P(X|Y = y)$ is sampled according to a Gaussian distribution, and will be unchanged since we consider a label shift setting.

We initialize the distributions we specify the initial qualification rate $\P_{\D_S}(Y = +1)$. To test different settings, we vary the specification of the utility matrix $U_{y, \hat{y}}$ and generate different dynamics.

\paragraph{Formulate the induced risk as a function of $h$}

To minimize the induced risk, we first formulate the induced risk as a function of the classifier $h$’s parameter $\theta$ taking into account of the underlying dynamic, and then perform gradient descent to solve for locally optimal classifier $h_T^*$.  

Recall from \Cref{sec:ls}, under label shift, we can rewrite the induced risk as the following form:
\begin{align*}
    \E_{\D(h)}[\ell(h;X,Y)]
    =& {p(h)} \cdot \E_{\D_S}[\ell(h;X,Y)|Y=+1]  + (1-{p(h)}) \cdot \E_{\D_S}[\ell(h;X,Y)|Y=-1] 
\end{align*}
where $p(h) = \P_{\D(h)}(Y = +1)$. 

\begin{figure}[t]
    \centering
    \includegraphics[width=80mm]{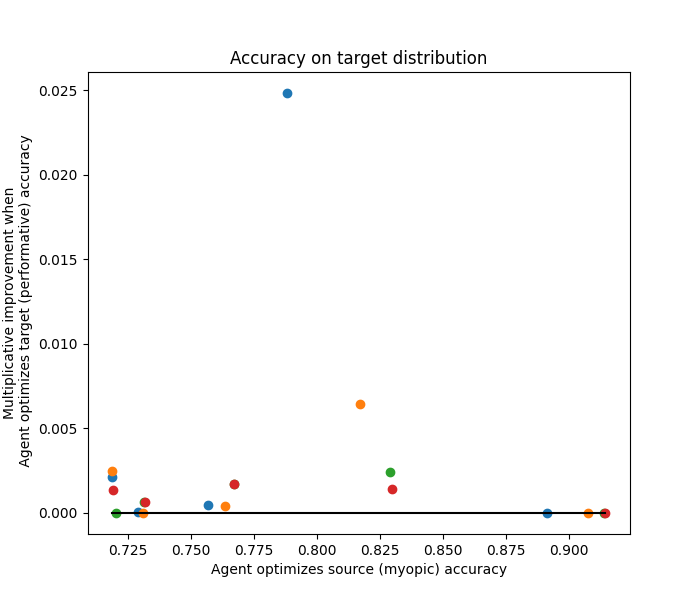}
    \caption{Experimental results of directly optimizing for the induced risk under the assumption of replicator dynamic. The X-axis denotes the prediction accuracy of $\err{D(h^*_S)}{h^*_S}$, where $h^*_S$ is the source optimal classifier under each settings. 
The Y-axis is the percent of performance improvement using the classifier that optimize for $h^*_T = \argmin \err{D(h)}{h}$, which the decision maker considers the underlying response dynamics (according to replicator dynamics in \Cref{eqn:replicator}) of the decision subjects. Different color represents different utility function, which is reflected by the specifications of values in $U_{y, \hat{y}}$; within each color, different dots represent different initial qualification rate. }
    \label{fig:gd_based_IRM}
\end{figure}

Since $\E_{\D_S}[\ell(h;X,Y)|Y=+1]$ and $\E_{\D_S}[\ell(h;X,Y)|Y=-1]$ are already functions of both $h$ and $\D_S$, it suffices to show that the accuracy on $\D(h)$, $p(h) = \P_{\D(h)}(Y = +1)$, can also be expressed as a function of $\theta$ and $\D_S$. 

To see this, recall that for a threshold classifier $\hat{Y} = 1[X > \theta]$, it means that the prediction accuracy can be written as a function of the threshold $\theta$ and target distribution $\D(h)$:
    \begin{align}
    \label{eqn: expression-p(h)}
        & ~~~~ \P_{\D(h)}(Y = +1) \nonumber \\
        &= \P_{\D(h)}(\hat{Y} = +1, Y = +1 ) +\P_{\D(h)}(\hat{Y} = -1, Y = -1 ) \nonumber \\
        &= \P_{\D(h)}( X \geq \theta, Y = +1 ) +\P_{\D(h)}( X \leq \theta,  Y = -1 )  \nonumber \\
        & = \int_{\theta}^\infty \P_{\D(h)} (Y = +1) \underbrace{\P(X = x| Y = 1)}_{\text{unchanged because of label shift}}dx \nonumber \\
        & \ \ \ \ + 
        \int_{-\infty}^\theta \P_{\D(h)} (Y = -1) \underbrace{\P(X = x| Y = -1)}_{\text{unchanged because of label shift}} dx 
    \end{align} 
    where $\P(X | Y = y)$ remains unchanged over time, and $\P_{\D(h)}(Y = y )$ evolves over time according to \Cref{eqn:replicator}, namely
    \begin{align}
        &\P_{\D(h)} (Y = y) \nonumber \\
        =& {\P_{\D_S}(Y = y)} \times \frac{\textbf{Fitness}_g(Y = y)}{\E_{\D_S}[\textbf{Fitness}_g(Y)]}\nonumber \\
        =& {\P_{\D_S}(Y = y)} \times \frac{\sum_{\hat{y}} \P_{\D_S}[\hat{Y} = \hat{y} | Y = y, G = g]\cdot U_{\hat{y}, y}}{\sum_{y} (\sum_{\hat{y}} \P_{\D_S}[\hat{Y} = \hat{y} | Y = y, G = g]\cdot U_{\hat{y}, y} )\P_{\D_S}[Y = y]}
    \end{align}

Notice that $\hat{Y}$ is only a function of $\theta$, and $U_{y, \hat{y}}$ are fixed quantities, the above derivation indicates that we can express $\P_{\D(h)} (Y = y)$ as a function of $\theta$ and $\D_S$. Plugging it back to \Cref{eqn: expression-p(h)}, we can see that the accuracy can also be expressed as a function of the classifier's parameter $\theta$, indicating that the induced risk can be expressed as a function of $\theta$. Thus we can use gradient descent using automatic differentiation w.r.t $\theta$ to find a optimal classifier $h_T^*$ that minimize the induced risk.

\paragraph{Experimental Results}
\Cref{fig:gd_based_IRM} shows the experimental results for this toy example. We can see that for each setting, compared to the baseline classifier $h^*_S$, the proposed gradient based optimization procedure returns us a classifier that achieves a better prediction accuracy (thus lower induced risk) compared to the accuracy of the source optimal classifier.


\subsection{General Setting: Induced Risk Minimization with Bandit Feedback}

In general, finding the optimal classifier that achieves the optimal induced risk $h^*_T$ is a hard problem due to the interactive nature of the problem (see, e.g. the literature of performative prediction \cite{perdomo2020performative} for more detailed discussions).
Without making any assumptions on the mapping between $h$ and $\D(h)$, one can only potentially rely on the \emph{bandit feedbacks} from the decision subjects to estimate the influence of $h$ on $\D(h)$:   
when the induced risk is a convex function of the classifier $h$’s parameter $\theta$, one possible approach is to use the standard techniques from bandit optimization \cite{flaxman2005online} to iteratively find induced optimal classifier $h^*_T$.
The basic idea is: at each step $t = 1, \cdots, T$, the decision maker deploy a classifier $h_t$, then observe data points sampled from $\D(h_t)$ and their losses, and use them to construct an approximate gradient for the induced risk as a function of the model parameter $\theta_t$. When the induced risk is a convex function in the model parameter $\theta$, the above approach guarantees to converge to $h_T^*$, and have sublinear regret in the total number of steps $T$.

The detailed description of the algorithm for finding $h_T^*$ is as follows:

\begin{algorithm}[H]
\caption{One-point bandit gradient descent for performative prediction}
\label{alg:one-point-gradient-descent-for-PP}
\SetAlgoLined
\KwResult{return $\theta_T$ after $T$ rounds}
$\theta_1 \gets 0$ \\
\ForEach{time step $t \gets 1,\ldots,T$}{
    Sample a unit vector $u_t \sim \mathrm{Unif}(\mathbf{S})$ \\
    $\theta_t^+ \gets \theta_t + \delta u_t$ \\
    Observe data points $z_1, \ldots, z_{n_t} \sim \D(\theta_t^+)$ \\
    $\widetilde{\IR}(\theta_t^+) \gets \frac{1}{n_t}\sum_{i=1}^{n_t} \ell(z_i; \theta_t^+)$ \\
    $\tilde{g}_t(\theta_t) \gets \frac{d}{\delta} \widetilde{\IR}(\theta_t^+) \cdot u_t$
        \Comment*[r]{$\tilde{g}_t(\theta_t)$ is an approximation of $\nabla_\theta \widehat{\IR}(\theta_t)$}
    $\theta_{t+1} \gets \Pi_{(1-\delta)\Theta}(\theta_{t} - \eta \tilde{g}_t(\theta_t))$
        \Comment*[r]{Take gradient step; project onto $(1-\delta)\Theta := \{(1-\delta)\theta ~|~ \theta \in \Theta\}$}
}
\end{algorithm}
%

\section{Regularized Training}
\label{sec:reguarlized-training}
In this section, we discuss the possibility that indeed minimizing regularized risk will lead to a tighter upper bound.
Consider the target shift setting. Recall that $
    p(h) := \P_{\D(h)}(Y=+1)
$ and we have for any proper loss function $\ell$:
\begin{align*}
    \E_{\D(h)}[\ell(h;X,Y)]   = p(h) \cdot \E_{\D_S}[\ell(h;X,Y)|Y=+1]  + (1-p(h)) \cdot \E_{\D_S}[\ell(h;X,Y)|Y=-1] 
\end{align*}

Suppose $p < {p(h^*_T)}$, now we claim that minimizing the following regularized/penalized risk leads to a smaller upper bound. 
\[
\E_{\D_S}[\ell(h;X,Y)] + \alpha \cdot \E_{\D_{\text{uniform}}}||\frac{h(X)+1}{2}||
\]
where in above $\D_{\text{uniform}}$ is a distribution with uniform prior for $Y$.

We impose the following assumption:
\begin{itemize}
    \item The number of predicted $+1$ for examples with $Y=+1$ and for examples with $Y=-1$ are monotonic with respect to $\alpha$.
\end{itemize}

   Consider the easier setting with $\ell = \text{0-1 loss}$. Then  
   \begin{align*}
   \E_{\D_{\text{uniform}}}||h(X)||
&=0.5 \cdot (\P_{X|Y=+1}[h(X)=+1] + \P_{X|Y=-1}[h(X)=+1]) - 0.5\\
   &=0.5 \cdot (\E_{X|Y=+1}[\ell(h(X),+1)] -  \E_{X|Y=-1}[\ell(h(X),-1])
\end{align*}
The above regularized risk minimization problem is equivalent to 
\[
(p+0.5\cdot \alpha) \cdot \E_{X|Y=+1}[\ell(h(X),+1)] +(p-0.5\cdot \alpha) \cdot \E_{X|Y=-1}[\ell(h(X),-1]
\]
Recall the upper bound in Theorem \ref{thm:ls:ub}:
\begin{align*}
    \err{\D(h^*_S)}{h^*_S}& - \err{\D(h^*_T)}{h^*_T}
    \leq  \underbrace{|p(h^*_S)-p(h^*_T)|}_{\text{Term 1}}\\
    &+ (1+p) \cdot \underbrace{\left(\dtv(\D_+(h^*_S),\D_+(h^*_T))+\dtv(\D_-(h^*_S),\D_-(h^*_T)\right)}_{\text{Term 2}}.
\end{align*}
With a properly specified $\alpha > 0$, this leads to a distribution with a smaller gap of $|p(\tilde{h}_S)-{p(h^*_T)}|$, where $\tilde{h}_S$ denotes the optimal classifier of the penalized risk minimization - this leads to a smaller Term 1 in the bound of Theorem 5.1. Furthermore, the induced risk minimization problem will correspond to an $\alpha$ s.t. $\alpha^* = \frac{{p(h^*_T)} - p}{0.5}$, and the original $h^*_S$ corresponds to a distribution of $\alpha = 0$. Using the monotonicity assumption, we will establish that the second term in Theorem 5.1 will also smaller when we tune a proper $\alpha$.

{\section{Discussion on the tightness of our theoretical bounds}
\label{sec:discussion-tightness}
\paragraph{General Bounds in \Cref{sec:general-bound}}
For the general bounds reported in \Cref{sec:general-bound}, it is not trivial to fully quantify the tightness without further quantifying the specific quantities of the terms, e.g. the H divergence of the source and the induced distribution, and the average error a classifier have to incur for both distribution. This part of our results adapted from the classical literature in learning from multiple domains \cite{ben-david2010domain}. The tightness of using $\mathcal H$-divergence and other terms seem to be partially validated therein. 

\paragraph{Bounds in \Cref{sec:cs} and \Cref{sec:ls}}
For more specific bounds provided in \Cref{sec:cs} (for covariate shift) and \Cref{sec:ls} (target shift), however, it is relatively easier to argue about the tightness: the proofs there are more transparent and are easier to back out the conditions where the inequalities are relaxed. For example, in Theorem 5.1, the inequalities of our bound are introduced primarily in the following two places: 1) one is using the optimiality of $h^*_S$ on the source distribution. 2) the other is bounding the statistical difference in $h^*_S$ and $h^*_T$’s predictions on the positive and negative examples. Both are saying that if the differences in the two classifiers’ predictions are bounded in a range, then the result in Theorem 5.1 is relatively tight. }


\end{document}